\declaretheorem[name=Theorem,refname={Theorem,Theorems},Refname={Theorem,Theorems}]{theorem}
\declaretheorem[name=Lemma,refname={Lemma,Lemmas},Refname={Lemma,Lemmas},sibling=theorem]{lemma}
\newcommand{\cD}{\mathcal{D}}
\newcommand{\cS}{\mathcal{S}}
\newcommand{\naturalset}{\mathbb{N}}
\newcommand{\realset}{\mathbb{R}}
\newcommand{\E}[1]{\mathbb{E}\left[#1\right]}
\newcommand{\condE}[2]{\mathbb{E}\left[#1 \,\middle|\, #2\right]}
\newcommand{\Erv}[2]{\mathbb{E}_{#1}\left[#2\right]}
\newcommand{\condErv}[3]{\mathbb{E}_{#1}\left[#2 \,\middle|\, #3\right]}
\newcommand{\condvar}[2]{\mathrm{var}\left[#1 \,\middle|\, #2\right]}
\newcommand{\abs}[1]{\left|#1\right|}
\newcommand{\I}[1]{\mathds{1} \! \left\{#1\right\}}
\newcommand{\set}[1]{\left\{#1\right\}}
\DeclareMathOperator*{\argmax}{arg\,max\,}
\mathchardef\mhyphen="2D
\newcommand{\base}{\ensuremath{\color{Green}\tt Base}\xspace}
\newcommand{\dpo}{\ensuremath{\color{Green}\tt DPO}\xspace}
\newcommand{\refit}{\ensuremath{\color{Green}\tt Refit}\xspace}
\newcommand{\stargate}{\ensuremath{\color{Green}\tt STaR\mhyphen GATE}\xspace}
\newcommand{\stargated}{\ensuremath{\color{Green}\tt STaR\mhyphen GATE\mhyphen D}\xspace}
\newcommand{\stepdpo}{\ensuremath{\color{Green}\tt StepDPO}\xspace}
\newcommand{\swift}{\ensuremath{\color{Green}\tt Swift}\xspace}
\title{Offline RL by Reward-Weighted Fine-Tuning for Conversation Optimization}
\author{
  Subhojyoti Mukherjee,
  Viet Dac Lai,
  Raghavendra Addanki,
  Ryan Rossi \\
  Adobe Research \\
  \texttt{subhomuk@adobe.com}
  \And
  Seunghyun Yoon,
  Trung Bui,
  Anup Rao,
  Jayakumar Subramanian,
  Branislav Kveton \\
  Adobe Research
}
\begin{document}

\maketitle

\begin{abstract}
Offline reinforcement learning (RL) is a variant of RL where the policy is learned from a previously collected dataset of trajectories and rewards. In our work, we propose a practical approach to offline RL with large language models (LLMs). We recast the problem as reward-weighted fine-tuning, which can be solved using similar techniques to supervised fine-tuning (SFT). To showcase the value of our approach, we apply it to learning short-horizon question-answering policies of a fixed length, where the agent reasons about potential answers or asks clarifying questions. Our work stands in a stark contrast to state-of-the-art methods in this domain, based on SFT and direct preference optimization, which have additional hyper-parameters and do not directly optimize for rewards. We compare to them empirically, and report major gains in both optimized rewards and language quality.
\end{abstract}

\section{Introduction}
\label{sec:introduction}

\emph{Reinforcement learning (RL)} \citep{sutton98reinforcement} is a machine learning framework for learning to act sequentially in an unknown environment with the goal of maximizing a long-term reward. Because of its generality and broad applicability, RL has been studied extensively and many RL algorithms have been proposed, including temporal-difference learning \citep{sutton88learning}, Q-learning \citep{watkins1992q}, policy gradients \citep{williams92simple}, and actor-critic methods \citep{sutton00policy}. All of these algorithms learn from online interactions with the environment, which is often not possible due to engineering and safety constraints. This motivates the need for \emph{offline reinforcement learning} \citep{lange12batch,zhou17endtoend}. The key idea in offline RL is to collect a dataset of interactions with the environment and learn a policy from it, akin to learning a classifier in supervised learning \citep{levine20offline}. Offline RL is especially suitable for problems where offline interactions are abundant or can be easily simulated. For instance, \emph{question answering (QA)} is an area at the intersection of \emph{natural language processing (NLP)} and information retrieval concerned with building systems for answering natural language questions. Since many QA datasets exist \citep{welbl2017crowdsourcing,OpenBookQA2018,clark2018think,hendrycksmeasuring} and QA can be simulated using pre-trained \emph{large language models (LLMs)} \citep{radford19language,brown20language,bommasani21opportunities}, several recent works on QA focused on learning better QA policies using offline RL.

The main contribution of our work are two novel algorithms for offline RL with LLMs: \refit and \swift. The key idea in \refit is to optimize a lower bound on the online RL objective, which is the sum of the log-probabilities of logged actions weighted by trajectory rewards. This new objective has two main benefits. First, it does not involve ratios of token-level propensity scores, unlike in PPO \citep{schulman17proximal} and GRPO \citep{shao24deepseekmath}. This leads to a stable and practical algorithm. Second, the optimization of the objective can be viewed as \emph{weighted fine-tuning} and solved by a minor modification of \emph{supervised fine-tuning (SFT)}, a standard post-training technique for LLMs. Motivated by GRPO \citep{shao24deepseekmath}, we also propose \swift, which is a variant of \refit where we standardize trajectory rewards using multiple logged trajectories. The standardized rewards lower the variance in policy optimization and may improve learned policies, as we show empirically in \cref{sec:experiments}. We try to justify this theoretically in \cref{sec:swift objective improvements}.

To show the value of \refit and \swift, we apply them to learning multi-turn QA policies, where the agent reasons about potential answers or asks clarifying questions. The closest related works are \citet{andukuri2024star} and \citet{chen2024learning}, which use RL to learn clarifying questions from simulated agent-teacher conversations. \citet{andukuri2024star} choose the most rewarding trajectories and fine-tune on them. \citet{chen2024learning} generate alternative responses for each step of the conversation and then optimize for better responses using DPO \citep{rafailov23direct}. The main limitation of these approaches is that they do not fully utilize the reward signal; they only use it to turn the original problem into an SFT or DPO problem. We directly optimize for rewards using RL. We observe major gains over SFT and DPO in experiments (\cref{sec:experiments}) because an indirect optimization of rewards results in information loss.

We make the following contributions:
\begin{enumerate}
  \item We formulate a generic RL problem that encompasses conversation optimization and does not make strong assumptions on rewards. Our setting captures fixed-horizon conversations and adaptive ones, when the conversation can stop at any time, for instance because enough information to answer the question has been gathered.
  \item We derive an offline RL objective, which is a lower bound on the online RL objective. As a result, the online objective is optimized by maximizing the offline one. The offline objective is equivalent to weighted fine-tuning and thus can be optimized in LLMs using standard SFT training primitives. The weights are over sequences of tokens, unlike individual tokens in prior works \citep{sanyal2025upweighting,gao2025process,zhang2025best,zhao2025improving}. We also avoid propensity score ratios \citep{schulman17proximal,shao24deepseekmath}.
  \item We derive an offline RL objective with standardized rewards. The standardized rewards lower the variance in policy optimization and may improve learned policies. We show this empirically in \cref{sec:experiments}.
  \item We comprehensively evaluate our approach on multi-turn QA problems over datasets spanning open book exams, textual information for science topics, conversational text-to-SQL, and mathematical dialogue and problem solving. Although we optimize a single reward, we observe improvements in all other metrics, such as reasoning ability, pedagogical value, and confidence. We consider five baselines: two variants of SFT, two variants of DPO, and the original policy. We observe major gains over SFT and DPO because we directly optimize for rewards using RL.
  \item For each QA benchmark, we generate a dataset of $500$ multi-turn conversations. We give instructions for reproducing the datasets, including all prompts and example conversations, in \cref{app:reasoning details,app:clarification details}.
\end{enumerate}

The paper is organized as follows. We present our setting in \cref{sec:setting}. In \cref{sec:algorithms}, we formulate our offline RL objectives and show how to optimize them using weighted fine-tuning. We report our results in \cref{sec:experiments} and discuss related work in \cref{sec:related work}. We conclude in \cref{sec:conclusions}.

\section{Setting}
\label{sec:setting}

We start with introducing our notation. We denote the marginal and conditional probabilities under the probability measure $p$ by $p(X = x)$ and $p(X = x \mid Y = y)$, respectively; and write $p(x)$ and $p(x \mid y)$ when the random variables are clear from context. The indicator function is $\I{\cdot}$. For a positive integer $n$, we define $[n] = \set{1, \dots, n}$. The $i$-th entry of vector $v$ is $v_i$. If the vector is already indexed, such as $v_j$, we write $v_{j, i}$.

We view the problem of learning multi-turn conversation policies as a generic reinforcement learning problem \citep{sutton98reinforcement} where an \emph{agent} interacts with an \emph{environment}. The agent takes actions conditioned on the conversation history and the environment responds. When the conversation ends, it is given a reward. The reward measures the quality of the conversation and the agent maximizes it.

We formalize the problem as follows. The agent first observes \emph{context} $x \in \cS$, where $\cS$ is the space of all strings, each represented as a sequence of tokens. The context defines the task. The conversation between the agent and environment consists of steps indexed by $t \in \naturalset$, where $\naturalset$ is a set of positive integers. In step $t$, the agent takes an \emph{action} $a_t \in \cS$ and the environment responds with an \emph{observation} $y_t \in \cS$. The conversation is a \emph{trajectory} $\tau_n = (a_1, y_1, \dots, a_n, y_n)$ of $n$ actions and observations. The number of steps $n$ can be fixed or random. When $n$ is random, the conversation end can be any function of the conversation history. The \emph{reward} is a non-negative function of $x$ and $\tau_n$, denoted by $r(x, \tau_n) \geq 0$, that measures conversation quality. We do not assume that it factors over individual steps, as is common in RL. This is to maintain generality and because our algorithms (\cref{sec:algorithms}) do not need it.

The agent follows a policy conditioned on the conversation history. The probability that action $a$ is taken in context $x$ and history $\tau_{t - 1}$ is denoted by $\pi(a \mid x, \tau_{t - 1}; \theta)$ and parameterized by $\theta \in \Theta$. We call $\theta$ a \emph{policy} and $\Theta$ the space of policy parameters. The probability of observing $y_t$ conditioned on conversation history $\tau_{t - 1}$ and action $a_t$ is denoted by $p(y_t \mid x, \tau_{t - 1}, a_t)$. We slightly abuse our notation and denote the probability of trajectory $\tau_n$ in context $x$ under policy $\theta$ by
\begin{align}
  \pi(\tau_n \mid x; \theta)
  = \prod_{t = 1}^n p(y_t \mid x, \tau_{t - 1}, a_t) \,
  \pi(a_t \mid x, \tau_{t - 1}; \theta)\,.
  \label{eq:trajectory probability}
\end{align}
The factorization follows from the chain rule of probability. The expected value of policy $\theta$, where $q$ is a distribution over contexts $x$, is
\begin{align}
  V(\theta)
  = \Erv{x \sim q, \, \tau_n \sim \pi(\cdot \mid x; \theta)}{r(x, \tau_n)}\,.
  \label{eq:online objective}
\end{align}
Our goal is to learn a policy that maximizes it, $\theta_* = \argmax_{\theta \in \Theta} V(\theta)$.

Our framework is sufficiently general to model multiple use cases. For instance, suppose that we want to maximize the pedagogical value of a conversation over $n$ steps \citep{scarlatos2025training}. Then $r(x, \tau_n)$ would be the aggregated pedagogical value of $\tau_n$ over $n$ steps. As another example, suppose that we want to learn to clarify an ambiguous question $x$ by asking $n$ questions \citep{andukuri2024star,chen2024learning}. Then $r(x, \tau_n)$ would be the quality of the generated answer conditioned on $x$ and $\tau_n$. Finally, suppose that the number of clarifying questions is adaptively chosen by the agent, after enough information has been gathered \citep{kobalczyk2025active}. Then $r(x, \tau_n)$ would be the quality of the generated answer discounted by $\gamma^n$ for $\gamma \in (0, 1)$. The discounting prevents the agent from asking clarifying questions indefinitely since the reward diminishes with the number of steps $n$. In this case, the number of clarifying questions $n$ is random and decided by the agent.

\section{Algorithms}
\label{sec:algorithms}

Our objective is to maximize the expected policy value $V(\theta)$ in \eqref{eq:online objective}. This can be done in a myriad of ways \citep{sutton98reinforcement}. The most natural approach for complex policies, like those represented by LLMs, are policy gradients \citep{williams92simple}. The key idea in \emph{policy gradients} is to update the policy $\theta$ iteratively by gradient ascent. The gradient of $V(\theta)$ at $\theta$ is
\begin{align*}
  \nabla V(\theta)
  = \Erv{x \sim q, \, \tau_n \sim \pi(\cdot \mid x; \theta)}
  {r(x, \tau_n) \nabla \log \pi(\tau_n \mid x; \theta)}
\end{align*}
and can be derived by a direct application of the score identity \citep{aleksandrov68stochastic}. The computation of this gradient is challenging in real-world problems for two reasons. First, since the trajectories $\tau_n$ are sampled under the optimized policy $\theta$, they need to be resampled when $\theta$ changes, in each step of gradient ascent. Second, a reward model $r(x, \tau_n)$ is needed to evaluate any potentially sampled trajectory.

To address these challenges, we resort to \emph{offline reinforcement learning} \citep{lange12batch,zhou17endtoend,levine20offline}. The key idea in offline RL is to collect a dataset of trajectories and their rewards once, and then learn a policy from it, akin to learning a classifier in supervised learning. We denote the \emph{data logging policy} by $\pi_0$ and the probability of generating a trajectory $\tau_n$ in context $x$ using policy $\pi_0$ by $\pi_0(\tau_n \mid x)$. A classic result in control \citep{dudik14doubly} and statistics \citep{horwitz52generalization} is that propensity scores,
\begin{align}
  V(\theta)
  = \Erv{x \sim q, \, \tau_n \sim \pi(\cdot \mid x; \theta)}{r(x, \tau_n)}
  = \Erv{x \sim q, \, \tau_n \sim \pi_0(\cdot \mid x)}
  {\frac{\pi(\tau_n \mid x; \theta)}{\pi_0(\tau_n \mid x)} r(x, \tau_n)}\,,
  \label{eq:propensity scores}
\end{align}
can correct for selection bias in the logged dataset. Simply put, the optimization of \eqref{eq:online objective} is equivalent to maximizing propensity-weighted rewards on a dataset of trajectories collected by another policy $\pi_0$. One challenge with optimizing \eqref{eq:propensity scores} is that the variance of the empirical estimate of \eqref{eq:propensity scores} can be high when $\pi_0(\tau_n \mid x)$ is small. This can be addressed by clipping \citep{ionides08truncated} at a token level, which is the key idea in PPO \citep{schulman17proximal} and GRPO \citep{shao24deepseekmath}. We discuss differences from these methods in more detail at the end of \cref{sec:refit}. Another challenge is that the data logging policy $\pi_0$ is often unknown. Now we present our approach of reward-weighted fine-tuning for offline RL.

\subsection{Reward-Weighted Fine-Tuning}
\label{sec:refit}

The key idea in our work is to maximize a lower bound on \eqref{eq:online objective}. While this bound is tight only when $\pi(\cdot \mid \cdot; \theta) \equiv \pi_0$, it leads to a practical offline RL algorithm that can be implemented using weighted fine-tuning \textit{without introducing propensity score ratios}. We build on the lower bound in \citet{ma19imitationregularized} and \citet{liang2022local}, and extend it to offline RL.

\begin{lemma}
\label{lem:refit lower bound} For any policies $\pi$ and $\pi_0$, and any non-negative reward function,
\begin{align*}
  \Erv{x \sim q, \, \tau_n \sim \pi(\cdot \mid x; \theta)}{r(x, \tau_n)}
  \geq \Erv{x \sim q, \, \tau_n \sim \pi_0(\cdot \mid x)}
  {r(x, \tau_n) \log \pi(\tau_n \mid x; \theta)} + C_1\,,
\end{align*}
where $C_1 = \Erv{x \sim q, \, \tau_n \sim \pi_0(\cdot \mid x)}{r(x, \tau_n) (1 - \log \pi_0(\tau_n \mid x))} \geq 0$ is a constant independent of $\theta$.
\end{lemma}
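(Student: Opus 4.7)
The plan is to combine two standard ingredients: the importance sampling identity \eqref{eq:propensity scores} and the elementary inequality $z \geq 1 + \log z$ for all $z > 0$ (with equality iff $z = 1$). This inequality is the natural source of a lower bound that trades the propensity ratio for an additive log-policy term, which is precisely what converts off-policy optimization into weighted SFT.

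First, I would apply the identity in \eqref{eq:propensity scores} to rewrite the left-hand side as an expectation under $\pi_0$,
\begin{align*}
  \Erv{x \sim q, \, \tau_n \sim \pi(\cdot \mid x; \theta)}{r(x, \tau_n)}
  = \Erv{x \sim q, \, \tau_n \sim \pi_0(\cdot \mid x)}
  {\frac{\pi(\tau_n \mid x; \theta)}{\pi_0(\tau_n \mid x)}\, r(x, \tau_n)}\,,
\end{align*}
assuming absolute continuity on the support of $\pi_0$ (and adopting the convention $0/0 = 0$, which is harmless since $r \geq 0$ and the integrand vanishes where $\pi_0 = 0$).

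Next, I would apply the inequality $z \geq 1 + \log z$ pointwise with $z = \pi(\tau_n \mid x; \theta)/\pi_0(\tau_n \mid x)$. Since $r(x, \tau_n) \geq 0$, multiplying preserves the inequality, and taking the expectation under $x \sim q,\, \tau_n \sim \pi_0(\cdot \mid x)$ yields
\begin{align*}
  \Erv{\pi_0}{\frac{\pi(\tau_n \mid x; \theta)}{\pi_0(\tau_n \mid x)} r(x, \tau_n)}
  \geq \Erv{\pi_0}{r(x, \tau_n) \bigl(1 + \log \pi(\tau_n \mid x; \theta) - \log \pi_0(\tau_n \mid x)\bigr)}\,.
\end{align*}
Splitting the right-hand side into the $\theta$-dependent term $\Erv{\pi_0}{r(x, \tau_n) \log \pi(\tau_n \mid x; \theta)}$ and the $\theta$-independent remainder $\Erv{\pi_0}{r(x, \tau_n)(1 - \log \pi_0(\tau_n \mid x))}$ gives exactly the bound in the statement with $C_1$ as defined.

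Finally, I would verify $C_1 \geq 0$. Because $\pi_0(\tau_n \mid x) \in [0, 1]$, we have $\log \pi_0(\tau_n \mid x) \leq 0$, hence $1 - \log \pi_0(\tau_n \mid x) \geq 1 \geq 0$; combined with $r(x, \tau_n) \geq 0$, the integrand defining $C_1$ is non-negative and so is its expectation. There is no real obstacle here; the only subtlety worth flagging is the support/absolute-continuity assumption in the importance sampling step, which is standard in this line of work and already implicit in \eqref{eq:propensity scores}.
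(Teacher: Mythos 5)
Your proposal is correct and follows essentially the same route as the paper: rewrite the on-policy value via the importance-sampling identity \eqref{eq:propensity scores}, apply $u \geq 1 + \log u$ to the propensity ratio, use non-negativity of the reward to preserve the inequality under the expectation, and split off the $\theta$-independent constant $C_1$. Your added remarks on absolute continuity and the explicit check that $C_1 \geq 0$ (via $\log \pi_0 \leq 0$) are sound details the paper leaves implicit, but they do not change the argument.
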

\begin{proof}
Using basic algebra,
\begin{align*}
  \Erv{x \sim q, \, \tau_n \sim \pi(\cdot \mid x; \theta)}{r(x, \tau_n)}
  & = \Erv{x \sim q, \, \tau_n \sim \pi_0(\cdot \mid x)}
  {r(x, \tau_n) \frac{\pi(\tau_n \mid x; \theta)}{\pi_0(\tau_n \mid x)}} \\
  & \geq \Erv{x \sim q, \, \tau_n \sim \pi_0(\cdot \mid x)}
  {r(x, \tau_n) \left(1 +
  \log \frac{\pi(\tau_n \mid x; \theta)}{\pi_0(\tau_n \mid x)}\right)} \\
  & = \Erv{x \sim q, \, \tau_n \sim \pi_0(\cdot \mid x)}
  {r(x, \tau_n) \log \pi(\tau_n \mid x; \theta)} + C_1\,.
\end{align*}
The inequality follows from $u \geq 1 + \log u$ and non-negative rewards.
\end{proof}

The bound is loose in practice because we apply $u \geq 1 + \log u$ for a potentially large $u$. The result of \cref{lem:refit lower bound} is that
\begin{align}
  J(\theta)
  = \Erv{x \sim q, \, \tau_n \sim \pi_0(\cdot \mid x)}
  {r(x, \tau_n) \log \pi(\tau_n \mid x; \theta)}
  \label{eq:offline objective}
\end{align}
is a lower bound on \eqref{eq:online objective}. Since the lower bound is tight when $\pi(\cdot \mid \cdot; \theta) \equiv \pi_0$, a policy that improves \eqref{eq:offline objective} also improves \eqref{eq:online objective}. Next we show that \eqref{eq:offline objective} is equivalent to reward-weighted fine-tuning. To see this, we plug the definition of the trajectory probability \eqref{eq:trajectory probability} into \eqref{eq:offline objective} and get
\begin{align}
  J(\theta)
  = \Erv{x \sim q, \, \tau_n \sim \pi_0(\cdot \mid x)}
  {r(x, \tau_n) \sum_{t = 1}^n \log \pi(a_t \mid x, \tau_{t - 1}; \theta)} + C\,,
  \label{eq:sft objective}
\end{align}
where $C = \Erv{x \sim q, \, \tau_n \sim \pi_0(\cdot \mid x)}{r(x, \tau_n) \sum_{t = 1}^n \log p(y_t \mid x, \tau_{t - 1}, a_t)}$ represents the log-probabilities of observations weighted by trajectory rewards. Because the observation probabilities do not depend on $\theta$ (\cref{sec:setting}) and neither does $\tau_n \sim \pi_0(\cdot \mid x)$, $C$ is a constant independent of $\theta$. As a result, the maximization of \eqref{eq:sft objective} is equivalent to maximizing $n$ log-probabilities of actions $a_t \mid x, \tau_{t - 1}$ weighted by trajectory reward $r(x, \tau_n)$. Therefore, we maximize the likelihood of trajectories proportionally to their rewards, by equally attributing the reward to each action in the trajectory. Our objective can also be viewed as weighted fine-tuning with $n$ terms. The terms are correlated because they belong to the same trajectory and are weighted by the same reward.

\textbf{PPO, GRPO, and Q-SFT.} We compare \eqref{eq:sft objective} to other RL objectives in LLMs next. Let $a_{t, i}$ be the $i$-th token in action $a_t$ and $a_{t, < i}$ be the first $i - 1$ tokens in action $a_t$. Then the objective of PPO \citep{schulman17proximal} in our problem can be written as
\begin{align}
  \Erv{x \sim q, \, \tau_n \sim \pi_0(\cdot \mid x)}
  {\sum_{t = 1}^n \sum_i \min \{P_{t, i} \, A_{t, i}, \,
  \mathrm{clip}(P_{t, i}, 1 - \epsilon, 1 + \epsilon) \, A_{t, i}\}}\,,
  \label{eq:ppo}
\end{align}
where $P_{t, i} = \pi(a_{t, i} \mid x, \tau_{t - 1}, a_{t, < i}; \theta) / \pi_0(a_{t, i} \mid x, \tau_{t - 1}, a_{t, < i})$ is the ratio of token-level propensity scores for the $i$-th token in action $a_t$, $A_{t, i}$ is the corresponding advantage, and $\mathrm{clip}$ clips propensity scores to $[1 - \epsilon, 1 + \epsilon]$ for some $\epsilon \in [0, 1]$. Our objective is different in two aspects. First, \eqref{eq:sft objective} does not involve token-level propensity score ratios, which can be large and cause numerical instability. In PPO, this is typically mitigated by tuning $\epsilon$. Second, the computation of the advantage $A_{t, i}$ requires a token-level reward model \citep{schulman16highdimensional}. GRPO \citep{shao24deepseekmath} can be viewed as PPO where $A_{t, i}$ in \eqref{eq:ppo} is estimated using standardized rewards obtained by simulation. So the main difference of \eqref{eq:sft objective} from GRPO is that it does not involve token-level propensity score ratios. Finally, Q-SFT of \citet{hong25qsft} optimizes $\sum_{t = 1}^n \sum_i Q_{t, i} \log \pi(a_{t, i} \mid x, \tau_{t - 1}, a_{t, < i}; \theta)$, where $Q_{t, i}$ is the Q-function for the $i$-th token in action $a_t$ that depends on its reward, the ratio of propensity scores for the next token, and maximization over it. To summarize, our objective does not involve token-level propensity score ratios, which can be large and cause numerical instability.

\textbf{\stargate and \stepdpo.} Now we compare \eqref{eq:sft objective} to related works in conversation optimization using RL. These algorithms are the state of the art in our domain and we compare to them empirically in \cref{sec:experiments}. \citet{andukuri2024star} apply SFT to most rewarding trajectories, which can be viewed as replacing $r(x, \tau_n)$ in \eqref{eq:sft objective} with an indicator that the trajectory has a high reward. \citet{chen2024learning} learn to take the best action in each step by minimizing the DPO loss, which can be viewed as replacing each term in \eqref{eq:sft objective} with the negative DPO loss. We observe major empirical gains over both of these works because they do not fully utilize the reward signal; they only use it to turn the original problem into a corresponding SFT or DPO problem.

\subsection{Algorithm \refit}
\label{sec:algorithm refit}

Our algorithm is an iterative optimization of \eqref{eq:sft objective}. We call it \underline{re}ward-weighted \underline{fi}ne-\underline{t}uning (\refit) and give its pseudo-code in \cref{alg:refit}. The input to \refit is a dataset $\cD = \set{(x, \tau_n, r)}$ collected by a data logging policy $\pi_0$. The dataset is generated as follows. First, we sample context $x \sim q$. Second, we sample trajectory $\tau_n \sim \pi_0(\cdot \mid x)$ and get its reward $r(x, \tau_n)$. Finally, we add $(x, \tau_n, r(x, \tau_n))$ to the dataset and repeat this process until $\cD$ is generated.

The policy $\theta$ is optimized by gradient ascent. The gradient of $J(\theta)$ at $\theta$ is
\begin{align}
  \nabla J(\theta)
  = \Erv{x \sim q, \, \tau_n \sim \pi_0(\cdot \mid x)}
  {r(x, \tau_n) \sum_{t = 1}^n \nabla \log \pi(a_t \mid x, \tau_{t - 1}; \theta)}\,.
  \label{eq:refit gradient}
\end{align}

\begin{wrapfigure}{R}{0.5\textwidth}
\begin{minipage}{0.49\textwidth}
\vspace{-0.35in}
\begin{algorithm}[H]
  \caption{\refit \ / \swift}
  \label{alg:refit}
  \begin{algorithmic}[1]
    \State \textbf{Input:} Learning rate schedule $(\alpha_i)_{i \in \naturalset}$
    \State Generate a logged dataset $\cD = \set{(x, \tau_n, r)}$, where $r \in \realset$ is a reward of $\tau_n$ (\refit) or a standardized reward of $\tau_n$ (\swift)
    \State Initialize $\theta$ and $i \gets 1$
    \ForAll{$(x, \tau_n, r) \in \cD$}
      \State $g_i \gets r \sum_{t = 1}^n \nabla \log \pi(a_t \mid x, \tau_{t - 1}; \theta)$
      \State $\theta \gets \theta + \alpha_i g_i$ and $i \gets i + 1$
    \EndFor
    \State \textbf{Output:} Learned policy $\theta$
  \end{algorithmic}
\end{algorithm}
\end{minipage}
\end{wrapfigure}

The optimization is iterative. In iteration $i$, we approximate $\nabla J(\theta)$ by the gradient $g_i$ on a single trajectory $(x, \tau_n, r) \in \cD$. Since the trajectories are generated i.i.d., $g_i$ is an unbiased estimate of \eqref{eq:refit gradient}. After $g_i$ is computed, we update the policy as $\theta + \alpha_i g_i$, where $\alpha_i > 0$ is a learning rate. The optimization ends after a single pass over the dataset but more passes are possible. Since $g_i$ is algebraically equivalent to the gradient on $n$ SFT data points weighted by the same reward, we implement \refit by modifying SFT in TRL \citep{vonwerra22trl}.

Finally, note that in expectation, an update by gradient $r \sum_{t = 1}^n \nabla \log \pi(a_t \mid x, \tau_{t - 1}; \theta)$ is equivalent to fine-tuning on $a_t \mid x, \tau_{t - 1}$ for $\lfloor r\rfloor$ times with probability $\lceil r\rceil - r$ and for $\lceil r\rceil$ times otherwise. As a result, \refit can be trivially implemented in closed models through an SFT dataset, where each $a_t \mid x, \tau_{t - 1}$ appears either $\lfloor r\rfloor$ or $\lceil r\rceil$ times, after randomized rounding.

\subsection{Standardized Reward-Weighted Fine-Tuning}
\label{sec:swift}

One challenge with \eqref{eq:refit gradient} is that the empirical variance of the estimator can be high. As an example, suppose that the rewards are in $[9, 10]$. Then the gradient would be scaled by $10$ instead of $1$, when we subtract $9$ from all rewards. This motivated many prior works on variance reduction in policy gradients \citep{sutton00policy,baxter01infinitehorizon,munos06geometric}. This also motivates our work on optimizing standardized rewards. We start by showing that the optimization of standardized rewards is equivalent to optimizing \eqref{eq:online objective} under certain assumptions.

\begin{lemma}
\label{lem:standardized online objective} Let $\mu(x) \geq 0$ and $\sigma(x) > 0$ be any non-negative functions of context $x$. Let $\tilde{r}(x, \tau_n) = (r(x, \tau_n) - \mu(x)) / \sigma(x)$ be the \emph{standardized reward}. Suppose that there exists $\theta_*$ that maximizes all $\condErv{\tau_n \sim \pi(\cdot \mid x; \theta)}{r(x, \tau_n)}{x}$ jointly. Then it also maximizes
\begin{align}
  \Erv{x \sim q, \, \tau_n \sim \pi(\cdot \mid x; \theta)}{\tilde{r}(x, \tau_n)}\,.
  \label{eq:standardized online objective}
\end{align}
\end{lemma}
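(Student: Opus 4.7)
The plan is to expand the standardized expectation by the tower property and then reduce the question of maximizing it to the pointwise maximization already assumed of $\theta_*$.

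First I would write
\begin{align*}
  \Erv{x \sim q, \, \tau_n \sim \pi(\cdot \mid x; \theta)}{\tilde{r}(x, \tau_n)}
  & = \Erv{x \sim q}{\condErv{\tau_n \sim \pi(\cdot \mid x; \theta)}{\frac{r(x, \tau_n) - \mu(x)}{\sigma(x)}}{x}} \\
  & = \Erv{x \sim q}{\frac{1}{\sigma(x)}\condErv{\tau_n \sim \pi(\cdot \mid x; \theta)}{r(x, \tau_n)}{x} - \frac{\mu(x)}{\sigma(x)}}\,,
\end{align*}
where the second equality uses that $\mu(x)$ and $\sigma(x)$ are functions of $x$ only and hence constants with respect to the inner expectation over $\tau_n$.

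Next I would observe that the additive term $\mu(x)/\sigma(x)$ depends only on $x$ and is therefore independent of $\theta$, so it can be discarded when maximizing over $\theta$. What remains is the $q$-expectation of $(1/\sigma(x)) \, \condErv{\tau_n \sim \pi(\cdot \mid x; \theta)}{r(x, \tau_n)}{x}$, weighted pointwise by the strictly positive factor $1/\sigma(x)$. Any policy that maximizes the inner conditional expectation for every $x$ simultaneously must therefore maximize the integrand for every $x$, and hence also its $q$-expectation.

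Finally I would invoke the hypothesis: $\theta_*$ is assumed to jointly maximize $\condErv{\tau_n \sim \pi(\cdot \mid x; \theta)}{r(x, \tau_n)}{x}$ over all $x$, so in particular it maximizes the weighted integrand above for every $x$ in the support of $q$, which yields the claim.

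I do not expect any real obstacle here; the only subtlety is the hypothesis that a single $\theta_*$ jointly maximizes the family of conditional expectations, which is precisely what lets us avoid the usual conflict between pointwise and averaged optima and pass from the $x$-wise inequality to the integrated inequality. The positivity of $\sigma(x)$ is what guarantees that the rescaling preserves the direction of the inequality.
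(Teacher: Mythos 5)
Your proposal is correct and follows essentially the same route as the paper: decompose the standardized expectation via the tower property, discard the $\theta$-independent term $\Erv{x \sim q}{\mu(x)/\sigma(x)}$, and use positivity of $1/\sigma(x)$ together with the joint-maximization hypothesis to conclude that $\theta_*$ maximizes the weighted integrand pointwise and hence its $q$-expectation. No gaps.
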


The proof is in \cref{sec:standardized online objective proof}. The key assumption in \cref{lem:standardized online objective}, that there exists $\theta_*$ that maximizes all $\condErv{\tau_n \sim \pi(\cdot \mid x; \theta)}{r(x, \tau_n)}{x}$ jointly, is expected to be satisfied or near-satisfied when the policy class is rich, such as when represented by an LLM. This is because the policy is conditioned on $x$.

In the rest of this section, we derive an offline variant of \eqref{eq:standardized online objective} with similar desirable properties to \eqref{eq:offline objective} in \cref{sec:refit}. The challenge with applying the same reasoning is that the standardized rewards $\tilde{r}(x, \tau_n)$ can be negative. The error of our approximation is characterized below.

\begin{lemma}
\label{lem:swift bound} For any policies $\pi$ and $\pi_0$, and any rewards in $[-b, b]$,
\begin{align*}
  \left|\Erv{x \sim q, \, \tau_n \sim \pi(\cdot \mid x; \theta)}
  {\tilde{r}(x, \tau_n)} -
  \Erv{x \sim q, \, \tau_n \sim \pi_0(\cdot \mid x)}
  {\tilde{r}(x, \tau_n) \log \pi(\tau_n \mid x; \theta)}\right|
  \leq |C_1| + C_2\,,
\end{align*}
where $C_1$ is a constant independent of $\theta$ defined in \cref{lem:refit lower bound} and
\begin{align*}
  C_2
  = b \max_{\theta \in \Theta, \, x, \, \tau_n}
  \left(\frac{\pi(\tau_n \mid x; \theta)}{\pi_0(\tau_n \mid x)} - \left(1 +
  \log \frac{\pi(\tau_n \mid x; \theta)}{\pi_0(\tau_n \mid x)}\right)\right)\,.
\end{align*}
\end{lemma}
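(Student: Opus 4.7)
The plan is to reduce the claim to a single offline expectation via the propensity score identity \eqref{eq:propensity scores}, algebraically split the resulting integrand into a ratio-dependent error term and a logging-policy term, and then bound each piece using $|\tilde{r}| \leq b$. Concretely, writing the online term on the left as
\begin{align*}
  \Erv{x \sim q, \, \tau_n \sim \pi(\cdot \mid x; \theta)}{\tilde{r}(x, \tau_n)}
  = \Erv{x \sim q, \, \tau_n \sim \pi_0(\cdot \mid x)}
  {\tilde{r}(x, \tau_n) \frac{\pi(\tau_n \mid x; \theta)}{\pi_0(\tau_n \mid x)}},
\end{align*}
the quantity to bound becomes the absolute value of
\begin{align*}
  D(\theta) = \Erv{x \sim q, \, \tau_n \sim \pi_0(\cdot \mid x)}
  {\tilde{r}(x, \tau_n) \left(\frac{\pi(\tau_n \mid x; \theta)}{\pi_0(\tau_n \mid x)} - \log \pi(\tau_n \mid x; \theta)\right)}.
\end{align*}

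The key algebraic step is the identity
\begin{align*}
  \frac{\pi(\tau_n \mid x; \theta)}{\pi_0(\tau_n \mid x)} - \log \pi(\tau_n \mid x; \theta)
  = \left(\frac{\pi(\tau_n \mid x; \theta)}{\pi_0(\tau_n \mid x)} - 1 - \log \frac{\pi(\tau_n \mid x; \theta)}{\pi_0(\tau_n \mid x)}\right) + \left(1 - \log \pi_0(\tau_n \mid x)\right),
\end{align*}
which decomposes $D(\theta) = D_1(\theta) + D_0$. The second summand $D_0$ is $\theta$-independent and is the analogue of the constant $C_1$ from \cref{lem:refit lower bound} with $\tilde{r}$ in place of $r$, so it is controlled by $|C_1|$. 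For the first summand, I would use the pointwise non-negativity $u - 1 - \log u \geq 0$ together with $|\tilde{r}| \leq b$: bounding pointwise inside the expectation yields
\begin{align*}
  |D_1(\theta)| \leq b \max_{\theta \in \Theta, \, x, \, \tau_n} \left(\frac{\pi(\tau_n \mid x; \theta)}{\pi_0(\tau_n \mid x)} - 1 - \log \frac{\pi(\tau_n \mid x; \theta)}{\pi_0(\tau_n \mid x)}\right) = C_2.
\end{align*}
Applying the triangle inequality to $D_1(\theta) + D_0$ then gives the claimed bound.

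The main obstacle is the loss of sign relative to \cref{lem:refit lower bound}: there, non-negativity of $r$ allowed a direct integration of the one-sided inequality $u \geq 1 + \log u$ to yield a clean lower bound and a non-negative $C_1$. Here $\tilde{r}$ can take either sign, so the one-sided inequality cannot be integrated directly; one must instead work with its symmetrized version via $|\tilde{r}| \leq b$. This is why $C_2$ involves a worst-case maximum over $(\theta, x, \tau_n)$ rather than an expectation, and why $|C_1|$ rather than $C_1$ appears in the bound.
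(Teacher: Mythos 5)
Your proposal is correct and follows essentially the same route as the paper: apply the propensity-score identity, add and subtract $1 + \log\bigl(\pi(\tau_n \mid x; \theta)/\pi_0(\tau_n \mid x)\bigr)$ to isolate a $\theta$-independent constant and a remainder $\Delta(\theta)$, then bound $|\Delta(\theta)|$ pointwise using $|\tilde{r}| \leq b$ and the non-negativity of $u - 1 - \log u$ before applying the triangle inequality. Your explicit remark that the constant is really the analogue of $C_1$ with $\tilde{r}$ in place of $r$ is a point the paper's own proof glosses over, so no gap here.
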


The proof is in \cref{sec:swift bound proof}. \cref{lem:swift bound} says that the difference between the online objective in \eqref{eq:standardized online objective} and its offline counterpart
\begin{align}
  J(\theta)
  = \Erv{x \sim q, \, \tau_n \sim \pi_0(\cdot \mid x)}
  {\tilde{r}(x, \tau_n) \log \pi(\tau_n \mid x; \theta)}
  \label{eq:standardized offline objective}
\end{align}
is $O(|C_1| + C_2)$. While $C_2$ can be large, as it depends on the ratios of propensity scores, it is on the same order as the gap in \cref{lem:refit lower bound}. This is because the key step in the proof of \cref{lem:refit lower bound} is that we apply $u \geq 1 + \log u$ for $u = \pi(\tau_n \mid x; \theta) / \pi_0(\tau_n \mid x)$. The main difference from \cref{lem:refit lower bound} is that we do not get a proper lower bound. Using the same reasoning as in \cref{sec:refit}, the maximization of \eqref{eq:standardized offline objective} is equivalent to fine-tuning on $n$ actions $a_t \mid x, \tau_{t - 1}$ weighted by the standardized trajectory reward $\tilde{r}(x, \tau_n)$. The terms are correlated because they belong to the same trajectory and are weighted by the same reward.

We implement the optimization of \eqref{eq:standardized offline objective} using \cref{alg:refit}. The only difference is that the rewards are standardized and thus we call this method \underline{s}tandardized reward-\underline{w}e\underline{i}ghted \underline{f}ine-\underline{t}uning (\swift). The logged dataset $\cD = \set{(x, \tau_n, \tilde{r})}$ is generated as follows. First, we sample $x$. Second, we sample $m$ trajectories $\tau_{n, i} \sim \pi_0(\cdot \mid x)$ for $i \in [m]$ and compute their rewards $r(x, \tau_{n, i})$. Third, we estimate the mean reward $\mu(x)$ and the standard deviation of rewards $\sigma(x)$ as
\begin{align*}
  \hat{\mu}(x)
  = \frac{1}{m} \sum_{i = 1}^m r(x, \tau_{n, i})\,, \quad
  \hat{\sigma}(x)
  = \sqrt{\frac{1}{m - 1} \sum_{i = 1}^m (r(x, \tau_{n, i}) - \hat{\mu}(x))^2}\,,
\end{align*}
respectively. Finally, we standardize all rewards as $\tilde{r}(x, \tau_{n, i}) = (r(x, \tau_{n, i}) - \hat{\mu}(x)) / \hat{\sigma}(x)$ and add all $(x, \tau_{n, i}, \tilde{r}(x, \tau_{n, i}))$ to the dataset. This process is repeated until $\cD$ is generated. Note that the cost of the standardization, computing $\hat{\mu}(x)$ and $\hat{\sigma}(x)$, is $O(m n)$. So it is of the same order as sampling $m$ trajectories of length $n$ and thus negligible.

\section{Experiments}
\label{sec:experiments}

We evaluate our methods on $6$ datasets. OpenBookQA \citep{OpenBookQA2018}, ARC \citep{clark2018think}, SciQA \citep{welbl2017crowdsourcing}, and MMLU \citep{hendrycksmeasuring} are standard QA benchmarks. We convert a text-to-SQL conversation dataset CoSQL \citep{yu2019cosql} and math tutoring dataset MathDial \citep{macina2023mathdial} into QA-style conversational datasets. Our datasets cover a variety of domains and are described in more detail in \cref{app:dataset}.

We generate $500$ tasks for each dataset and report the average performance over the tasks per dataset. Each task is a conversation of length $n = 3$ between an agent represented by an \emph{assistant} and the environment represented by a \emph{teacher}. We experiment with two kinds of problems. In \emph{reasoning experiments}, the teacher asks the assistant to solve the problem in step $1$, encourages it to think deeper in step $2$, and asks for a final answer in step $3$. The prompts and conversation examples are reported in \cref{app:reasoning details}. In \emph{clarifying-questions experiments}, the assistant is also encouraged to ask questions and the teacher answers them. The prompts and conversation examples are reported in \cref{app:clarification details}. We experiment with both \emph{thinking and standard modes}. The difference in the \emph{thinking mode} is that the assistant reasons within <thinking> tags before responding. The assistant is implemented using Llama-3.1-8B-Instruct. In reasoning experiments, the teacher is scripted. In clarifying-questions experiments, the teacher is implemented using a combination of scripting and Llama-3.1-8B-Instruct. The model and training parameters are reported in \Cref{app:model-training-param}. We solve each task $3$ times with different temperatures. The three runs are used for reward standardization in \swift and to implement our baselines \citep{andukuri2024star,chen2024learning}.

We report multiple metrics. Our \emph{most fundamental} measure of performance is \emph{Accuracy}, which is the proportion of questions whose answers match the correct (gold standard) answer. We report the percentage of times that the model outputs <thinking> tags as \emph{Thinking}. This shows how well the model follows reasoning instructions. We also report six conversation \emph{reward metrics} computed by a GPT-4o judge (\cref{app:reasoning details}): \textbf{1.} \emph{Overall}: A summary of the following five scores. \textbf{2.} \emph{Accuracy}: Did the assistant select the correct answer? \textbf{3.} \emph{Reasoning Ability}: Was the reasoning logical, clear, and precise? \textbf{4.} \emph{Comprehensiveness}: Were alternative options addressed? \textbf{5.} \emph{Pedagogical Value}: Would this explanation help someone to learn? \textbf{6.} \emph{Confidence Calibration}: Was the assistant's confidence in giving the final answer appropriate? These metrics are reported with a prefix \say{R} in our tables. The reward in all RL algorithms is the overall reward rescaled to $[0, 1]$.

We consider five baselines. The first baseline is the original policy, and we call it \base. We expect to outperform \base due to learning. All other baselines are offline RL algorithms. To have a fair comparison, we use the same dataset of logged trajectories in all of them. The only difference is in how the dataset is used. \stargate \citep{andukuri2024star} learns policies by supervised fine-tuning on most rewarding trajectories. This is akin to reward signal thresholding, into the trajectories used for learning and not. We improve this baseline by distillation, as done in \citet{andukuri2024star}, and call it \stargated. The fourth baseline is motivated by \citet{chen2024learning}. The key idea in \citet{chen2024learning} is to generate a new trajectory in each step of the original trajectories, and then determine winning and losing actions in that step based on the corresponding trajectory reward. After this, DPO is used to learn the winning actions. We call this baseline \stepdpo. The main limitations of \stargate and \stepdpo are that they do not fully utilize the reward signal; they only use it to turn the original problem into an SFT or DPO problem. We directly optimize for rewards using RL. The last baseline is \dpo, where the final winning and losing responses are used to directly answer the original question. This baseline shows what is possible without a conversation. We implement \refit and \swift as described in \cref{sec:algorithms}. We expect \swift to outperform \refit because reward-based learning tends to be sensitive to the scale of rewards \citep{sutton00policy,baxter01infinitehorizon,munos06geometric}.

\textbf{Reasoning experiments.} We report our results on all six datasets in Tables \ref{tab:model-comparison-ARC-with_thinking}-\ref{tab:model-comparison-MathDial-without_thinking}, in both thinking and standard modes. The best result is highlighted in \textbf{bold} and the second best result is \underline{underlined}. The confidence intervals are standard errors of the estimates. The training times of all RL methods are comparable because they optimize the same LLM agent on similar datasets.

We observe the following trends. First, in terms of accuracy, \swift wins in $7$ experiments out of $12$ and is among the best two methods in $10$ experiments out of $12$. Although \swift maximizes the overall reward, it performs extremely well in all $5$ reward metrics. In particular, most of its reward metrics are among the top two in $9$ experiments out of $12$. \refit performs significantly worse than \swift in $3$ experiments: thinking OpenBookQA, standard MMLU, and standard CoSQL. Overall though, it is among the best two methods in $9$ experiments out of $12$. The gap from \refit is smaller than expected because SFT in TRL \citep{vonwerra22trl} is implemented with adaptive optimizers \citep{kingma15adam}, which adapt to the scale of the gradient and may partially mitigate poorly scaled rewards.

The best two baselines are \stargate and \stargated. This shows the robustness of RL through SFT, the key idea in \citet{andukuri2024star}, which can be further improved by distillation. As discussed earlier, our work can be viewed refining this idea, where we weight the SFT update by the actual reward of the trajectory instead of an indicator of a high reward (\cref{sec:refit}). The advantage of our formulation is that it has no additional hyperparameter that decide what a high reward is, and can be properly related to the original objective (\cref{lem:refit lower bound}) and its standardization (\cref{lem:swift bound}). The worst baseline is \base and this shows the value of learning. We compare \base and \refit conversations in \cref{app:reasoning details,app:clarification details}.

\textbf{Clarifying-questions experiments.} We report our results on OpenBookQA and SciQA datasets in \cref{tab:clarification-reward-dimension-summary-sciqa,tab:clarification-reward-dimension-summary-openbookqa}. In both experiments, the accuracies of \refit and \swift are higher than those of the baselines. Although \refit and \swift do not attain the highest conversation reward metrics, they are comparable to the best baselines. Comparing to the reasoning experiments, the accuracies of answers drop significantly. This shows that the value of reasoning in our benchmarks is higher than that of asking clarifying questions.

\textbf{Ablation studies.} In \cref{app:ablation studies}, we ablate the conversation length $n$ and logged dataset size. In addition, to alleviate the concern that our evaluation is biased due to using a single GPT-4o judge, we report results with a Claude 4 Opus judge.



\begin{table}[t!]
\centering
\caption{Model Performance Comparison - Thinking Mode (ARC)}
\label{tab:model-comparison-ARC-with_thinking}
\resizebox{\textwidth}{!}{%
\begin{tabular}{lcc|cccccc}
\toprule
Model & Accuracy & Thinking (\%) & R Overall & R Accuracy & R Reasoning & R Comprehensive & R Pedagogic & R Confidence \\
\midrule
\swift (ours) & \textbf{0.7993 $\pm$ 0.0236} & \textbf{97.9 $\pm$ 0.0} & \textbf{7.19 $\pm$ 0.14} & \textbf{8.12 $\pm$ 0.17} & \textbf{7.46 $\pm$ 0.12} & \textbf{6.60 $\pm$ 0.11} & \textbf{6.95 $\pm$ 0.13} & \textbf{7.75 $\pm$ 0.17} \\
\refit (ours) & \underline{0.7889 $\pm$ 0.0240} & \textbf{97.9 $\pm$ 0.0} & \underline{7.12 $\pm$ 0.14} & \underline{8.03 $\pm$ 0.17} & \underline{7.37 $\pm$ 0.13} & \underline{6.56 $\pm$ 0.11} & \underline{6.88 $\pm$ 0.14} & \underline{7.66 $\pm$ 0.18} \\
\hline
\dpo & 0.6471 $\pm$ 0.0281 & 8.7 $\pm$ 0.0 & 5.72 $\pm$ 0.18 & 6.84 $\pm$ 0.22 & 6.05 $\pm$ 0.16 & 5.30 $\pm$ 0.15 & 5.21 $\pm$ 0.17 & 6.02 $\pm$ 0.21 \\
\stargate & 0.6990 $\pm$ 0.0270 & \underline{90.0 $\pm$ 0.0} & 6.67 $\pm$ 0.17 & 7.48 $\pm$ 0.20 & 6.94 $\pm$ 0.16 & 6.22 $\pm$ 0.14 & 6.50 $\pm$ 0.16 & 7.11 $\pm$ 0.21 \\
\base & 0.3772 $\pm$ 0.0146 & 75.1 $\pm$ 0.0 & 6.47 $\pm$ 0.12 & 7.32 $\pm$ 0.14 & 6.56 $\pm$ 0.11 & 5.80 $\pm$ 0.09 & 6.40 $\pm$ 0.11 & 6.92 $\pm$ 0.16 \\
\stargated & 0.7578 $\pm$ 0.0252 & 23.9 $\pm$ 0.0 & 5.47 $\pm$ 0.16 & 6.99 $\pm$ 0.20 & 5.65 $\pm$ 0.16 & 4.83 $\pm$ 0.14 & 4.74 $\pm$ 0.16 & 5.95 $\pm$ 0.19 \\
\stepdpo & 0.6401 $\pm$ 0.0282 & 8.0 $\pm$ 0.0 & 5.46 $\pm$ 0.18 & 6.60 $\pm$ 0.22 & 5.76 $\pm$ 0.17 & 5.04 $\pm$ 0.15 & 4.88 $\pm$ 0.17 & 5.83 $\pm$ 0.21 \\
\bottomrule
\end{tabular}%
}
\end{table}



\begin{table}[t!]
\centering
\caption{Model Performance Comparison - Thinking Mode (MMLU)}
\label{tab:model-comparison-MMLU-with_thinking}
\resizebox{\textwidth}{!}{%
\begin{tabular}{lcc|cccccc}
\toprule
Model & Accuracy & Thinking (\%) & R Overall & R Accuracy & R Reasoning & R Comprehensive & R Pedagogic & R Confidence \\
\midrule
\swift (ours) & \underline{0.7032 $\pm$ 0.0367} & \underline{97.4 $\pm$ 0.0} & \underline{5.59 $\pm$ 0.22} & 6.42 $\pm$ 0.26 & 5.94 $\pm$ 0.20 & \underline{5.10 $\pm$ 0.18} & \underline{5.23 $\pm$ 0.20} & \underline{6.14 $\pm$ 0.26} \\
\refit (ours) & \textbf{0.7097 $\pm$ 0.0365} & \textbf{98.1 $\pm$ 0.0} & 5.59 $\pm$ 0.22 & \underline{6.43 $\pm$ 0.26} & \underline{5.94 $\pm$ 0.20} & 5.06 $\pm$ 0.18 & 5.19 $\pm$ 0.20 & 6.11 $\pm$ 0.26 \\
\hline
\dpo & 0.6387 $\pm$ 0.0386 & 7.1 $\pm$ 0.0 & 4.77 $\pm$ 0.23 & 5.71 $\pm$ 0.29 & 5.09 $\pm$ 0.22 & 4.35 $\pm$ 0.20 & 4.24 $\pm$ 0.22 & 5.07 $\pm$ 0.28 \\
\stargate & 0.6000 $\pm$ 0.0393 & 81.3 $\pm$ 0.0 & 5.34 $\pm$ 0.24 & 5.91 $\pm$ 0.29 & 5.70 $\pm$ 0.22 & 4.98 $\pm$ 0.20 & 5.15 $\pm$ 0.22 & 5.63 $\pm$ 0.29 \\
\base & 0.2774 $\pm$ 0.0127 & 53.5 $\pm$ 0.0 & \textbf{5.87 $\pm$ 0.16} & \textbf{6.57 $\pm$ 0.20} & \textbf{6.03 $\pm$ 0.15} & \textbf{5.19 $\pm$ 0.14} & \textbf{5.97 $\pm$ 0.15} & \textbf{6.19 $\pm$ 0.22} \\
\stargated & 0.5548 $\pm$ 0.0399 & 25.2 $\pm$ 0.0 & 4.23 $\pm$ 0.23 & 4.96 $\pm$ 0.28 & 4.57 $\pm$ 0.22 & 3.93 $\pm$ 0.20 & 3.77 $\pm$ 0.21 & 4.34 $\pm$ 0.27 \\
\stepdpo & 0.6387 $\pm$ 0.0386 & 5.2 $\pm$ 0.0 & 4.94 $\pm$ 0.23 & 5.88 $\pm$ 0.28 & 5.26 $\pm$ 0.21 & 4.50 $\pm$ 0.20 & 4.45 $\pm$ 0.22 & 5.31 $\pm$ 0.28 \\
\bottomrule
\end{tabular}%
}
\end{table}



\begin{table}[t!]
\centering
\caption{Model Performance Comparison - Thinking Mode (OpenBookQA)}
\label{tab:model-comparison-OpenBookQA-with_thinking}
\resizebox{\textwidth}{!}{%
\begin{tabular}{lcc|cccccc}
\toprule
Model & Accuracy & Thinking (\%) & R Overall & R Accuracy & R Reasoning & R Comprehensive & R Pedagogic & R Confidence \\
\midrule
\swift (ours) & \underline{0.6814 $\pm$ 0.0310} & \textbf{96.5 $\pm$ 0.0} & \textbf{6.16 $\pm$ 0.21} & \textbf{6.86 $\pm$ 0.24} & \textbf{6.49 $\pm$ 0.19} & \textbf{5.89 $\pm$ 0.15} & \textbf{5.99 $\pm$ 0.19} & \textbf{6.52 $\pm$ 0.25} \\
\refit (ours) & 0.6504 $\pm$ 0.0317 & \textbf{96.5 $\pm$ 0.0} & 5.84 $\pm$ 0.22 & 6.63 $\pm$ 0.26 & 6.12 $\pm$ 0.21 & 5.58 $\pm$ 0.17 & 5.62 $\pm$ 0.21 & 6.25 $\pm$ 0.26 \\
\hline
\dpo & 0.6195 $\pm$ 0.0323 & 10.6 $\pm$ 0.0 & 5.09 $\pm$ 0.21 & 6.21 $\pm$ 0.27 & 5.35 $\pm$ 0.20 & 4.82 $\pm$ 0.18 & 4.47 $\pm$ 0.20 & 5.55 $\pm$ 0.25 \\
\stargate & 0.6549 $\pm$ 0.0316 & \underline{92.5 $\pm$ 0.0} & \underline{6.01 $\pm$ 0.21} & 6.68 $\pm$ 0.25 & \underline{6.35 $\pm$ 0.20} & \underline{5.80 $\pm$ 0.16} & 5.78 $\pm$ 0.20 & \underline{6.36 $\pm$ 0.26} \\
\base & 0.3628 $\pm$ 0.0175 & 74.3 $\pm$ 0.0 & 5.99 $\pm$ 0.15 & \underline{6.77 $\pm$ 0.19} & 6.15 $\pm$ 0.14 & 5.43 $\pm$ 0.12 & \underline{5.95 $\pm$ 0.14} & 6.31 $\pm$ 0.20 \\
\stargated & \textbf{0.6903 $\pm$ 0.0308} & 20.8 $\pm$ 0.0 & 5.21 $\pm$ 0.19 & 6.64 $\pm$ 0.25 & 5.40 $\pm$ 0.18 & 4.73 $\pm$ 0.16 & 4.35 $\pm$ 0.17 & 5.70 $\pm$ 0.23 \\
\stepdpo & 0.6106 $\pm$ 0.0324 & 11.5 $\pm$ 0.0 & 4.90 $\pm$ 0.21 & 6.14 $\pm$ 0.27 & 5.06 $\pm$ 0.20 & 4.56 $\pm$ 0.18 & 4.29 $\pm$ 0.20 & 5.33 $\pm$ 0.25 \\
\bottomrule
\end{tabular}%
}
\end{table}



\begin{table}[t!]
\centering
\caption{Model Performance Comparison - Thinking Mode (SciQA)}
\label{tab:model-comparison-SciQA-with_thinking}
\resizebox{\textwidth}{!}{%
\begin{tabular}{lcc|cccccc}
\toprule
Model & Accuracy & Thinking (\%) & R Overall & R Accuracy & R Reasoning & R Comprehensive & R Pedagogic & R Confidence \\
\midrule
\swift (ours) & \textbf{0.9248 $\pm$ 0.0175} & \textbf{99.1 $\pm$ 0.0} & \underline{7.61 $\pm$ 0.12} & \underline{8.84 $\pm$ 0.14} & \underline{7.73 $\pm$ 0.11} & \underline{6.76 $\pm$ 0.10} & \underline{7.11 $\pm$ 0.13} & \textbf{8.45 $\pm$ 0.15} \\
\refit (ours) & \underline{0.9159 $\pm$ 0.0185} & \underline{96.0 $\pm$ 0.0} & \textbf{7.64 $\pm$ 0.12} & \textbf{8.87 $\pm$ 0.14} & \textbf{7.76 $\pm$ 0.11} & \textbf{6.81 $\pm$ 0.10} & \textbf{7.13 $\pm$ 0.12} & \underline{8.43 $\pm$ 0.15} \\
\hline
\dpo & 0.7920 $\pm$ 0.0270 & 5.8 $\pm$ 0.0 & 5.96 $\pm$ 0.18 & 7.61 $\pm$ 0.22 & 6.08 $\pm$ 0.18 & 5.29 $\pm$ 0.16 & 5.14 $\pm$ 0.18 & 6.50 $\pm$ 0.22 \\
\stargate & 0.8186 $\pm$ 0.0256 & 90.3 $\pm$ 0.0 & 7.08 $\pm$ 0.18 & 8.17 $\pm$ 0.21 & 7.27 $\pm$ 0.16 & 6.49 $\pm$ 0.14 & 6.69 $\pm$ 0.17 & 7.69 $\pm$ 0.21 \\
\base & 0.4956 $\pm$ 0.0076 & 73.5 $\pm$ 0.0 & 7.00 $\pm$ 0.10 & 8.12 $\pm$ 0.11 & 7.03 $\pm$ 0.10 & 6.11 $\pm$ 0.09 & 6.84 $\pm$ 0.11 & 7.78 $\pm$ 0.13 \\
\stargated & 0.9027 $\pm$ 0.0197 & 21.7 $\pm$ 0.0 & 6.58 $\pm$ 0.16 & 8.19 $\pm$ 0.18 & 6.72 $\pm$ 0.16 & 5.78 $\pm$ 0.14 & 5.73 $\pm$ 0.17 & 7.24 $\pm$ 0.18 \\
\stepdpo & 0.8186 $\pm$ 0.0256 & 7.5 $\pm$ 0.0 & 6.29 $\pm$ 0.18 & 7.87 $\pm$ 0.21 & 6.36 $\pm$ 0.18 & 5.57 $\pm$ 0.16 & 5.42 $\pm$ 0.18 & 6.89 $\pm$ 0.22 \\
\bottomrule
\end{tabular}%
}
\end{table}



\begin{table}[t!]
\centering
\caption{Model Performance Comparison - Thinking Mode (CoSQL)}
\label{tab:model-comparison-CoSQL-with_thinking}
\resizebox{\textwidth}{!}{%
\begin{tabular}{lcc|cccccc}
\toprule
Model & Accuracy & Thinking (\%) & R Overall & R Accuracy & R Reasoning & R Comprehensive & R Pedagogic & R Confidence \\
\midrule
\swift (ours) & \textbf{0.6500 $\pm$ 0.0435} & \underline{96.7 $\pm$ 0.0} & 4.87 $\pm$ 0.21 & 5.56 $\pm$ 0.27 & 5.26 $\pm$ 0.19 & 4.62 $\pm$ 0.15 & 4.23 $\pm$ 0.16 & 5.22 $\pm$ 0.29 \\
\refit (ours) & \textbf{0.6500 $\pm$ 0.0435} & \textbf{99.2 $\pm$ 0.0} & 4.91 $\pm$ 0.21 & 5.52 $\pm$ 0.27 & 5.28 $\pm$ 0.18 & 4.63 $\pm$ 0.15 & 4.22 $\pm$ 0.17 & 5.39 $\pm$ 0.31 \\
\hline
\dpo & 0.5167 $\pm$ 0.0456 & 60.0 $\pm$ 0.0 & 4.34 $\pm$ 0.19 & 4.85 $\pm$ 0.25 & 4.72 $\pm$ 0.17 & 4.27 $\pm$ 0.15 & 4.00 $\pm$ 0.16 & 4.29 $\pm$ 0.28 \\
\stargate & \underline{0.6167 $\pm$ 0.0444} & 90.0 $\pm$ 0.0 & \underline{5.28 $\pm$ 0.24} & \underline{5.78 $\pm$ 0.30} & \underline{5.51 $\pm$ 0.22} & \textbf{5.19 $\pm$ 0.16} & \underline{4.90 $\pm$ 0.20} & \underline{5.54 $\pm$ 0.33} \\
\base & 0.2000 $\pm$ 0.0143 & 65.8 $\pm$ 0.0 & \textbf{5.65 $\pm$ 0.17} & \textbf{6.17 $\pm$ 0.22} & \textbf{5.88 $\pm$ 0.15} & \underline{5.16 $\pm$ 0.13} & \textbf{5.84 $\pm$ 0.15} & \textbf{5.87 $\pm$ 0.27} \\
\stargated & 0.4917 $\pm$ 0.0456 & 57.5 $\pm$ 0.0 & 3.94 $\pm$ 0.17 & 4.49 $\pm$ 0.22 & 4.45 $\pm$ 0.16 & 3.89 $\pm$ 0.14 & 3.58 $\pm$ 0.15 & 3.74 $\pm$ 0.25 \\
\stepdpo & 0.5250 $\pm$ 0.0456 & 60.0 $\pm$ 0.0 & 4.37 $\pm$ 0.20 & 4.82 $\pm$ 0.26 & 4.81 $\pm$ 0.18 & 4.26 $\pm$ 0.15 & 4.08 $\pm$ 0.18 & 4.38 $\pm$ 0.29 \\
\bottomrule
\end{tabular}%
}
\end{table}



\begin{table}[t!]
\centering
\caption{Model Performance Comparison - Thinking Mode (MathDial)}
\label{tab:model-comparison-MathDial-with_thinking}
\resizebox{\textwidth}{!}{%
\begin{tabular}{lcc|cccccc}
\toprule
Model & Accuracy & Thinking (\%) & R Overall & R Accuracy & R Reasoning & R Comprehensive & R Pedagogic & R Confidence \\
\midrule
\swift (ours) & \textbf{0.1933 $\pm$ 0.0228} & \underline{99.3 $\pm$ 0.0} & 1.88 $\pm$ 0.07 & 1.91 $\pm$ 0.07 & 2.42 $\pm$ 0.07 & 2.15 $\pm$ 0.07 & 1.83 $\pm$ 0.07 & \underline{1.61 $\pm$ 0.09} \\
\refit (ours) & 0.0867 $\pm$ 0.0162 & \textbf{100.0 $\pm$ 0.0} & \underline{2.38 $\pm$ 0.07} & \underline{2.33 $\pm$ 0.07} & \underline{3.13 $\pm$ 0.08} & \underline{2.56 $\pm$ 0.08} & \underline{2.43 $\pm$ 0.07} & \textbf{1.63 $\pm$ 0.07} \\
\hline
\dpo & \underline{0.1467 $\pm$ 0.0204} & 25.0 $\pm$ 0.0 & 1.61 $\pm$ 0.05 & 1.63 $\pm$ 0.06 & 2.23 $\pm$ 0.05 & 1.78 $\pm$ 0.07 & 1.56 $\pm$ 0.05 & 1.40 $\pm$ 0.06 \\
\stargate & 0.0467 $\pm$ 0.0122 & \textbf{100.0 $\pm$ 0.0} & \textbf{2.46 $\pm$ 0.06} & \textbf{2.40 $\pm$ 0.07} & \textbf{3.28 $\pm$ 0.07} & \textbf{2.65 $\pm$ 0.07} & \textbf{2.45 $\pm$ 0.07} & 1.53 $\pm$ 0.05 \\
\base & 0.0000 $\pm$ 0.0212 & 87.7 $\pm$ 0.0 & 2.01 $\pm$ 0.06 & 2.28 $\pm$ 0.07 & 2.67 $\pm$ 0.07 & 1.77 $\pm$ 0.05 & 2.20 $\pm$ 0.07 & 1.39 $\pm$ 0.09 \\
\stargated & 0.1167 $\pm$ 0.0185 & 95.0 $\pm$ 0.0 & 1.69 $\pm$ 0.06 & 1.71 $\pm$ 0.06 & 2.30 $\pm$ 0.07 & 1.81 $\pm$ 0.06 & 1.63 $\pm$ 0.06 & 1.35 $\pm$ 0.06 \\
\stepdpo & \underline{0.1467 $\pm$ 0.0204} & 25.7 $\pm$ 0.0 & 1.58 $\pm$ 0.05 & 1.61 $\pm$ 0.06 & 2.21 $\pm$ 0.06 & 1.72 $\pm$ 0.06 & 1.53 $\pm$ 0.05 & 1.40 $\pm$ 0.06 \\
\bottomrule
\end{tabular}%
}
\end{table}



\begin{table}[t!]
\centering
\caption{Model Performance Comparison - Standard Mode (ARC)}
\label{tab:model-comparison-ARC-without_thinking}
\resizebox{\textwidth}{!}{%
\begin{tabular}{lcc|cccccc}
\toprule
Model & Accuracy & Thinking (\%) & R Overall & R Accuracy & R Reasoning & R Comprehensive & R Pedagogic & R Confidence \\
\midrule
\swift (ours) & \underline{0.7778 $\pm$ 0.0289} & 0.0 $\pm$ 0.0 & \underline{7.26 $\pm$ 0.19} & \underline{8.04 $\pm$ 0.22} & \underline{7.51 $\pm$ 0.17} & 6.76 $\pm$ 0.14 & \underline{7.12 $\pm$ 0.18} & \underline{7.82 $\pm$ 0.23} \\
\refit (ours) & 0.7729 $\pm$ 0.0291 & 0.0 $\pm$ 0.0 & 7.23 $\pm$ 0.19 & 7.98 $\pm$ 0.22 & 7.44 $\pm$ 0.18 & \underline{6.80 $\pm$ 0.14} & 7.03 $\pm$ 0.18 & 7.66 $\pm$ 0.23 \\
\hline
\dpo & 0.6377 $\pm$ 0.0334 & 0.0 $\pm$ 0.0 & 5.68 $\pm$ 0.20 & 6.51 $\pm$ 0.25 & 6.06 $\pm$ 0.18 & 5.41 $\pm$ 0.16 & 5.26 $\pm$ 0.19 & 5.78 $\pm$ 0.25 \\
\stargate & \textbf{0.7971 $\pm$ 0.0280} & 0.0 $\pm$ 0.0 & \textbf{7.49 $\pm$ 0.18} & \textbf{8.25 $\pm$ 0.21} & \textbf{7.67 $\pm$ 0.17} & \textbf{6.93 $\pm$ 0.14} & \textbf{7.36 $\pm$ 0.17} & \textbf{8.02 $\pm$ 0.22} \\
\base & 0.5652 $\pm$ 0.0142 & 0.0 $\pm$ 0.0 & 6.87 $\pm$ 0.14 & 7.68 $\pm$ 0.18 & 6.97 $\pm$ 0.13 & 6.25 $\pm$ 0.11 & 6.75 $\pm$ 0.14 & 7.21 $\pm$ 0.20 \\
\stargated & 0.7101 $\pm$ 0.0315 & 0.0 $\pm$ 0.0 & 5.95 $\pm$ 0.18 & 6.96 $\pm$ 0.22 & 6.29 $\pm$ 0.17 & 5.56 $\pm$ 0.14 & 5.42 $\pm$ 0.17 & 6.18 $\pm$ 0.22 \\
\stepdpo & 0.6280 $\pm$ 0.0336 & 0.0 $\pm$ 0.0 & 5.76 $\pm$ 0.20 & 6.55 $\pm$ 0.25 & 6.19 $\pm$ 0.18 & 5.54 $\pm$ 0.15 & 5.43 $\pm$ 0.19 & 5.84 $\pm$ 0.25 \\
\bottomrule
\end{tabular}%
}
\end{table}



\begin{table}[t!]
\centering
\caption{Model Performance Comparison - Standard Mode (MMLU)}
\label{tab:model-comparison-MMLU-without_thinking}
\resizebox{\textwidth}{!}{%
\begin{tabular}{lcc|cccccc}
\toprule
Model & Accuracy & Thinking (\%) & R Overall & R Accuracy & R Reasoning & R Comprehensive & R Pedagogic & R Confidence \\
\midrule
\swift (ours) & \textbf{0.7218 $\pm$ 0.0389} & 0.0 $\pm$ 0.0 & \textbf{6.08 $\pm$ 0.25} & \textbf{6.88 $\pm$ 0.29} & \textbf{6.32 $\pm$ 0.23} & \textbf{5.50 $\pm$ 0.21} & \underline{5.80 $\pm$ 0.23} & \textbf{6.71 $\pm$ 0.30} \\
\refit (ours) & \underline{0.6917 $\pm$ 0.0400} & 0.0 $\pm$ 0.0 & 5.93 $\pm$ 0.26 & 6.72 $\pm$ 0.31 & \underline{6.23 $\pm$ 0.24} & \underline{5.42 $\pm$ 0.21} & 5.56 $\pm$ 0.25 & 6.36 $\pm$ 0.31 \\
\hline
\dpo & 0.5489 $\pm$ 0.0431 & 0.0 $\pm$ 0.0 & 4.86 $\pm$ 0.25 & 5.52 $\pm$ 0.31 & 5.30 $\pm$ 0.23 & 4.61 $\pm$ 0.21 & 4.56 $\pm$ 0.23 & 4.92 $\pm$ 0.30 \\
\stargate & 0.6842 $\pm$ 0.0403 & 0.0 $\pm$ 0.0 & 5.93 $\pm$ 0.26 & 6.68 $\pm$ 0.31 & 6.20 $\pm$ 0.25 & 5.41 $\pm$ 0.22 & 5.59 $\pm$ 0.25 & \underline{6.41 $\pm$ 0.31} \\
\base & 0.3008 $\pm$ 0.0165 & 0.0 $\pm$ 0.0 & \underline{5.97 $\pm$ 0.19} & \underline{6.74 $\pm$ 0.23} & 6.16 $\pm$ 0.18 & 5.32 $\pm$ 0.16 & \textbf{5.95 $\pm$ 0.18} & 6.11 $\pm$ 0.26 \\
\stargated & 0.5940 $\pm$ 0.0426 & 0.0 $\pm$ 0.0 & 4.98 $\pm$ 0.25 & 5.75 $\pm$ 0.30 & 5.29 $\pm$ 0.24 & 4.65 $\pm$ 0.21 & 4.53 $\pm$ 0.24 & 5.26 $\pm$ 0.31 \\
\stepdpo & 0.5263 $\pm$ 0.0433 & 0.0 $\pm$ 0.0 & 4.77 $\pm$ 0.26 & 5.44 $\pm$ 0.32 & 5.17 $\pm$ 0.25 & 4.49 $\pm$ 0.22 & 4.38 $\pm$ 0.23 & 4.94 $\pm$ 0.31 \\
\bottomrule
\end{tabular}%
}
\end{table}



\begin{table}[t!]
\centering
\caption{Model Performance Comparison - Standard Mode (OpenBookQA)}
\label{tab:model-comparison-OpenBookQA-without_thinking}
\resizebox{\textwidth}{!}{%
\begin{tabular}{lcc|cccccc}
\toprule
Model & Accuracy & Thinking (\%) & R Overall & R Accuracy & R Reasoning & R Comprehensive & R Pedagogic & R Confidence \\
\midrule
\swift (ours) & \textbf{0.7662 $\pm$ 0.0299} & 0.0 $\pm$ 0.0 & \textbf{6.85 $\pm$ 0.21} & \textbf{7.73 $\pm$ 0.24} & \textbf{7.09 $\pm$ 0.19} & \textbf{6.42 $\pm$ 0.15} & \textbf{6.59 $\pm$ 0.20} & \textbf{7.45 $\pm$ 0.25} \\
\refit (ours) & \underline{0.7562 $\pm$ 0.0303} & 0.0 $\pm$ 0.0 & \underline{6.73 $\pm$ 0.21} & \underline{7.66 $\pm$ 0.25} & \underline{6.96 $\pm$ 0.20} & \underline{6.29 $\pm$ 0.16} & 6.43 $\pm$ 0.21 & \underline{7.25 $\pm$ 0.25} \\
\hline
\dpo & 0.5025 $\pm$ 0.0353 & 0.0 $\pm$ 0.0 & 4.95 $\pm$ 0.21 & 5.49 $\pm$ 0.26 & 5.43 $\pm$ 0.19 & 5.04 $\pm$ 0.16 & 4.66 $\pm$ 0.20 & 4.95 $\pm$ 0.26 \\
\stargate & 0.7512 $\pm$ 0.0305 & 0.0 $\pm$ 0.0 & 6.69 $\pm$ 0.22 & 7.54 $\pm$ 0.25 & \underline{6.96 $\pm$ 0.20} & 6.27 $\pm$ 0.16 & \underline{6.50 $\pm$ 0.21} & 7.23 $\pm$ 0.26 \\
\base & 0.4328 $\pm$ 0.0180 & 0.0 $\pm$ 0.0 & 6.22 $\pm$ 0.16 & 6.95 $\pm$ 0.21 & 6.37 $\pm$ 0.15 & 5.65 $\pm$ 0.13 & 6.12 $\pm$ 0.15 & 6.51 $\pm$ 0.21 \\
\stargated & 0.7114 $\pm$ 0.0320 & 0.0 $\pm$ 0.0 & 5.84 $\pm$ 0.19 & 6.96 $\pm$ 0.24 & 6.21 $\pm$ 0.18 & 5.52 $\pm$ 0.15 & 5.24 $\pm$ 0.18 & 6.21 $\pm$ 0.23 \\
\stepdpo & 0.5174 $\pm$ 0.0352 & 0.0 $\pm$ 0.0 & 4.92 $\pm$ 0.22 & 5.54 $\pm$ 0.27 & 5.36 $\pm$ 0.20 & 4.97 $\pm$ 0.17 & 4.65 $\pm$ 0.20 & 4.97 $\pm$ 0.27 \\
\bottomrule
\end{tabular}%
}
\end{table}



\begin{table}[t!]
\centering
\caption{Model Performance Comparison - Standard Mode (SciQA)}
\label{tab:model-comparison-SciQA-without_thinking}
\resizebox{\textwidth}{!}{%
\begin{tabular}{lcc|cccccc}
\toprule
Model & Accuracy & Thinking (\%) & R Overall & R Accuracy & R Reasoning & R Comprehensive & R Pedagogic & R Confidence \\
\midrule
\swift (ours) & \textbf{0.9502 $\pm$ 0.0153} & 0.0 $\pm$ 0.0 & \textbf{8.04 $\pm$ 0.12} & \textbf{9.13 $\pm$ 0.13} & \textbf{8.12 $\pm$ 0.11} & \underline{7.17 $\pm$ 0.10} & \textbf{7.71 $\pm$ 0.13} & \textbf{8.88 $\pm$ 0.15} \\
\refit (ours) & \underline{0.9453 $\pm$ 0.0160} & 0.0 $\pm$ 0.0 & \underline{8.04 $\pm$ 0.12} & \underline{9.08 $\pm$ 0.14} & \underline{8.11 $\pm$ 0.11} & \textbf{7.20 $\pm$ 0.10} & \underline{7.69 $\pm$ 0.13} & \underline{8.87 $\pm$ 0.15} \\
\hline
\dpo & 0.7612 $\pm$ 0.0301 & 0.0 $\pm$ 0.0 & 6.41 $\pm$ 0.19 & 7.44 $\pm$ 0.23 & 6.72 $\pm$ 0.17 & 6.00 $\pm$ 0.15 & 6.02 $\pm$ 0.19 & 6.78 $\pm$ 0.23 \\
\stargate & 0.9005 $\pm$ 0.0211 & 0.0 $\pm$ 0.0 & 7.85 $\pm$ 0.16 & 8.88 $\pm$ 0.18 & 7.98 $\pm$ 0.14 & 7.06 $\pm$ 0.13 & 7.52 $\pm$ 0.16 & 8.62 $\pm$ 0.19 \\
\base & 0.6517 $\pm$ 0.0086 & 0.0 $\pm$ 0.0 & 7.48 $\pm$ 0.10 & 8.56 $\pm$ 0.11 & 7.52 $\pm$ 0.10 & 6.55 $\pm$ 0.09 & 7.34 $\pm$ 0.10 & 8.10 $\pm$ 0.13 \\
\stargated & 0.9005 $\pm$ 0.0211 & 0.0 $\pm$ 0.0 & 6.90 $\pm$ 0.15 & 8.39 $\pm$ 0.17 & 7.13 $\pm$ 0.14 & 6.20 $\pm$ 0.13 & 6.03 $\pm$ 0.16 & 7.42 $\pm$ 0.18 \\
\stepdpo & 0.7463 $\pm$ 0.0307 & 0.0 $\pm$ 0.0 & 6.23 $\pm$ 0.20 & 7.25 $\pm$ 0.24 & 6.52 $\pm$ 0.19 & 5.88 $\pm$ 0.16 & 5.78 $\pm$ 0.19 & 6.53 $\pm$ 0.25 \\
\bottomrule
\end{tabular}%
}
\end{table}



\begin{table}[t!]
\centering
\caption{Model Performance Comparison - Standard Mode (CoSQL)}
\label{tab:model-comparison-CoSQL-without_thinking}
\resizebox{\textwidth}{!}{%
\begin{tabular}{lcc|cccccc}
\toprule
Model & Accuracy & Thinking (\%) & R Overall & R Accuracy & R Reasoning & R Comprehensive & R Pedagogic & R Confidence \\
\midrule
\swift (ours) & \textbf{0.6583 $\pm$ 0.0433} & 0.0 $\pm$ 0.0 & \textbf{5.45 $\pm$ 0.24} & \textbf{5.97 $\pm$ 0.30} & \textbf{5.72 $\pm$ 0.22} & \textbf{5.28 $\pm$ 0.17} & 4.95 $\pm$ 0.21 & \textbf{5.85 $\pm$ 0.33} \\
\refit (ours) & \underline{0.6250 $\pm$ 0.0442} & 0.0 $\pm$ 0.0 & 5.16 $\pm$ 0.24 & 5.64 $\pm$ 0.30 & 5.47 $\pm$ 0.22 & 4.99 $\pm$ 0.18 & 4.72 $\pm$ 0.20 & 5.52 $\pm$ 0.32 \\
\hline
\dpo & 0.2833 $\pm$ 0.0411 & 0.0 $\pm$ 0.0 & 4.19 $\pm$ 0.20 & 4.37 $\pm$ 0.25 & 4.79 $\pm$ 0.19 & 4.49 $\pm$ 0.15 & 4.13 $\pm$ 0.17 & 3.69 $\pm$ 0.27 \\
\stargate & 0.6083 $\pm$ 0.0446 & 0.0 $\pm$ 0.0 & 5.34 $\pm$ 0.25 & 5.81 $\pm$ 0.31 & 5.65 $\pm$ 0.23 & \underline{5.26 $\pm$ 0.18} & \underline{4.99 $\pm$ 0.22} & \underline{5.57 $\pm$ 0.34} \\
\base & 0.1250 $\pm$ 0.0117 & 0.0 $\pm$ 0.0 & \underline{5.38 $\pm$ 0.16} & \underline{5.88 $\pm$ 0.22} & \underline{5.66 $\pm$ 0.15} & 4.92 $\pm$ 0.12 & \textbf{5.49 $\pm$ 0.14} & 5.13 $\pm$ 0.24 \\
\stargated & 0.2083 $\pm$ 0.0371 & 0.0 $\pm$ 0.0 & 3.62 $\pm$ 0.19 & 3.82 $\pm$ 0.23 & 4.25 $\pm$ 0.18 & 4.02 $\pm$ 0.16 & 3.73 $\pm$ 0.17 & 3.03 $\pm$ 0.26 \\
\stepdpo & 0.2917 $\pm$ 0.0415 & 0.0 $\pm$ 0.0 & 4.21 $\pm$ 0.20 & 4.45 $\pm$ 0.26 & 4.85 $\pm$ 0.18 & 4.50 $\pm$ 0.15 & 4.10 $\pm$ 0.17 & 3.73 $\pm$ 0.28 \\
\bottomrule
\end{tabular}%
}
\end{table}



\begin{table}[t!]
\centering
\caption{Model Performance Comparison - Standard Mode (MathDial)}
\label{tab:model-comparison-MathDial-without_thinking}
\resizebox{\textwidth}{!}{%
\begin{tabular}{lcc|cccccc}
\toprule
Model & Accuracy & Thinking (\%) & R Overall & R Accuracy & R Reasoning & R Comprehensive & R Pedagogic & R Confidence \\
\midrule
\swift (ours) & 0.0967 $\pm$ 0.0171 & 0.0 $\pm$ 0.0 & \underline{2.43 $\pm$ 0.07} & \underline{2.41 $\pm$ 0.07} & \underline{3.09 $\pm$ 0.08} & \textbf{2.68 $\pm$ 0.07} & \textbf{2.45 $\pm$ 0.07} & \textbf{1.66 $\pm$ 0.07} \\
\refit (ours) & 0.0600 $\pm$ 0.0137 & 0.0 $\pm$ 0.0 & \textbf{2.43 $\pm$ 0.07} & \textbf{2.50 $\pm$ 0.08} & \textbf{3.15 $\pm$ 0.08} & \underline{2.68 $\pm$ 0.07} & \underline{2.41 $\pm$ 0.08} & \underline{1.63 $\pm$ 0.07} \\
\hline
\dpo & \textbf{0.2100 $\pm$ 0.0235} & 0.0 $\pm$ 0.0 & 1.85 $\pm$ 0.06 & 1.90 $\pm$ 0.07 & 2.41 $\pm$ 0.06 & 2.00 $\pm$ 0.07 & 1.77 $\pm$ 0.06 & 1.58 $\pm$ 0.07 \\
\stargate & 0.1067 $\pm$ 0.0178 & 0.0 $\pm$ 0.0 & 2.29 $\pm$ 0.07 & 2.25 $\pm$ 0.07 & 2.94 $\pm$ 0.08 & 2.58 $\pm$ 0.08 & 2.30 $\pm$ 0.07 & 1.56 $\pm$ 0.07 \\
\base & 0.0000 $\pm$ 0.0168 & 0.0 $\pm$ 0.0 & 1.90 $\pm$ 0.06 & 2.20 $\pm$ 0.07 & 2.54 $\pm$ 0.07 & 1.81 $\pm$ 0.05 & 2.01 $\pm$ 0.07 & 1.20 $\pm$ 0.08 \\
\stargated & 0.2000 $\pm$ 0.0231 & 0.0 $\pm$ 0.0 & 1.55 $\pm$ 0.04 & 1.63 $\pm$ 0.05 & 1.95 $\pm$ 0.05 & 1.79 $\pm$ 0.06 & 1.51 $\pm$ 0.04 & 1.31 $\pm$ 0.05 \\
\stepdpo & \underline{0.2067 $\pm$ 0.0234} & 0.0 $\pm$ 0.0 & 1.86 $\pm$ 0.06 & 1.87 $\pm$ 0.06 & 2.43 $\pm$ 0.06 & 2.03 $\pm$ 0.07 & 1.78 $\pm$ 0.06 & 1.55 $\pm$ 0.06 \\
\bottomrule
\end{tabular}%
}
\end{table}


\begin{table}[t!]
\centering
\caption{Model Performance Comparison - Thinking Mode (OpenBookQA Clarifying Questions)}
\label{tab:clarification-reward-dimension-summary-openbookqa}
\resizebox{\textwidth}{!}{%
\begin{tabular}{lcc|cccccc}
\toprule
Model & Accuracy & Thinking (\%) & R Overall & R Accuracy & R Reasoning & R Comprehensive & R Pedagogic & R Confidence \\
\midrule
\swift (ours) & \underline{0.2400 $\pm$ 0.0604} & \underline{46.0 $\pm$ 7.0} & \underline{4.67 $\pm$ 0.21} & \underline{5.70 $\pm$ 0.30} & \underline{5.39 $\pm$ 0.22} & \underline{4.41 $\pm$ 0.19} & \underline{5.13 $\pm$ 0.21} & \underline{5.23 $\pm$ 0.19} \\
\refit (ours) & \textbf{0.2800 $\pm$ 0.0635} & 26.0 $\pm$ 6.2 & 4.55 $\pm$ 0.21 & 5.55 $\pm$ 0.32 & 5.32 $\pm$ 0.23 & 4.37 $\pm$ 0.19 & 4.95 $\pm$ 0.22 & 4.94 $\pm$ 0.19 \\\hline
\base & 0.1000 $\pm$ 0.0424 & \textbf{47.0 $\pm$ 4.9} & \textbf{4.89 $\pm$ 0.24} & \textbf{5.93 $\pm$ 0.31} & \textbf{5.63 $\pm$ 0.22} & \textbf{4.60 $\pm$ 0.20} & \textbf{5.37 $\pm$ 0.23} & \textbf{5.35 $\pm$ 0.25} \\
\dpo & 0.1000 $\pm$ 0.0424 & 30.0 $\pm$ 6.5 & 4.27 $\pm$ 0.24 & 5.29 $\pm$ 0.32 & 5.05 $\pm$ 0.25 & 4.04 $\pm$ 0.22 & 4.75 $\pm$ 0.24 & 4.87 $\pm$ 0.21 \\
\stargated & 0.1000 $\pm$ 0.0424 & 4.0 $\pm$ 2.8 & 4.11 $\pm$ 0.21 & 5.13 $\pm$ 0.30 & 4.85 $\pm$ 0.22 & 3.81 $\pm$ 0.20 & 4.42 $\pm$ 0.21 & 4.59 $\pm$ 0.20 \\
\stepdpo & 0.2000 $\pm$ 0.0566 & 26.0 $\pm$ 6.2 & 4.27 $\pm$ 0.26 & 5.31 $\pm$ 0.35 & 5.01 $\pm$ 0.28 & 4.07 $\pm$ 0.24 & 4.67 $\pm$ 0.26 & 4.73 $\pm$ 0.25 \\
\bottomrule
\end{tabular}%
}
\end{table}


\begin{table}[t!]
\centering
\caption{Model Performance Comparison - Thinking Mode (SciQA Clarifying Questions)}
\label{tab:clarification-reward-dimension-summary-sciqa}
\resizebox{\textwidth}{!}{%
\begin{tabular}{lcc|cccccc}
\toprule
Model & Accuracy & Thinking (\%) & R Overall & R Accuracy & R Reasoning & R Comprehensive & R Pedagogic & R Confidence \\
\midrule
\swift (ours) & \textbf{0.2600 $\pm$ 0.0620} & 62.0 $\pm$ 6.9 & 4.63 $\pm$ 0.24 & 5.77 $\pm$ 0.33 & 5.39 $\pm$ 0.24 & 4.44 $\pm$ 0.22 & 5.07 $\pm$ 0.23 & 4.96 $\pm$ 0.24 \\
\refit (ours) & \underline{0.2400 $\pm$ 0.0604} & \underline{68.0 $\pm$ 6.6} & \underline{5.03 $\pm$ 0.25} & \underline{6.21 $\pm$ 0.31} & \underline{5.81 $\pm$ 0.22} & \underline{4.75 $\pm$ 0.20} & \underline{5.42 $\pm$ 0.23} & \underline{5.33 $\pm$ 0.22} \\
\base & 0.0600 $\pm$ 0.0336 & \textbf{86.0 $\pm$ 4.9} & \textbf{5.47 $\pm$ 0.23} & \textbf{6.69 $\pm$ 0.28} & \textbf{6.14 $\pm$ 0.21} & \textbf{5.10 $\pm$ 0.20} & \textbf{5.95 $\pm$ 0.21} & \textbf{5.87 $\pm$ 0.21} \\
\dpo & 0.0800 $\pm$ 0.0384 & 24.0 $\pm$ 6.0 & 4.44 $\pm$ 0.25 & 5.53 $\pm$ 0.33 & 5.17 $\pm$ 0.26 & 4.19 $\pm$ 0.22 & 4.86 $\pm$ 0.26 & 4.86 $\pm$ 0.26 \\
\stargated & 0.1400 $\pm$ 0.0491 & 16.0 $\pm$ 5.2 & 4.01 $\pm$ 0.22 & 4.95 $\pm$ 0.30 & 4.76 $\pm$ 0.24 & 3.67 $\pm$ 0.21 & 4.30 $\pm$ 0.23 & 4.65 $\pm$ 0.21 \\
\stepdpo & 0.0800 $\pm$ 0.0384 & 34.0 $\pm$ 6.7 & 4.70 $\pm$ 0.23 & 5.85 $\pm$ 0.32 & 5.36 $\pm$ 0.25 & 4.44 $\pm$ 0.21 & 5.09 $\pm$ 0.23 & 5.13 $\pm$ 0.24 \\
\bottomrule
\end{tabular}%
}
\end{table}

\section{Related Work}
\label{sec:related work}

We briefly review related work in three paragraphs: classic RL, RL with large language models, and supervised learning. A more detailed review is in \cref{sec:additional related work}.

\textbf{Classic RL.} Conversation optimization using offline RL \citep{jaques20humancentric} is a classic topic and Section 6.6 of \citet{levine20offline} reviews it in detail. \citet{zhou17endtoend} proposed online and offline policy gradients for improving language quality. Neither this approach nor other classic techniques, like Q-learning \cite{watkins1992q,mnih2015human}, can be directly applied to LLMs. \citet{peters07reinforcement} formulated RL as reward-weighted regression and proposed an EM algorithm for solving it, where an auxiliary reweighting distribution is optimized together with the policy. In contrast, \refit and \swift are policy gradient algorithms that do not require any auxiliary distribution. \citet{peng20advantage} proposed maximizing the log-probability of actions weighted by an exponentiated advantage. \refit and \swift can also be viewed as behavioral cloning \citep{pomerleau92thesis,hussein17imitation} where the rewards and advantages, respectively, weigh the logged trajectories by their importance for learning.

\textbf{RL with LLMs.} The closest related works are \citet{andukuri2024star} and \citet{chen2024learning}, both of which used RL to learn clarifying questions from simulated conversations. \citet{andukuri2024star} fine-tuned on most rewarding trajectories. \citet{chen2024learning} generated alternative responses for each step of the conversation and then optimized for better responses using DPO \citep{rafailov23direct}. The main difference in our work is that we directly optimize for rewards. Our work is also broadly related to LLM planning: \citet{huang22language} planned with pre-trained models, \citet{hao23reasoning} used Monte Carlo tree search to search for policies, and \citet{wang23describe} re-planned interactively based on reached sub-goals.

\textbf{Supervised learning.} Many works have focused on clarifying user prompts by asking clarifying questions \citep{liu2023we,zelikman2024star}. \citet{zelikman2024star} proposed a simple yet powerful approach: learning from rationales for successes and corrected failures. The problem of whether to ask a clarifying question has been studied extensively \citep{lu2025zero,butala2024promise,Li2024LearningTAM}, giving rise to new benchmarks \citep{butala2024promise,zhang2024askbeforeplanproactivelanguageagents} and surveys \citep{Mulla2023AutomaticQGC,Zhang2024TheIOU}. These studies have also been extended to vision-language models \citep{hahn2024proactive,villegas2025imagechain,chen2024data}. In comparison, we take an RL approach.

\section{Conclusions}
\label{sec:conclusions}

Offline RL is a variant of reinforcement learning where the policy is optimized over a previously collected dataset of trajectories and rewards. In our work, we propose a practical approach to offline RL with large language models. The key idea is to recast RL as reward-weighted fine-tuning, which can be implemented using similar techniques to SFT. We also propose an algorithm for standardized rewards, which can be more statistically efficient in practice. To show the value of our approach, we apply it to learning multi-turn QA policies, where the agent reasons about potential answers or asks clarifying questions. Our work stands in a stark contrast to state-of-the-art methods in this domain, based on SFT and DPO, which have additional hyper-parameters and do not directly optimize for rewards. We compare to these works empirically, and report major gains in both optimized rewards and language quality.

\textbf{Limitations.} The computational cost of RL tends to be much higher than that of supervised learning. We address this issue partially by proposing a reduction of offline RL to SFT, which is a supervised learning technique. In addition, the quality of the logged dataset is critical for offline RL. We do not focus on this aspect of the problem and instead rely on a common method to obtain a diverse dataset: simulate conversation trajectories using different temperatures in the LLM. Finally, similarly to the closest related works \citep{andukuri2024star,chen2024learning}, we do not conduct a human evaluation. To alleviate the concern that our evaluation is biased due to using a single GPT-4o judge, we report results with a Claude 4 Opus judge in \cref{app:ablation studies}.

\textbf{Future work.} We note that our proposed algorithms \refit and \swift are general, and therefore can be applied to other domains than QA. We focused on QA due to many established benchmarks and baselines in this domain, which allow us to showcase the benefit of directly optimizing rewards.

\bibliographystyle{plainnat}
\bibliography{bib/anup,bib/brano,bib/david,bib/ryan}

\clearpage

\section*{NeurIPS Paper Checklist}

\begin{enumerate}

\item {\bf Claims}
    \item[] Question: Do the main claims made in the abstract and introduction accurately reflect the paper's contributions and scope?
    \item[] Answer: \answerYes{} 
    \item[] Justification: Our offline RL algorithms are developed in \cref{sec:algorithms} and we evaluate them on six benchmarks in \cref{sec:experiments}.

\item {\bf Limitations}
    \item[] Question: Does the paper discuss the limitations of the work performed by the authors?
    \item[] Answer: \answerYes{} 
    \item[] Justification: The limitations are the computational cost of RL and that we do not try to address the quality of the logged dataset. We discuss them in \cref{sec:conclusions}.

\item {\bf Theory assumptions and proofs}
    \item[] Question: For each theoretical result, does the paper provide the full set of assumptions and a complete (and correct) proof?
    \item[] Answer: \answerYes{} 
    \item[] Justification: The setting is clearly defined in \cref{sec:setting}. All theoretical claims are stated as lemmas in \cref{sec:algorithms} and proved.

\item {\bf Experimental result reproducibility}
    \item[] Question: Does the paper fully disclose all the information needed to reproduce the main experimental results of the paper to the extent that it affects the main claims and/or conclusions of the paper (regardless of whether the code and data are provided or not)?
    \item[] Answer: \answerYes{} 
    \item[] Justification: We give a high-level overview of the experimental setup in \cref{sec:experiments} and provide details in Appendix.

\item {\bf Open access to data and code}
    \item[] Question: Does the paper provide open access to the data and code, with sufficient instructions to faithfully reproduce the main experimental results, as described in supplemental material?
    \item[] Answer: \answerNo{} 
    \item[] Justification: We did not get an approval to release the code. Despite this, \refit and \swift are trivial to implement. We provide extensive details in Appendix to reproduce our results.

\item {\bf Experimental setting/details}
    \item[] Question: Does the paper specify all the training and test details (e.g., data splits, hyperparameters, how they were chosen, type of optimizer, etc.) necessary to understand the results?
    \item[] Answer: \answerYes{} 
    \item[] Justification: We give a high-level overview of the experimental setup in \cref{sec:experiments} and provide details in Appendix.

\item {\bf Experiment statistical significance}
    \item[] Question: Does the paper report error bars suitably and correctly defined or other appropriate information about the statistical significance of the experiments?
    \item[] Answer: \answerYes{} 
    \item[] Justification: All metrics are reported with standard errors estimated from $500$ runs.

\item {\bf Experiments compute resources}
    \item[] Question: For each experiment, does the paper provide sufficient information on the computer resources (type of compute workers, memory, time of execution) needed to reproduce the experiments?
    \item[] Answer: \answerYes{} 
    \item[] Justification: We report compute resources in Appendix.
    
\item {\bf Code of ethics}
    \item[] Question: Does the research conducted in the paper conform, in every respect, with the NeurIPS Code of Ethics \url{https://neurips.cc/public/EthicsGuidelines}?
    \item[] Answer: \answerYes{} 
    \item[] Justification: This work did not involve human labor and we used only public datasets.

\item {\bf Broader impacts}
    \item[] Question: Does the paper discuss both potential positive societal impacts and negative societal impacts of the work performed?
    \item[] Answer: \answerNA{} 
    \item[] Justification: The topic of this paper are simple and practical offline RL algorithms. There is no specific societal impact of our work beyond improvements in RL in general.

\item {\bf Safeguards}
    \item[] Question: Does the paper describe safeguards that have been put in place for responsible release of data or models that have a high risk for misuse (e.g., pretrained language models, image generators, or scraped datasets)?
    \item[] Answer: \answerNA{} 
    \item[] Justification: No new data or models are released in this paper.

\item {\bf Licenses for existing assets}
    \item[] Question: Are the creators or original owners of assets (e.g., code, data, models), used in the paper, properly credited and are the license and terms of use explicitly mentioned and properly respected?
    \item[] Answer: \answerYes{} 
    \item[] Justification: We properly cite all datasets and use them within the bounds of their licenses.

\item {\bf New assets}
    \item[] Question: Are new assets introduced in the paper well documented and is the documentation provided alongside the assets?
    \item[] Answer: \answerNA{} 
    \item[] Justification: No new data or models are released in this paper.

\item {\bf Crowdsourcing and research with human subjects}
    \item[] Question: For crowdsourcing experiments and research with human subjects, does the paper include the full text of instructions given to participants and screenshots, if applicable, as well as details about compensation (if any)? 
    \item[] Answer: \answerNA{} 
    \item[] Justification: This paper does not involve crowdsourcing experiments or human subjects.

\item {\bf Institutional review board (IRB) approvals or equivalent for research with human subjects}
    \item[] Question: Does the paper describe potential risks incurred by study participants, whether such risks were disclosed to the subjects, and whether Institutional Review Board (IRB) approvals (or an equivalent approval/review based on the requirements of your country or institution) were obtained?
    \item[] Answer: \answerNA{} 
    \item[] Justification: This paper does not involve crowdsourcing experiments or human subjects.

\item {\bf Declaration of LLM usage}
    \item[] Question: Does the paper describe the usage of LLMs if it is an important, original, or non-standard component of the core methods in this research? Note that if the LLM is used only for writing, editing, or formatting purposes and does not impact the core methodology, scientific rigorousness, or originality of the research, declaration is not required.
    \item[] Answer: \answerYes{} 
    \item[] Justification: Our work is motivated by RL with LLMs and we also experiment with them.

\end{enumerate}

\clearpage
\onecolumn
\appendix

\section{Proofs and Supporting Lemmas}
\label{sec:proofs}

This section contains proofs of our main claims and supporting lemmas.

\subsection{Proof of \cref{lem:standardized online objective}}
\label{sec:standardized online objective proof}

We first note that
\begin{align*}
  \Erv{x \sim q, \, \tau_n \sim \pi(\cdot \mid x; \theta)}{\tilde{r}(x, \tau_n)}
  = \Erv{x \sim q}{\frac{1}{\sigma(x)}
  \condErv{\tau_n \sim \pi(\cdot \mid x; \theta)}{r(x, \tau_n)}{x}} - C\,,
\end{align*}
where $C = \Erv{x \sim q}{\mu(x) / \sigma(x)}$ is a constant independent of $\theta$. Since all $\condErv{\tau_n \sim \pi(\cdot \mid x; \theta)}{r(x, \tau_n)}{x}$ are jointly maximized by $\theta_*$ and the weights $1 / \sigma(x)$ are non-negative, $\theta_*$ also maximizes any weighted combination of the objectives. This completes our proof.

\subsection{Proof of \cref{lem:swift bound}}
\label{sec:swift bound proof}

Using basic algebra,
\begin{align*}
  \Erv{x \sim q, \, \tau_n \sim \pi(\cdot \mid x; \theta)}{\tilde{r}(x, \tau_n)}
  & = \Erv{x \sim q, \, \tau_n \sim \pi_0(\cdot \mid x)}
  {\tilde{r}(x, \tau_n) \frac{\pi(\tau_n \mid x; \theta)}{\pi_0(\tau_n \mid x)}} \\
  & = \Erv{x \sim q, \, \tau_n \sim \pi_0(\cdot \mid x)}
  {\tilde{r}(x, \tau_n) \left(1 +
  \log \frac{\pi(\tau_n \mid x; \theta)}{\pi_0(\tau_n \mid x)}\right)} +
  \Delta(\theta) \\
  & = \Erv{x \sim q, \, \tau_n \sim \pi_0(\cdot \mid x)}
  {\tilde{r}(x, \tau_n) \log \pi(\tau_n \mid x; \theta)} + \Delta(\theta) + C_1\,,
\end{align*}
where
\begin{align*}
  \Delta(\theta)
  = \Erv{x \sim q, \, \tau_n \sim \pi_0(\cdot \mid x)}
  {\tilde{r}(x, \tau_n)
  \left(\frac{\pi(\tau_n \mid x; \theta)}{\pi_0(\tau_n \mid x)} - \left(1 +
  \log \frac{\pi(\tau_n \mid x; \theta)}{\pi_0(\tau_n \mid x)}\right)\right)}
\end{align*}
and $C_1$ is a constant independent of $\theta$ defined in \cref{lem:refit lower bound}. Now we rearrange the equality, take the absolute value of both sides, and get
\begin{align*}
  \left|\Erv{x \sim q, \, \tau_n \sim \pi(\cdot \mid x; \theta)}
  {\tilde{r}(x, \tau_n)} -
  \Erv{x \sim q, \, \tau_n \sim \pi_0(\cdot \mid x)}
  {\tilde{r}(x, \tau_n) \log \pi(\tau_n \mid x; \theta)}\right|
  & = |C_1 + \Delta(\theta)| \\
  & \leq |C_1| + |\Delta(\theta)|\,.
\end{align*}
We bound $|\Delta(\theta)|$ as
\begin{align*}
  |\Delta(\theta)|
  & \leq \Erv{x \sim q, \, \tau_n \sim \pi_0(\cdot \mid x)}
  {\abs{\tilde{r}(x, \tau_n)
  \left(\frac{\pi(\tau_n \mid x; \theta)}{\pi_0(\tau_n \mid x)} - \left(1 +
  \log \frac{\pi(\tau_n \mid x; \theta)}{\pi_0(\tau_n \mid x)}\right)\right)}} \\
  & \leq \max_{x, \, \tau_n} \abs{\tilde{r}(x, \tau_n)
  \left(\frac{\pi(\tau_n \mid x; \theta)}{\pi_0(\tau_n \mid x)} - \left(1 +
  \log \frac{\pi(\tau_n \mid x; \theta)}{\pi_0(\tau_n \mid x)}\right)\right)} \\
  & \leq b \max_{x, \, \tau_n}
  \left(\frac{\pi(\tau_n \mid x; \theta)}{\pi_0(\tau_n \mid x)} - \left(1 +
  \log \frac{\pi(\tau_n \mid x; \theta)}{\pi_0(\tau_n \mid x)}\right)\right)\,.
\end{align*}
The last step holds because the rewards are in $[-b, b]$ and $u \geq 1 + \log u$. Finally, to bound $|\Delta(\theta)|$, we maximize over $\theta$. This completes the proof.

\subsection{Improvements in \swift Objective}
\label{sec:swift objective improvements}

Our analysis builds on Sections 1 and 2 in the note of \citet{jin19short}. Take \eqref{eq:refit gradient} and let
\begin{align*}
  g(x, \tau_n; \theta)
  = \sum_{t = 1}^n \nabla \log \pi(a_t \mid x, \tau_{t - 1}; \theta)
\end{align*}
be the random gradient inside the expectation, for random $x$ and $\tau_n$. Suppose that
\begin{align*}
  \|g(x, \tau_n; \theta)\|_2
  \leq \sigma
\end{align*}
holds for any $x$, $\tau_n$, and $\theta$. Then $g(x, \tau_n; \theta)$ is a $\sigma$-norm-sub-Gaussian vector (Lemma 1 in the note). Let all rewards be non-negative (\cref{sec:setting}) and $r_{\max} = \max_{x, \tau_n} r(x, \tau_n)$ be the maximum reward. Then $r(x, \tau_n) g(x, \tau_n; \theta)$ is $(r_{\max} \sigma)$-norm-sub-Gaussian. The average of such vectors concentrates at \eqref{eq:refit gradient} in the $L_2$-norm proportionally to their sub-Gaussianity parameter $r_{\max} \sigma$ (Definition 3 in the note).

Let $\mu_x = \condE{r(x, \tau_n)}{x}$. Since the rewards are non-negative, $|r(x, \tau_n) - \mu_x| \leq r_{\max}$ for any $x$ and $\tau_n$. Therefore, $(r(x, \tau_n) - \mu_x) g(x, \tau_n; \theta)$ is at most $(r_{\max} \sigma)$-norm-sub-Gaussian, and the average of such vectors concentrates at least as fast as without subtracting the mean. This results in a higher statistical efficiency in estimating the gradient. We note that the new estimator is biased, since
\begin{align*}
  \E{r(x, \tau_n) g(x, \tau_n; \theta)}
  = \E{(r(x, \tau_n) - \mu_x) g(x, \tau_n; \theta)}
\end{align*}
holds only under the assumption that $\tau_n \sim \pi(\cdot \mid x; \theta)$.

To the best of our knowledge, the normalization by $\sigma_x^2 = \condvar{r(x, \tau_n)}{x}$ in
\begin{align*}
  \E{\frac{r(x, \tau_n) - \mu_x}{\sigma_x} g(x, \tau_n; \theta)}
\end{align*}
is hard to analyze because it changes the gradient. It tends to help in practice because it renormalizes the variances of rewards across all contexts $x$. Therefore, the learned policy improves uniformly in all contexts without tuning the learning rate per context. This was popularized by GRPO \citep{shao24deepseekmath}.

\section{Ablation Studies}
\label{app:ablation studies}

We ablate the performance of \swift and \refit as a function of the conversation length, for $n \in \set{4, 6, 8, 10}$, in Tables \ref{tab:model-comparison-OpenBookQA-(n=4)-with_thinking}-\ref{tab:model-comparison-OpenBookQA-(n=10)-with_thinking}. We observe that longer conversations, corresponding to larger values of $n$, lead to lower accuracy because the task becomes harder. 



\begin{table}[t!]
\centering
\caption{Model Performance Comparison - Thinking Mode (OpenBookQA $n = 4$)}
\label{tab:model-comparison-OpenBookQA-(n=4)-with_thinking}
\resizebox{\textwidth}{!}{%
\begin{tabular}{lcc|cccccc}
\toprule
Model & Accuracy & Thinking (\%) & R Overall & R Accuracy & R Reasoning & R Comprehensive & R Pedagogic & R Confidence \\
\midrule
\swift (ours) & \textbf{0.9333 $\pm$ 0.0644} & \textbf{93.3 $\pm$ 0.0} & \textbf{8.32 $\pm$ 0.27} & \textbf{9.27 $\pm$ 0.38} & \textbf{8.47 $\pm$ 0.22} & \textbf{7.40 $\pm$ 0.24} & \underline{7.80 $\pm$ 0.35} & \textbf{9.13 $\pm$ 0.41} \\
\refit (ours) & \textbf{0.9333 $\pm$ 0.0644} & \textbf{93.3 $\pm$ 0.0} & \underline{8.23 $\pm$ 0.33} & \textbf{9.27 $\pm$ 0.37} & \textbf{8.47 $\pm$ 0.27} & \textbf{7.40 $\pm$ 0.31} & \textbf{7.87 $\pm$ 0.31} & \underline{8.87 $\pm$ 0.52} \\
\bottomrule
\end{tabular}%
}
\end{table}



\begin{table}[t!]
\centering
\caption{Model Performance Comparison - Thinking Mode (OpenBookQA $n = 6$)}
\label{tab:model-comparison-OpenBookQA-(n=6)-with_thinking}
\resizebox{\textwidth}{!}{%
\begin{tabular}{lcc|cccccc}
\toprule
Model & Accuracy & Thinking (\%) & R Overall & R Accuracy & R Reasoning & R Comprehensive & R Pedagogic & R Confidence \\
\midrule
\swift (ours) & \underline{0.8667 $\pm$ 0.0878} & \textbf{100.0 $\pm$ 0.0} & \underline{7.68 $\pm$ 0.63} & \underline{8.67 $\pm$ 0.77} & \underline{7.93 $\pm$ 0.52} & \underline{6.93 $\pm$ 0.52} & \underline{7.33 $\pm$ 0.61} & \underline{8.33 $\pm$ 0.75} \\
\refit (ours) & \textbf{1.0000 $\pm$ 0.0000} & \textbf{100.0 $\pm$ 0.0} & \textbf{8.53 $\pm$ 0.22} & \textbf{9.80 $\pm$ 0.20} & \textbf{8.53 $\pm$ 0.22} & \textbf{7.47 $\pm$ 0.24} & \textbf{8.40 $\pm$ 0.25} & \textbf{9.60 $\pm$ 0.16} \\
\bottomrule
\end{tabular}%
}
\end{table}



\begin{table}[t!]
\centering
\caption{Model Performance Comparison - Thinking Mode (OpenBookQA $n = 8$)}
\label{tab:model-comparison-OpenBookQA-(n=8)-with_thinking}
\resizebox{\textwidth}{!}{%
\begin{tabular}{lcc|cccccc}
\toprule
Model & Accuracy & Thinking (\%) & R Overall & R Accuracy & R Reasoning & R Comprehensive & R Pedagogic & R Confidence \\
\midrule
\swift (ours) & \underline{0.8000 $\pm$ 0.1033} & \underline{73.3 $\pm$ 0.0} & \underline{6.87 $\pm$ 1.00} & \underline{7.87 $\pm$ 1.06} & \underline{6.80 $\pm$ 1.05} & \underline{6.13 $\pm$ 0.96} & \underline{6.47 $\pm$ 1.03} & \underline{7.47 $\pm$ 1.05} \\
\refit (ours) & \textbf{0.9333 $\pm$ 0.0644} & \textbf{93.3 $\pm$ 0.0} & \textbf{7.91 $\pm$ 0.66} & \textbf{8.93 $\pm$ 0.70} & \textbf{8.00 $\pm$ 0.68} & \textbf{7.20 $\pm$ 0.59} & \textbf{7.60 $\pm$ 0.63} & \textbf{8.60 $\pm$ 0.75} \\
\bottomrule
\end{tabular}%
}
\end{table}



\begin{table}[t!]
\centering
\caption{Model Performance Comparison - Thinking Mode (OpenBookQA $n = 10$)}
\label{tab:model-comparison-OpenBookQA-(n=10)-with_thinking}
\resizebox{\textwidth}{!}{%
\begin{tabular}{lcc|cccccc}
\toprule
Model & Accuracy & Thinking (\%) & R Overall & R Accuracy & R Reasoning & R Comprehensive & R Pedagogic & R Confidence \\
\midrule
\swift (ours) & \underline{0.6000 $\pm$ 0.1265} & \underline{80.0 $\pm$ 0.0} & \underline{6.14 $\pm$ 0.95} & \underline{7.02 $\pm$ 1.11} & \underline{6.33 $\pm$ 0.90} & \underline{5.53 $\pm$ 0.80} & \underline{6.07 $\pm$ 0.91} & \underline{6.69 $\pm$ 1.11} \\
\refit (ours) & \textbf{0.8667 $\pm$ 0.0878} & \textbf{93.3 $\pm$ 0.0} & \textbf{7.85 $\pm$ 0.73} & \textbf{8.73 $\pm$ 0.82} & \textbf{7.87 $\pm$ 0.74} & \textbf{7.07 $\pm$ 0.61} & \textbf{7.73 $\pm$ 0.68} & \textbf{8.47 $\pm$ 0.88} \\
\bottomrule
\end{tabular}%
}
\end{table}

To alleviate the concern that our evaluation is biased due to using a single GPT-4o judge, we report results with a Claude 4 Opus judge on ARC dataset in \cref{tab:claude thinking arc}. The prompt is the same as in the GPT-4o judge. The best two methods are the same as in \cref{tab:model-comparison-ARC-with_thinking}: \swift and \refit. As in \cref{tab:model-comparison-ARC-with_thinking}, \swift and \refit attain much higher language quality scores than all baselines.

\begin{table}[t!]
\centering
\caption{Claude 4 Opus Judge - Thinking Mode (ARC)}
\label{tab:claude thinking arc}
\resizebox{\textwidth}{!}{
\begin{tabular}{lcc|cccccc}
\toprule
Model & Accuracy & Thinking (\%) & R Overall & R Accuracy & R Reasoning & R Comprehensive & R Pedagogic & R Confidence \\
\midrule
\swift (ours) & 0.8667 $\pm$ 0.0310 & 97.5 $\pm$ 0.0 & 6.28 $\pm$ 0.19 & 8.76 $\pm$ 0.26 & 6.73 $\pm$ 0.18 & 5.79 $\pm$ 0.14 & 6.33 $\pm$ 0.19 & 7.24 $\pm$ 0.23 \\
\refit (ours) & 0.8583 $\pm$ 0.0318 & 98.3 $\pm$ 0.0 & 6.31 $\pm$ 0.20 & 8.69 $\pm$ 0.27 & 6.73 $\pm$ 0.19 & 5.78 $\pm$ 0.15 & 6.34 $\pm$ 0.19 & 7.25 $\pm$ 0.23 \\
\dpo & 0.7167 $\pm$ 0.0411 & 7.5 $\pm$ 0.0 & 4.29 $\pm$ 0.24 & 7.28 $\pm$ 0.37 & 4.50 $\pm$ 0.24 & 3.27 $\pm$ 0.18 & 3.30 $\pm$ 0.22 & 4.92 $\pm$ 0.28 \\
\stargate & 0.6990 $\pm$ 0.0270 & 90.0 $\pm$ 0.0 & 6.17 $\pm$ 0.17 & 7.48 $\pm$ 0.20 & 5.94 $\pm$ 0.16 & 5.22 $\pm$ 0.14 & 5.50 $\pm$ 0.16 & 7.11 $\pm$ 0.21 \\
\base & 0.3772 $\pm$ 0.0146 & 75.1 $\pm$ 0.0 & 5.47 $\pm$ 0.12 & 7.32 $\pm$ 0.14 & 5.56 $\pm$ 0.11 & 5.80 $\pm$ 0.09 & 5.40 $\pm$ 0.11 & 5.92 $\pm$ 0.16 \\
\stargated & 0.8417 $\pm$ 0.0333 & 28.3 $\pm$ 0.0 & 4.22 $\pm$ 0.23 & 8.30 $\pm$ 0.33 & 4.17 $\pm$ 0.24 & 2.98 $\pm$ 0.18 & 3.06 $\pm$ 0.22 & 5.36 $\pm$ 0.28 \\
\stepdpo & 0.7167 $\pm$ 0.0411 & 6.7 $\pm$ 0.0 & 4.35 $\pm$ 0.24 & 7.22 $\pm$ 0.37 & 4.57 $\pm$ 0.24 & 3.31 $\pm$ 0.18 & 3.30 $\pm$ 0.23 & 5.08 $\pm$ 0.29 \\
\bottomrule
\end{tabular}
}
\end{table}

The run times of \refit and \swift are linear in sample size, because both methods make a single pass over the logged dataset. We expect the reward to increase as the number of training trajectories increases. To show this, we conduct a dataset size ablation study on ARC dataset, in both thinking and standard modes. The results for \refit and two different sample sizes are reported in \cref{tab:dataset size thinking arc,tab:dataset size standard arc}. The accuracy and language quality metrics improve with more training data.

\begin{table}[t!]
\centering
\caption{Dataset Size Ablation - Thinking Mode (ARC)}
\label{tab:dataset size thinking arc}
\resizebox{\textwidth}{!}{
\begin{tabular}{lcc|cccccc}
\toprule
Sample size & Accuracy & Thinking (\%) & R Overall & R Accuracy & R Reasoning & R Comprehensive & R Pedagogic & R Confidence \\
\midrule
1000 & 0.7301 $\pm$ 0.0261 & 93.1 $\pm$ 0.0 & 6.75 $\pm$ 0.15 & 7.55 $\pm$ 0.19 & 7.03 $\pm$ 0.14 & 6.30 $\pm$ 0.12 & 6.51 $\pm$ 0.14 & 7.19 $\pm$ 0.19 \\
2000 & 0.7543 $\pm$ 0.0253 & 94.8 $\pm$ 0.0 & 6.88 $\pm$ 0.15 & 7.71 $\pm$ 0.18 & 7.18 $\pm$ 0.13 & 6.46 $\pm$ 0.10 & 6.67 $\pm$ 0.14 & 7.38 $\pm$ 0.18 \\
\bottomrule
\end{tabular}
}
\end{table}

\begin{table}[t!]
\centering
\caption{Dataset Size Ablation - Standard Mode (ARC)}
\label{tab:dataset size standard arc}
\resizebox{\textwidth}{!}{
\begin{tabular}{lcc|cccccc}
\toprule
Sample size & Accuracy & Thinking (\%) & R Overall & R Accuracy & R Reasoning & R Comprehensive & R Pedagogic & R Confidence \\
\midrule
1000 & 0.6715 $\pm$ 0.0326 & 0.0 $\pm$ 0.0 & 6.64 $\pm$ 0.20 & 7.31 $\pm$ 0.24 & 6.97 $\pm$ 0.18 & 6.30 $\pm$ 0.15 & 6.54 $\pm$ 0.19 & 6.93 $\pm$ 0.26 \\
2000 & 0.7633 $\pm$ 0.0295 & 0.0 $\pm$ 0.0 & 7.25 $\pm$ 0.18 & 7.97 $\pm$ 0.22 & 7.50 $\pm$ 0.16 & 6.73 $\pm$ 0.14 & 7.14 $\pm$ 0.17 & 7.73 $\pm$ 0.23 \\
\bottomrule
\end{tabular}
}
\end{table}

\section{Additional Related Work}
\label{sec:additional related work}

Related work can be categorized into techniques for clarifying questions for multi-turn multimodal generation (MLLMs) or text-to-text generation (LLMs) settings. We also discuss related work on simulating user conversation trajectories and reinforcement learning approaches proposed for other problem settings.

\subsection{Supervised Learning}

Many works have recently focused on clarifying user prompts by asking clarifying questions \citep{liu2023we,zelikman2024star}. \citet{liu2023we} collect a dataset of $1,\!645$ linguistic examples and different ambiguity labels. This is due to there being many different types of ambiguity. \citet{zelikman2024star} introduced a simple and influential method: learn from rationales by fine-tuning on successful examples and regenerating rationales for failures. Given a prompt, generate a rationale and answer. If the answer is correct, fine-tune on prompt, rationale, and answer. Otherwise use the correct answer to generate a new rationale that leads to the correct answer. Fine-tune on prompt, rationale, and answer. This idea has since been extended in several directions. V-STaR \citep{hosseini2024v} extends the idea to vision-language tasks, and Quiet-STaR \citep{zelikman2024quiet} focuses on learning when not to ask, optimizing a policy to minimize unnecessary queries. We discuss extensions to reinforcement learning in \cref{sec:related work rl llms}. A recent survey by \citet{deng2025proactive} on proactive conversational techniques, which includes those focused on asking clarifying questions for disambiguation and the ilk.

Active disambiguation using LLMs has also been recently investigated \citep{kobalczyk2025active,zhang2025asktoact,butala2024promise}. AskToAct \citep{zhang2025asktoact} focused on improving tool use via a self-correction mechanism for clarification. They generate a dataset and then fine-tune on it. \citet{kobalczyk2025active} select clarifying questions based on information gain. Their approach emphasizes inference-time reasoning with pre-trained LLMs, while we learn task-specific policies that optimize questioning directly and efficiently without inference-time computation over all possible responses.

Recent works have also focused on benchmarking multi-turn conversational dialogue between users and agent for the purpose of clarification \citep{butala2024promise}. \citet{zhang2024askbeforeplanproactivelanguageagents} introduced a benchmark dataset and proposed an approach called Clarification-Execution-Planning (CEP) that uses specialized agents for clarification, execution, and planning. They predict if the question should be clarified and then generate a clarification.

Many works have also focused on the problem of predicting whether clarification is required in conversational interfaces \citep{lu2025zero,butala2024promise}. One recent work by \cite{lu2025zero} investigated a zero-shot approach for clarification detection in conversational search. They learn a classifier with an LLM backbone to predict if the query is specific or ambiguous. The training data are generated using a zero-shot LLM. \citet{Li2024LearningTAM} focuses on learning to ask critical questions in product search, using a dual-learning model that combines implicit session feedback with proactive clarification.

Surveys have further synthesized this area. \citet{Mulla2023AutomaticQGC} reviews progress in automatic question generation, including early reinforcement learning attempts, noting RL’s ability to improve the flow of conversation by considering losses accumulated over $n$ turns in a dialog sequence. Furthermore, \citet{Zhang2024TheIOU} surveys how conversation analysis can help in the era of LLMs. They discussed conversation optimization using RL to improve conversation policy learning. The paper also touches on adapting LLMs with RL for goal-directed conversations, though not specifically focused on question asking.

\subsection{Supervised Learning with Multi-Modal Models}

Multimodal multi-turn conversations that perform text-to-image generation have also been studied for asking clarifying questions to disambiguate and improve generation \cite{hahn2024proactive}. In particular, \citet{hahn2024proactive} introduced an uncertainty-driven method that adaptively triggers clarifying questions when the system’s confidence is low, enhancing multi-turn generation performance. This work also developed an automatic evaluation framework simulating users to assess question-asking strategies, using a suite of simple agents, including rule-based, belief-based, and LLM-based approaches, however, none of them incorporated any learning-based optimization.

Conversely, \citet{villegas2025imagechain} proposed ImageChain that focuses on image-to-text reasoning in MLLMs by considering a sequence of images as a multi-turn conversation along with the generated textual descriptions to create a succinct narrative, which has applications in video generation. Sequential reasoning over images and text. The description of the next image (treated as an agent) is conditioned on that image (treated as a user) and the history of the conversation.

Other work by \citet{chen2024data} focused on improving multi-modal understanding for spoken conversations. They use spoken language to improve multi-modal conversations. That work constructed a dataset of per-turn preferences, annotating winning and losing responses, and applied Direct Preference Optimization (DPO) at each step. In contrast, our work improves upon this in three key ways: (1) we employ a more principled objective-driven simulation strategy; (2) we eliminate the need for DPO entirely since rewards are explicitly defined, direct reward-based policy gradients are both simpler and more efficient; and (3) we provide formal justification for our method.

\subsection{Classic RL}

A large subset of prior work focuses on learning when and what to ask using RL. For example, DialFRED \citep{Gao2022DialFREDDAA} trains an RL-based questioner agent to decide what questions to ask to complete household tasks, penalizing invalid questions. \citet{Sigaud2022EAGERAAD} used reinforcement learning to train an agent to ask questions. It uses question generation and question answering systems to create auxiliary objectives for reward shaping, improving sample efficiency in language-conditioned RL.

Further, \citet{Free2024TowardsGAF} leveraged Q-learning with DQN and BERT embeddings to train a chatbot that gathers hidden grid-world information by asking strategic questions to a simulated user. In the space of conversational recommendation, \citet{Lin2022ReinforcementLAH} framed question selection as a bandit optimization problem, aiming to minimize unnecessary queries while also exploring RL fine-tuning of LLMs for human-like dialogue. Similarly, \citet{Wang2022SimulatingAMI} used reinforcement learning to train a DQN model for risk control in conversational search, focusing on when to ask clarifying questions. The RL agent learns to balance the rewards of asking relevant questions against the penalties of irrelevant ones.

Finally, \citet{Vth2024TowardsAZK} introduced a benchmark (LMRL-Gym) for evaluating multi-turn RL for LLMs, with the goal of enabling intentional interactions through carefully crafted questions, which is optimized by Q learning and DQN specifically.

\subsection{RL with LLMs}
\label{sec:related work rl llms}

On RL with LLMs, \citet{Hong2023ZeroShotGDB} used offline RL to optimize goal-directed dialogues, leveraging LLMs to simulate human-like interactions and generate data for training. It addresses the limitations of LLMs in asking effective questions and optimizing for conversational outcomes over multiple turns. The method trains offline RL on the generated dataset. The RL algorithm is classic: implicit language q learning. We want to avoid value and Q functions.

One closely related work is learning to ask clarifying questions by STaR-GATE \citep{andukuri2024star}. Their algorithm incorporates interactive conversations and preference elicitation from simulators, fine-tuning on best responses. This work leverages simulated trajectories between an optimized agent and a user to collect training data. Then it falls back to supervised learning: SFT on most rewarding trajectories is used to fine-tune the original LLM. This approach fails to make the full use of the reward signal, because SFT is equivalent to treating all best demonstrations as equally optimal. This leads to reduced statistical efficiency and a limited ability to capture nuanced training signals, which our approach addresses by preserving and exploiting the full reward structure.

Further, RL-STaR \citep{chang2024rl} provides a theoretical analysis for STaR-style updates in a reinforcement learning framework. Another related work is learning to tutor \citep{scarlatos2025training}, which leverages simulated trajectories between an optimized agent and a user to collect training data. Then it applied DPO to learn from pairs of winning ans losing trajectories This approach fails to make the full use of the reward signal, since DPO reduces reward information to binary pairwise preferences, discarding finer-grained distinctions. This leads to reduced statistical efficiency and a limited ability to capture nuanced training signals, which our approach addresses by preserving and exploiting the full reward structure.

One work by \citet{chen2024learning} studied disambiguation in LLM-based conversations and develops an approach based on DPO for task-specific use cases that lack high-quality conversational trajectories such as data question-answering and SQL generation. Unlike the other works discussed above that focus on clarifying question generation for disambiguation in MLLMs, this work develops an approach for the simpler LLM clarification question generation problem that takes only text as input and generates only text as output (whether it is code, data, or other types of text). This is definitely RL. Similar to \citep{scarlatos2025training} but applied to multi-modal models. Additionally, \citet{Chi2024CLARINETALE} learned to ask clarifying questions in information retrieval. The key idea is to simulate potential clarifying questions and user responses, and then fine-tune on those that lead to the highest improvement in ranking metrics. This is not RL but the idea is similar to our SFT RL baseline.

Furthermore, \citet{chu2025sft} investigated SFT and RL on generalization and memorization and find that on a few text and visual tasks that RL generalizes better in both rule-based textual and visual environments whereas SFT mostly memorizes the training data and fails to generalize in the out-of-distribution setting. This one is methodological. Interestingly, we show a connection because RL can be viewed as weighted SFT. Another work by \citet{Arik2024LearningTCG} improved conversational skills, specifically clarification question asking, using Action-Based Contrastive Self-Training (ACT). ACT is a DPO-based algorithm for sample-efficient dialogue policy learning. While RLHF is mentioned as a paradigm for building conversational agents, the paper's primary contribution is not directly about using RL for question asking, but DPO. \citet{Wang2024RewardingWMW} used reinforcement learning to enhance task-oriented dialogue systems, focusing on improving both understanding and generation tasks. It introduces step-by-step rewards throughout token generation to optimize dialogue state tracking and response generation. The approach is a variant of PPO and the focus is on individual token generation.

\subsection{Offline RL Algorithms for LLM Post-Training}

It is well known in literature that when viewing an LLM based generation as a sequential decision process, the state comprises of the entire history of generated tokens, the action next generated token, and the transition function is a deterministic concatenation of the action token to the state tokens. So, when viewed from the perspective of an environment for RL, the only missing component is the reward function which is external to the LLM and needs to be provided. So, the key difference between online and offline RL in the case of LLMs is the availability of a reward function. In one of the earliest papers on RLHF \citep{ouyang22training}, the authors converted offline feedback data collected from users to learn a reward function and then use an online RL algorithm (PPO) to train the LLM. Another branch of work attempted to explore use of offline RL methods to train LLMs with user feedback. One such method was ILQL \citep{snell23offline}, where the key idea was to learn a Q function with the LLM’s hidden state forming the features for this Q function. In this case too some form of numerical reward from the user was needed, but this could be completely offline. The key considerations here were standard offline RL cautionary points such as ensuring to stay within the training data distribution for the Bellman updates (conservative QL) and the added complexity of estimating and using Q values during inference. Algorithms inspired by KL constrained policy optimization objectives such as DPO \citep{rafailov23direct} also function in an offline manner with the objective being to effectively learn an implicit reward function that is consistent with preference data collected from users. However, collection of pairwise preference data is a key requirement of this approach. A more detailed discussion on various offline policy based RL algorithms for LLM post-training is provided in \citet{baheti24leftover}.

We specifically consider the objective functions of two policy based offline RL algorithms - DPO and ALOL to illustrate the key differences between them and our approach:

\begin{align*}
\nabla J(\theta)_{DPO}
  &= \beta\mathbb{E}_{x\sim q, \tau_{n}\sim\pi_0(\cdot\mid x)} \\&\left[
  \sigma(\hat{r}(x, \tau_{nl}) - \hat{r}(x, \tau_{nw}))
  \Bigl[\sum_{t = 1}^{nw} \nabla \log \pi(a_t \mid x, \tau_{t - 1}; \theta) - \sum_{t = 1}^{nl} \nabla \log \pi(a_t \mid x, \tau_{t - 1}; \theta)\Bigr]\right]\,,
\end{align*}

\begin{equation*}
    \nabla J(\theta)_{A-LOL} = \mathbb{E}_{x\sim q, \tau_{n}\sim\pi_0(\cdot\mid x)}\left[A_{\pi_{0}}(x, \tau_n)\hat{r}(x, \tau_n) \sum_{t = 1}^{n} \nabla \log \pi(a_t \mid x, \tau_{t - 1}; \theta)\right],
\end{equation*}

where $nw, nl$ represent the indices of the chosen and rejected sequences respectively, $\hat{r}$ represents the policy ratio of the propensities with respect to the reference policy and $A_{\pi_{0}}$ represents the advantage function under the reference policy. We notice that both these gradient estimates can be considered as scaled versions of the off-policy vanilla policy gradient, with the scaling factor in both these cases being a function of the ratio of the propensities under the policy being optimized and the reference policy. In our formulation, we avoid these scaling factors ensure stability and simplicity, while trading off for an objective that provides a loose lower bound for the original one.

\section{Dataset}
\label{app:dataset}

In this section, we present a comprehensive summary of the six benchmark datasets discussed, along with the experimental setup:

\paragraph{OpenBookQA~\cite{OpenBookQA2018}} is a question-answering dataset modeled after open book exams, consisting of 5,957 multiple-choice elementary-level science questions (4,957 train, 500 dev, 500 test). It tests understanding of a small "book" of 1,326 core science facts and their application to novel situations. What makes this dataset challenging is that answering questions requires additional common knowledge beyond what's in the provided "book."

\paragraph{SciQA~\cite{welbl2017crowdsourcing}} is a multimodal dataset that evaluates AI models' ability to reason using both textual and visual information for science topics. It includes approximately 21,000 multimodal questions covering physics, chemistry, and biology, sourced from educational materials. Models must analyze both text and diagrams to generate correct answers.

\paragraph{MMLU~\cite{hendrycksmeasuring}} is a comprehensive benchmark that evaluates models on multiple choice questions across 57 subjects, including STEM, humanities, social sciences, and more, with difficulty levels ranging from elementary to advanced professional level. It focuses exclusively on zero-shot and few-shot settings, making it similar to how we evaluate humans. The benchmark tests both world knowledge and problem-solving ability.

\paragraph{ARC~\cite{clark2018think}} is a dataset of 7,787 genuine grade-school level, multiple-choice science questions from grade 3 to 9. It's divided into two parts: the Challenge Set with 2,590 "hard" questions that both retrieval and co-occurrence methods fail to answer correctly, and an Easy Set with 5,197 questions. Most questions have 4 answer choices, with less than 1\% having 3 or 5 options. The dataset also includes a supporting knowledge base of 14.3 million unstructured text passages.

\paragraph{CoSQL~\cite{yu2019cosql}} is a corpus for building cross-domain conversational text-to-SQL systems. It consists of over 30,000 dialogue turns plus more than 10,000 annotated SQL queries, obtained from a Wizard-of-Oz collection of 3,000 dialogues querying 200 complex databases spanning 138 domains. Each dialogue simulates a real-world database query scenario with a crowd worker as a user exploring the database and a SQL expert retrieving answers with SQL. The average question length in CoSQL is 11.2 words with an average of 5.2 question turns per dialogue.
 
\paragraph{MathDial~\cite{macina2023mathdial}} is a dataset of one-to-one teacher-student tutoring dialogues grounded in multi-step math reasoning problems. The dataset contains 2,861 conversations in total, split into train and test sets. It was created by pairing human teachers with a Large Language Model (LLM) that was prompted to represent common student errors and uses LLMKT model \citep{scarlatos2025exploring}. The dataset focuses on effective tutoring rather than just problem-solving and exhibits rich pedagogical properties, focusing on guiding students using sense-making questions.

\paragraph{\textbf{Experimental Setup:}}
For our experiments, we randomly selected 500 samples from each dataset, allocating 400 for training and 100 for testing. We created conversations with 3 turns and generated 3 random runs (trajectories) with different temperature settings using our \base model.

\section{Multi-Turn Reasoning Prompts and Conversations}
\label{app:reasoning details}

Our multi-turn reasoning experiments encourage the assistant to progressively deepen its analysis through iterative prompting by the teacher. The default conversation length is  $3$ turns, although we also experiment with longer conversations:
\begin{enumerate}
\item \textbf{Turn 1 - Initial Question:} Teacher presents the problem and asks for an initial analysis.
\item \textbf{Turn 1 - Initial Response:} Assistant replies with initial thoughts.
\item \textbf{Turn 2 - Deeper Analysis:} Teacher prompts for wrong options and key concepts.
\item \textbf{Turn 2 - Deeper Analysis Response:} Assistant replies with a detailed reasoning.
\item \textbf{Turn 3 - Final Answer:} Teacher asks for the final answer with full justification.
\item \textbf{Turn 3 - Final Response:} Assistant replies with the final answer and its justification.
\end{enumerate}
Next we present detailed prompts for these conversations, first \textbf{with thinking tags} and then \textbf{without thinking tags}.

\subsection{Prompts WITH Thinking Tags}

\begin{tcolorbox}[colback=blue!5,colframe=blue!75!black,title=System Prompt - Assistant (With Thinking)]
\textit{You are a helpful, accurate assistant who is an expert at answering multiple-choice questions. When you think through a problem, wrap your thinking in <thinking></thinking> tags. You MUST first state your final answer in the format: 'The answer is X' where X is A, B, C, or D. The final answer must be outside the thinking tags. Then show your thinking in <thinking></thinking> tags for your step-by-step reasoning.}
\end{tcolorbox}

\begin{tcolorbox}[colback=orange!5,colframe=orange!75!black,title=Turn 1 - Teacher Initial Question (With Thinking)]
\texttt{Question: [question text]}\\
\texttt{Choices:}\\
\texttt{A. [choice A]}\\
\texttt{B. [choice B]}\\
\texttt{C. [choice C]}\\
\texttt{D. [choice D]}\\
\\
\texttt{[Dataset-specific instruction]}\\
\\
\textit{Please use <thinking></thinking> tags to show your step-by-step reasoning, then provide your initial thoughts outside of these tags.}
\end{tcolorbox}

\begin{tcolorbox}[colback=green!5,colframe=green!75!black,title=Turn 1 - Assistant Initial Response (With Thinking)]
Initial thoughts or analysis (outside thinking tags)
\texttt{<thinking>Step-by-step reasoning about each option</thinking>}
\end{tcolorbox}

\begin{tcolorbox}[colback=orange!5,colframe=orange!75!black,title=Turn 2 - Teacher Deeper Analysis (With Thinking)]
\textit{That's a good start. Can you explain more about why some options might be incorrect? Use <thinking></thinking> tags for your analysis.}
\end{tcolorbox}

\begin{tcolorbox}[colback=green!5,colframe=green!75!black,title=Turn 2 - Assistant Deeper Analysis Response (With Thinking)]
Detailed elimination reasoning (outside thinking tags)
\texttt{<thinking>Systematic analysis of why wrong options fail and why correct option succeeds</thinking>}
\end{tcolorbox}

\begin{tcolorbox}[colback=orange!5,colframe=orange!75!black,title=Turn 3 - Teacher Final Answer (With Thinking)]
\textit{Thank you for your detailed explanations. What is your final answer (A, B, C, or D)? Please provide a justification for your choice. You MUST first state your final answer in the format: 'The answer is X' where X is A, B, C, or D. The final answer must be outside the thinking tags. Then show your thinking in <thinking></thinking> tags for your step-by-step reasoning.}
\end{tcolorbox}

\begin{tcolorbox}[colback=green!5,colframe=green!75!black,title=Turn 3 - Assistant Final Response (With Thinking)]
\texttt{The answer is X.}
\texttt{<thinking>Complete justification with step-by-step reasoning</thinking>}
\end{tcolorbox}

For longer conversations than $3$ turns, we use the following intermediate prompt between the second and final turns.

\begin{tcolorbox}[colback=yellow!10,colframe=orange!75!black,title={Intermediate Turn - Teacher (With Thinking)}]
\textit{Thank you for your explanation. Let's explore this further (turn [N]). Could you elaborate on the key concepts relevant to this question? Continue using <thinking></thinking> tags for your analysis.}
\end{tcolorbox}

\subsection{Prompts WITHOUT Thinking Tags}

\begin{tcolorbox}[colback=blue!5,colframe=blue!75!black,title=System Prompt - Assistant (Without Thinking)]
\textit{You are a helpful, accurate assistant who is an expert at answering multiple-choice questions. You MUST first state your final answer in the format: 'The answer is X' where X is A, B, C, or D. After the final answer clearly explain your reasoning.}
\end{tcolorbox}

\begin{tcolorbox}[colback=orange!5,colframe=orange!75!black,title=Turn 1 - Teacher Initial Question (Without Thinking)]
\texttt{Question: [question text]}\\
\texttt{Choices:}\\
\texttt{A. [choice A]}\\
\texttt{B. [choice B]}\\
\texttt{C. [choice C]}\\
\texttt{D. [choice D]}\\
\\
\texttt{[Dataset-specific instruction]}\\
\\
\textit{Please think through this step by step and explain your initial thoughts about the question.}
\end{tcolorbox}

\begin{tcolorbox}[colback=green!5,colframe=green!75!black,title=Turn 1 - Assistant Initial Response (Without Thinking)]
Step-by-step reasoning and initial thoughts (all outside, no thinking tags)
\end{tcolorbox}

\begin{tcolorbox}[colback=orange!5,colframe=orange!75!black,title=Turn 2 - Teacher Deeper Analysis (Without Thinking)]
\textit{That's a good start. Can you explain more about why some options might be incorrect? Also, are there any key concepts or facts that are relevant to answering this question?}
\end{tcolorbox}

\begin{tcolorbox}[colback=green!5,colframe=green!75!black,title=Turn 2 - Assistant Deeper Analysis Response (Without Thinking)]
Detailed elimination reasoning with explanations (all outside, no thinking tags)
\end{tcolorbox}

\begin{tcolorbox}[colback=orange!5,colframe=orange!75!black,title=Turn 3 - Teacher Final Answer (Without Thinking)]
\textit{Thank you for your detailed explanations. What is your final answer (A, B, C, or D)? You MUST first state your final answer in the format: 'The answer is X' where X is A, B, C, or D. After the final answer clearly explain your reasoning.}
\end{tcolorbox}

\begin{tcolorbox}[colback=green!5,colframe=green!75!black,title=Turn 3 - Assistant Final Response (Without Thinking)]
\texttt{The answer is X.}\\
Complete justification with step-by-step reasoning (all outside, no thinking tags)
\end{tcolorbox}

For longer conversations than $3$ turns, we use the following intermediate prompt between the second and final turns.

\begin{tcolorbox}[colback=yellow!10,colframe=orange!75!black,title={Intermediate Turn - Teacher (With Thinking)}]
\textit{Thank you for your explanation. Let's explore this further (turn [N]). Could you elaborate on the key concepts relevant to this question?}
\end{tcolorbox}

\subsection{Evaluation Framework}
\label{app:eval-framework-prompt}

All evaluation is done by a \textbf{GPT-4o judge}. The judge is provided with the complete conversation transcript up to the last turn, along with the initial question, choices, and correct answer.

\begin{tcolorbox}[colback=purple!5,colframe=purple!75!black,title=System Prompt - GPT-4o Evaluator]
\textit{You are an expert evaluator of educational conversations about multiple-choice questions. Your job is to rate the quality and progress of a conversation on a scale from 0 to 10 for several metrics. For each metric, you MUST provide a numerical score (0-10) and a brief justification of 1-2 sentences.}

\textit{The metrics to evaluate are:}
\begin{enumerate}
\item \textit{Accuracy - Is the assistant moving toward the correct answer?}
\item \textit{Reasoning ability - Is the reasoning logical, clear, and precise?}
\item \textit{Comprehensiveness - Are key concepts being properly explained?}
\item \textit{Pedagogical value - Is this explanation helping someone learn?}
\item \textit{Confidence calibration - Is the assistant appropriately confident in their analysis?}
\end{enumerate}

\textit{After rating each metric, provide an overall score (0-10) that represents your holistic assessment of the conversation, followed by a brief overall justification.}

\vspace{0.2cm}
\textit{IMPORTANT: You MUST format your response EXACTLY as follows with these exact labels and line breaks:}

\texttt{Accuracy: [score]  \textbackslash n[justification]  \textbackslash n\textbackslash n}\\
\texttt{Reasoning ability: [score]  \textbackslash n[justification]  \textbackslash n\textbackslash n}\\
\texttt{Comprehensiveness: [score]  \textbackslash n[justification]  \textbackslash n\textbackslash n}\\
\texttt{Pedagogical value: [score]  \textbackslash n[justification]  \textbackslash n\textbackslash n}\\
\texttt{Confidence calibration: [score]  \textbackslash n[justification]  \textbackslash n\textbackslash n}\\
\texttt{Overall: [score]  \textbackslash n[justification]}
\end{tcolorbox}

\begin{tcolorbox}[colback=purple!5,colframe=purple!75!black,title=Prompt - GPT-4o Evaluator]
\textit{Please evaluate this educational conversation about a multiple-choice question up to this point.}

\textit{Original Question: [question text]}\\
\textit{Choices: A. [choice A], B. [choice B], C. [choice C], D. [choice D]}\\
\textit{Correct Answer: [correct answer index]}

\textit{--- CONVERSATION TRANSCRIPT SO FAR ---}

\textit{[Complete conversation history with all turns]}

\textit{--- END CONVERSATION TRANSCRIPT ---}

\textit{Please provide your evaluation following the exact format specified in your system prompt.}
\end{tcolorbox}

We consider the following $6$ evaluation metrics:
\begin{enumerate}
  \item \textbf{Accuracy} (0-10) - Is the assistant moving toward the correct answer?
  \item \textbf{Reasoning ability} (0-10) - Is the reasoning logical, clear, and precise?
  \item \textbf{Comprehensiveness} (0-10) - Are key concepts being properly explained?
  \item \textbf{Pedagogical value} (0-10) - Is this explanation helping someone learn?
  \item \textbf{Confidence calibration} (0-10) - Is the assistant appropriately confident?
  \item \textbf{Overall} (0-10) - Holistic assessment of conversation quality.
\end{enumerate}

\definecolor{usercolor}{RGB}{70, 130, 180} 
\definecolor{assistantcolor}{RGB}{60, 179, 113} 
\definecolor{thinkingcolor}{RGB}{211, 211, 211} 
\definecolor{answercolor}{RGB}{255, 165, 0} 
\definecolor{correctcolor}{RGB}{50, 205, 50} 

\newcounter{usermsgcount}
\newcounter{asstmsgcount}

\newenvironment{usermessage}{%
\stepcounter{usermsgcount}%
\begin{tcolorbox}[
    enhanced,
    colback=usercolor!10,
    colframe=usercolor,
    arc=2mm,
    boxrule=1pt,
    title={\textbf{Teacher Message \theusermsgcount}},
    fonttitle=\bfseries\color{white},
    coltitle=usercolor!100
]%
}{%
\end{tcolorbox}%
}

\newenvironment{assistantmessage}{%
\stepcounter{asstmsgcount}%
\begin{tcolorbox}[
    enhanced,
    colback=assistantcolor!10,
    colframe=assistantcolor,
    arc=2mm,
    boxrule=1pt,
    title={\textbf{Assistant Message \theasstmsgcount}},
    fonttitle=\bfseries\color{white},
    coltitle=assistantcolor!100
]%
}{%
\end{tcolorbox}%
}

\newenvironment{thinking}{%
\begin{mdframed}[
    linecolor=thinkingcolor,
    backgroundcolor=thinkingcolor!20,
    roundcorner=5pt,
    leftmargin=1cm,
    rightmargin=1cm
]%
\textbf{\textit{Thinking Process:}}\\
}{%
\end{mdframed}%
}

\newcommand{\initialanswer}[1]{%
\begin{tcolorbox}[
    enhanced,
    colback=answercolor!20,
    colframe=answercolor,
    arc=2mm,
    boxrule=1pt,
    title={\textbf{Initial Answer}},
    fonttitle=\bfseries\color{white},
    coltitle=answercolor!100
]%
#1
\end{tcolorbox}%
}

\newcommand{\finalanswer}[1]{%
\begin{tcolorbox}[
    enhanced,
    colback=answercolor!20,
    colframe=answercolor,
    arc=2mm,
    boxrule=1pt,
    title={\textbf{Final Answer}},
    fonttitle=\bfseries\color{white},
    coltitle=answercolor!100
]%
#1
\end{tcolorbox}%
}

\subsection{ARC Conversation with \base}

We first present a $3$-turn conversation with \base on the ARC dataset. The final response shows that \base is very verbose.

\begin{usermessage}
Question: Which of the following best explains how stems transport water to other parts of the plant?
Choices:
A. through a chemical called chlorophyll
B. by using photosynthesis
C. through a system of tubes
D. by converting water to food

This question is from a science exam.
Please use <thinking></thinking> tags to show your step-by-step reasoning, then provide your initial thoughts outside of these tags.
\end{usermessage}

\begin{assistantmessage}
The answer is C. <thinking>
To determine the correct answer, I will evaluate each option in relation to how stems transport water to other parts of a plant.
A. Chlorophyll is a chemical that is essential for photosynthesis, but it is not directly responsible for transporting water through the plant. Chlorophyll is involved in absorbing light energy, which is necessary for photosynthesis to occur, but it is not the mechanism by which water is transported.
B. Photosynthesis is the process by
\end{assistantmessage}

\begin{usermessage}
That's a good start. Can you explain more about why some options might be incorrect? Use <thinking></thinking> tags for your analysis.
\end{usermessage}

\begin{assistantmessage}
<thinking>
Photosynthesis is the process by which plants convert light energy into chemical energy in the form of glucose. While photosynthesis occurs in the leaves, which are attached to the stems, it is not the process by which water is transported to other parts of the plant. The question specifically asks about the mechanism of water transport, not the process of energy conversion.
C. The xylem is a type of vascular tissue that is found in the stems of plants, which is responsible for transporting water and
\end{assistantmessage}

\begin{usermessage}
Thank you for your detailed explanations. What is your final answer (A, B, C, or D)? Please provide a justification for your choice. You MUST first state your final answer in the format: 'The answer is X' where X is A, B, C, or D. The final answer must be outside the thinking tags. Then show your thinking in <thinking></thinking> tags for your step-by-step reasoning. 
\end{usermessage}

\begin{assistantmessage}
</thinking> 
The xylem consists of dead cells that are hollow, which allows water to pass through them. This is why option A is incorrect. Chlorophyll is a chemical used in photosynthesis, not in transporting water. Photosynthesis is the process of plants converting light energy into chemical energy, but it's not related to water transport. Therefore, option B is also incorrect. Option D is incorrect because converting water to food is the process of photosynthesis
\end{assistantmessage}

\definecolor{evaluationcolor}{RGB}{75, 0, 130} 
\definecolor{scorecolor}{RGB}{220, 20, 60} 
\definecolor{justificationcolor}{RGB}{146, 39, 87} 

\newcommand{\evaluationscores}{%
\begin{tcolorbox}[
    enhanced,
    colback=evaluationcolor!5,
    colframe=evaluationcolor,
    arc=2mm,
    boxrule=1pt,
    title={\textbf{Evaluation Scores by GPT4o}},
    fonttitle=\bfseries\color{white},
    coltitle=evaluationcolor!100
]%
\begin{tabular}{l|c|c}
\textbf{Criteria} & \textbf{Score} & \textbf{Rating} \\
\hline
Accuracy & 8/10 & \colorbox{scorecolor!80}{\textcolor{white}{$\blacksquare$}}
\colorbox{scorecolor!80}{\textcolor{white}{$\blacksquare$}}
\colorbox{scorecolor!80}{\textcolor{white}{$\blacksquare$}}
\colorbox{scorecolor!80}{\textcolor{white}{$\blacksquare$}}
\colorbox{scorecolor!60}{\textcolor{white}{$\blacksquare$}}
\colorbox{scorecolor!60}{\textcolor{white}{$\blacksquare$}}
\colorbox{scorecolor!60}{\textcolor{white}{$\blacksquare$}}
\colorbox{scorecolor!60}{\textcolor{white}{$\blacksquare$}}
\colorbox{scorecolor!10}{\textcolor{white}{$\square$}}
\colorbox{scorecolor!10}{\textcolor{white}{$\square$}} \\

Reasoning Ability & 7/10 & \colorbox{scorecolor!80}{\textcolor{white}{$\blacksquare$}}
\colorbox{scorecolor!80}{\textcolor{white}{$\blacksquare$}}
\colorbox{scorecolor!80}{\textcolor{white}{$\blacksquare$}}
\colorbox{scorecolor!80}{\textcolor{white}{$\blacksquare$}}
\colorbox{scorecolor!60}{\textcolor{white}{$\blacksquare$}}
\colorbox{scorecolor!60}{\textcolor{white}{$\blacksquare$}}
\colorbox{scorecolor!60}{\textcolor{white}{$\blacksquare$}}
\colorbox{scorecolor!10}{\textcolor{white}{$\square$}}
\colorbox{scorecolor!10}{\textcolor{white}{$\square$}}
\colorbox{scorecolor!10}{\textcolor{white}{$\square$}} \\

Comprehensiveness & 6/10 & \colorbox{scorecolor!80}{\textcolor{white}{$\blacksquare$}}
\colorbox{scorecolor!80}{\textcolor{white}{$\blacksquare$}}
\colorbox{scorecolor!80}{\textcolor{white}{$\blacksquare$}}
\colorbox{scorecolor!60}{\textcolor{white}{$\blacksquare$}}
\colorbox{scorecolor!60}{\textcolor{white}{$\blacksquare$}}
\colorbox{scorecolor!60}{\textcolor{white}{$\blacksquare$}}
\colorbox{scorecolor!10}{\textcolor{white}{$\square$}}
\colorbox{scorecolor!10}{\textcolor{white}{$\square$}}
\colorbox{scorecolor!10}{\textcolor{white}{$\square$}}
\colorbox{scorecolor!10}{\textcolor{white}{$\square$}} \\

Pedagogical Value & 7/10 & \colorbox{scorecolor!80}{\textcolor{white}{$\blacksquare$}}
\colorbox{scorecolor!80}{\textcolor{white}{$\blacksquare$}}
\colorbox{scorecolor!80}{\textcolor{white}{$\blacksquare$}}
\colorbox{scorecolor!80}{\textcolor{white}{$\blacksquare$}}
\colorbox{scorecolor!60}{\textcolor{white}{$\blacksquare$}}
\colorbox{scorecolor!60}{\textcolor{white}{$\blacksquare$}}
\colorbox{scorecolor!60}{\textcolor{white}{$\blacksquare$}}
\colorbox{scorecolor!10}{\textcolor{white}{$\square$}}
\colorbox{scorecolor!10}{\textcolor{white}{$\square$}}
\colorbox{scorecolor!10}{\textcolor{white}{$\square$}} \\

Confidence Calibration & 6/10 & \colorbox{scorecolor!80}{\textcolor{white}{$\blacksquare$}}
\colorbox{scorecolor!80}{\textcolor{white}{$\blacksquare$}}
\colorbox{scorecolor!80}{\textcolor{white}{$\blacksquare$}}
\colorbox{scorecolor!60}{\textcolor{white}{$\blacksquare$}}
\colorbox{scorecolor!60}{\textcolor{white}{$\blacksquare$}}
\colorbox{scorecolor!60}{\textcolor{white}{$\blacksquare$}}
\colorbox{scorecolor!10}{\textcolor{white}{$\square$}}
\colorbox{scorecolor!10}{\textcolor{white}{$\square$}}
\colorbox{scorecolor!10}{\textcolor{white}{$\square$}}
\colorbox{scorecolor!10}{\textcolor{white}{$\square$}} \\
\hline
\textbf{Overall} & \textbf{7.0/10} & \colorbox{scorecolor!80}{\textcolor{white}{$\blacksquare$}}
\colorbox{scorecolor!80}{\textcolor{white}{$\blacksquare$}}
\colorbox{scorecolor!80}{\textcolor{white}{$\blacksquare$}}
\colorbox{scorecolor!80}{\textcolor{white}{$\blacksquare$}}
\colorbox{scorecolor!60}{\textcolor{white}{$\blacksquare$}}
\colorbox{scorecolor!60}{\textcolor{white}{$\blacksquare$}}
\colorbox{scorecolor!60}{\textcolor{white}{$\blacksquare$}}
\colorbox{scorecolor!10}{\textcolor{white}{$\square$}}
\colorbox{scorecolor!10}{\textcolor{white}{$\square$}}
\colorbox{scorecolor!10}{\textcolor{white}{$\square$}} \\
\end{tabular}
\end{tcolorbox}%
}

\newcommand{\evaluationjustifications}{%
\begin{tcolorbox}[
    enhanced,
    colback=justificationcolor!5,
    colframe=justificationcolor,
    arc=2mm,
    boxrule=1pt,
    title={\textbf{Evaluation Justifications by GPT4o}},
    fonttitle=\bfseries\color{white},
    coltitle=justificationcolor!100
]%

\subsubsection*{Accuracy: 8/10}
The assistant is moving toward the correct answer, "C," and has correctly identified that the xylem is responsible for water transport. However, the explanation for option D is incomplete, and the assistant has not yet explicitly stated the final answer as requested.

\subsubsection*{Reasoning Ability: 7/10}
The reasoning is mostly logical and clear, particularly in explaining why options A and B are incorrect. However, the analysis of option D is cut off and incomplete, which detracts from the overall clarity and precision of the reasoning.

\subsubsection*{Comprehensiveness: 6/10}
The assistant provides a good explanation for why options A and B are incorrect and begins to explain why C is correct. However, the discussion of option D is incomplete, and the explanation of the xylem could be expanded further to fully address the mechanism of water transport.

\subsubsection*{Pedagogical Value: 7/10}
The conversation is educational and provides some useful insights, particularly about chlorophyll and photosynthesis. However, the incomplete explanations for D and the xylem system limit the overall learning potential.

\subsubsection*{Confidence Calibration: 6/10}
The assistant appears confident in its reasoning but has not yet explicitly stated the final answer as requested. Additionally, the incomplete explanation of option D suggests a slight overconfidence in the clarity of its analysis.

\end{tcolorbox}%
}

\newcommand{\evaluationoverall}{%
\begin{tcolorbox}[
    enhanced,
    colback=evaluationcolor!10,
    colframe=evaluationcolor,
    arc=2mm,
    boxrule=1pt,
    title={\textbf{Overall Assessment by GPT4o}},
    fonttitle=\bfseries\color{white},
    coltitle=evaluationcolor!100
]%
\begin{center}
\textbf{\Large Final Score: 7.0/10}
\end{center}

\textbf{Summary:} The conversation is on the right track and provides a mostly accurate and logical explanation of the question. However, incomplete reasoning for option D and a lack of explicit final answer detract from the overall quality. Expanding on the xylem's role and fully addressing all options would improve the conversation significantly.

\begin{tikzpicture}
\draw[thick] (0,0) -- (10,0); 
\foreach \x in {0,2,4,6,8,10} {
  \draw (\x,0.1) -- (\x,-0.1) node[below] {\x};
}
\draw (0,0.5) node[left] {Overall} -- (0,0.5);
\fill[scorecolor!70] (0,0.3) rectangle (7.0,0.7);
\node at (7.5,0.5) {7.0};
\end{tikzpicture}
\end{tcolorbox}%
}

\evaluationscores

\evaluationjustifications

\evaluationoverall

\subsection{ARC Conversation with \swift}

\setcounter{usermsgcount}{0}
\setcounter{asstmsgcount}{0}

Now we show the conversation of \swift on the same ARC dataset. The teacher messages are the same and hence we do not show them. We observe that \swift updated its reasoning process with additional information and thus gets a higher evaluation score from the GPT-4o judge.

\begin{assistantmessage}
<<SYS>> The answer is C.<thinking>First, I considered each choice individually. I looked at A and thought, 'What does chlorophyll do?' Chlorophyll is the green pigment in plants that helps them make food from sunlight through photosynthesis, so this choice doesn't sound right. Next, I looked at B and thought, 'What is photosynthesis?' Photosynthesis is the process plants use to make food from sunlight, so this choice also doesn't seem correct. Then
\end{assistantmessage}

\begin{assistantmessage}
<thinking>I examined C and thought, 'Is there a system of tubes in a plant that could transport water?' Yes, I know that plants have a system of tubes called xylem that transport water from the roots to the rest of the plant. Finally, I looked at D and thought, 'Does water get converted to food in the process of transporting it?' No, that's not how it works. Plants use water to make food, but that's not what's happening when water is being transported
\end{assistantmessage}

\begin{assistantmessage}
choice C is the best explanation.

</thinking>I have now fully explained why choice C is the best explanation for how stems transport water to other parts of the plant. Choice C states that stems transport water through a system of tubes, which is a more accurate description of the plant's water transport system. This system, known as xylem, is a network of tubes that allows water to move from the roots to the leaves of the plant. This is a fundamental process that supports the plant's
\end{assistantmessage}

\newcommand{\evaluationcolor}{rgb:purple,75;blue,0;red,130} 
\newcommand{\scorecolor}{rgb:red,220;green,20;blue,60} 
\newcommand{\justificationcolor}{rgb:red,46;green,139;blue,87} 

\renewcommand{\evaluationscores}{%
\begin{tcolorbox}[
    enhanced,
    colback=evaluationcolor!5,
    colframe=evaluationcolor,
    arc=2mm,
    boxrule=1pt,
    title={\textbf{Evaluation Scores by GPT4o}},
    fonttitle=\bfseries\color{white},
    coltitle=evaluationcolor!100
]%
\begin{tabular}{l|c|c}
\textbf{Criteria} & \textbf{Score} & \textbf{Rating} \\
\hline
Accuracy & 10/10 & \colorbox{scorecolor!80}{\textcolor{white}{$\blacksquare$}}
\colorbox{scorecolor!80}{\textcolor{white}{$\blacksquare$}}
\colorbox{scorecolor!80}{\textcolor{white}{$\blacksquare$}}
\colorbox{scorecolor!80}{\textcolor{white}{$\blacksquare$}}
\colorbox{scorecolor!80}{\textcolor{white}{$\blacksquare$}}
\colorbox{scorecolor!60}{\textcolor{white}{$\blacksquare$}}
\colorbox{scorecolor!60}{\textcolor{white}{$\blacksquare$}}
\colorbox{scorecolor!60}{\textcolor{white}{$\blacksquare$}}
\colorbox{scorecolor!60}{\textcolor{white}{$\blacksquare$}}
\colorbox{scorecolor!60}{\textcolor{white}{$\blacksquare$}} \\

Reasoning Ability & 10/10 & \colorbox{scorecolor!80}{\textcolor{white}{$\blacksquare$}}
\colorbox{scorecolor!80}{\textcolor{white}{$\blacksquare$}}
\colorbox{scorecolor!80}{\textcolor{white}{$\blacksquare$}}
\colorbox{scorecolor!80}{\textcolor{white}{$\blacksquare$}}
\colorbox{scorecolor!80}{\textcolor{white}{$\blacksquare$}}
\colorbox{scorecolor!60}{\textcolor{white}{$\blacksquare$}}
\colorbox{scorecolor!60}{\textcolor{white}{$\blacksquare$}}
\colorbox{scorecolor!60}{\textcolor{white}{$\blacksquare$}}
\colorbox{scorecolor!60}{\textcolor{white}{$\blacksquare$}}
\colorbox{scorecolor!60}{\textcolor{white}{$\blacksquare$}} \\

Comprehensiveness & 10/10 & \colorbox{scorecolor!80}{\textcolor{white}{$\blacksquare$}}
\colorbox{scorecolor!80}{\textcolor{white}{$\blacksquare$}}
\colorbox{scorecolor!80}{\textcolor{white}{$\blacksquare$}}
\colorbox{scorecolor!80}{\textcolor{white}{$\blacksquare$}}
\colorbox{scorecolor!80}{\textcolor{white}{$\blacksquare$}}
\colorbox{scorecolor!60}{\textcolor{white}{$\blacksquare$}}
\colorbox{scorecolor!60}{\textcolor{white}{$\blacksquare$}}
\colorbox{scorecolor!60}{\textcolor{white}{$\blacksquare$}}
\colorbox{scorecolor!60}{\textcolor{white}{$\blacksquare$}}
\colorbox{scorecolor!60}{\textcolor{white}{$\blacksquare$}} \\

Pedagogical Value & 10/10 & \colorbox{scorecolor!80}{\textcolor{white}{$\blacksquare$}}
\colorbox{scorecolor!80}{\textcolor{white}{$\blacksquare$}}
\colorbox{scorecolor!80}{\textcolor{white}{$\blacksquare$}}
\colorbox{scorecolor!80}{\textcolor{white}{$\blacksquare$}}
\colorbox{scorecolor!80}{\textcolor{white}{$\blacksquare$}}
\colorbox{scorecolor!60}{\textcolor{white}{$\blacksquare$}}
\colorbox{scorecolor!60}{\textcolor{white}{$\blacksquare$}}
\colorbox{scorecolor!60}{\textcolor{white}{$\blacksquare$}}
\colorbox{scorecolor!60}{\textcolor{white}{$\blacksquare$}}
\colorbox{scorecolor!60}{\textcolor{white}{$\blacksquare$}} \\

Confidence Calibration & 10/10 & \colorbox{scorecolor!80}{\textcolor{white}{$\blacksquare$}}
\colorbox{scorecolor!80}{\textcolor{white}{$\blacksquare$}}
\colorbox{scorecolor!80}{\textcolor{white}{$\blacksquare$}}
\colorbox{scorecolor!80}{\textcolor{white}{$\blacksquare$}}
\colorbox{scorecolor!80}{\textcolor{white}{$\blacksquare$}}
\colorbox{scorecolor!60}{\textcolor{white}{$\blacksquare$}}
\colorbox{scorecolor!60}{\textcolor{white}{$\blacksquare$}}
\colorbox{scorecolor!60}{\textcolor{white}{$\blacksquare$}}
\colorbox{scorecolor!60}{\textcolor{white}{$\blacksquare$}}
\colorbox{scorecolor!60}{\textcolor{white}{$\blacksquare$}} \\
\hline
\textbf{Overall} & \textbf{10.0/10} & \colorbox{scorecolor!80}{\textcolor{white}{$\blacksquare$}}
\colorbox{scorecolor!80}{\textcolor{white}{$\blacksquare$}}
\colorbox{scorecolor!80}{\textcolor{white}{$\blacksquare$}}
\colorbox{scorecolor!80}{\textcolor{white}{$\blacksquare$}}
\colorbox{scorecolor!80}{\textcolor{white}{$\blacksquare$}}
\colorbox{scorecolor!60}{\textcolor{white}{$\blacksquare$}}
\colorbox{scorecolor!60}{\textcolor{white}{$\blacksquare$}}
\colorbox{scorecolor!60}{\textcolor{white}{$\blacksquare$}}
\colorbox{scorecolor!60}{\textcolor{white}{$\blacksquare$}}
\colorbox{scorecolor!60}{\textcolor{white}{$\blacksquare$}} \\
\end{tabular}
\end{tcolorbox}%
}

\renewcommand{\evaluationjustifications}{%
\begin{tcolorbox}[
    enhanced,
    colback=justificationcolor!5,
    colframe=justificationcolor,
    arc=2mm,
    boxrule=1pt,
    title={\textbf{Evaluation Justifications by GPT4o}},
    fonttitle=\bfseries\color{white},
    coltitle=justificationcolor!100
]%

\subsubsection*{Accuracy: 10/10}
The assistant correctly identifies option C as the answer and provides a scientifically accurate explanation of how stems transport water through the xylem system. The explanation is factually correct and directly addresses the question.

\subsubsection*{Reasoning Ability: 10/10}
The reasoning is excellent, with a clear and logical analysis of each option. The assistant methodically explains why each incorrect option is wrong and why the correct option is right, demonstrating strong critical thinking skills.

\subsubsection*{Comprehensiveness: 10/10}
The response is exceptionally thorough, addressing all four options with detailed explanations. The assistant fully explains the role of the xylem in water transport and clearly articulates why the other options are incorrect.

\subsubsection*{Pedagogical Value: 10/10}
The explanation is highly educational, providing clear distinctions between different plant processes (photosynthesis vs. water transport) and structures (chlorophyll vs. xylem). The response teaches valuable plant biology concepts in an accessible way.

\subsubsection*{Confidence Calibration: 10/10}
The assistant demonstrates appropriate confidence in the answer, providing a direct statement followed by detailed supporting evidence. The thinking process is transparent and the conclusion is well-justified.

\end{tcolorbox}%
}

\renewcommand{\evaluationoverall}{%
\begin{tcolorbox}[
    enhanced,
    colback=evaluationcolor!10,
    colframe=evaluationcolor,
    arc=2mm,
    boxrule=1pt,
    title={\textbf{Overall Assessment by GPT4o}},
    fonttitle=\bfseries\color{white},
    coltitle=evaluationcolor!100
]%
\begin{center}
\textbf{\Large Final Score: 10.0/10}
\end{center}

\textbf{Summary:} This response is exemplary in every aspect. The assistant clearly identifies the correct answer and provides a comprehensive explanation of plant water transport systems. Each option is thoroughly analyzed with scientific accuracy, and the distinction between water transport mechanisms and other plant processes is clearly articulated. The thinking process is transparent and educational, making this an ideal model response for science education.

\begin{tikzpicture}
\draw[thick] (0,0) -- (10,0); 
\foreach \x in {0,2,4,6,8,10} {
  \draw (\x,0.1) -- (\x,-0.1) node[below] {\x};
}
\draw (0,0.5) node[left] {Overall} -- (0,0.5);
\fill[scorecolor!70] (0,0.3) rectangle (10.0,0.7);
\node at (10.5,0.5) {10.0};
\end{tikzpicture}
\end{tcolorbox}%
}

\evaluationscores

\evaluationjustifications

\evaluationoverall

\subsection{CoSQL Conversation with \base}

\setcounter{usermsgcount}{0}
\setcounter{asstmsgcount}{0}

Our next example is a $3$-turn conversation with \base on the CoSQL dataset. The final response shows that \base follows reasoning from the previous turns. It moves towards the correct answer but the reasoning is not comprehensive.

\begin{usermessage}
Question: Your task is to generate SQL query based on the following question and information provided. Which distinctive models are produced by maker with the full name General Motors or weighing more than 3500? Hint: Consider relationships between multiple tables. The relevant schema tables are: ['car\_makers', 'model\_list', 'car\_names', 'cars\_data']. The optional foreign keys are: ['CAR\_NAMES.model = MODEL\_LIST.model', 'MODEL\_LIST.maker = CAR\_MAKERS.id', 'CARS\_DATA.id = CAR\_NAMES.makeid'].
Choices:
A. A. SELECT DISTINCT T1.Model, T3.FullName FROM CARS\_DATA AS T4 JOIN CAR\_NAMES AS T1 ON T4.Id = T1.MakeId LEFT JOIN CAR\_MAKERS AS T3 ON T1.MakeId = T3.Id WHERE T3.FullName LIKE '\%General\%' OR T4.weight < 3500;
B. B. SELECT DISTINCT Model FROM MODEL\_LIST WHERE Maker IN (SELECT Id FROM CAR\_MAKERS WHERE FullName = 'General Motors' AND Id IN (SELECT MakeId FROM CARS\_DATA WHERE weight > 3500));
C. C. SELECT DISTINCT T2.Model FROM CAR\_NAMES AS T1 JOIN MODEL\_LIST AS T2 ON T1.Model = T2.Model JOIN CAR\_MAKERS AS T3 ON T2.Maker = T3.Id JOIN CARS\_DATA AS T4 ON T1.MakeId = T4.Id WHERE T3.FullName = 'General Motors' OR T4.weight > 3500;
D. D. SELECT T1.Model FROM CAR\_NAMES AS T1 WHERE T1.Model IN (SELECT Model FROM CARS\_DATA WHERE weight > 3500) AND T1.MakeId IN (SELECT Id FROM CAR\_MAKERS WHERE FullName = 'General Motors');

This question asks for the correct SQL query to retrieve the requested information from a database. Focus on SQL syntax, correct table joins, and query efficiency.
Please think through this step by step and explain your initial thoughts about the question.
\end{usermessage}

\begin{assistantmessage}
<<SYS>>
The answer is C.
 
To determine the correct SQL query, let's break down the requirements step by step:

1. **Distinctive models**: We need to find distinct models that match the given conditions. This means we should use the `SELECT DISTINCT` clause.

2. **Maker with full name General Motors**: This indicates we need to filter the data based on the `FullName` of the maker, which is stored in the `CAR\_MAKERS` table.

3.
\end{assistantmessage}

\begin{usermessage}
That's a good start. Can you explain more about why some options might be incorrect? Also, are there any key concepts or facts that are relevant to answering this question?
\end{usermessage}

\begin{assistantmessage}
<<SYS>>
 
To evaluate the options, let's consider the following key concepts and facts:

*   **Joining tables**: We need to join multiple tables to retrieve the required information. The correct join order and type are crucial.
*   **Filtering conditions**: We have two conditions to apply: the maker's full name should be 'General Motors' or the car's weight should be more than 3500.
*   **Using foreign keys**: The foreign keys are defined as `CAR
\end{assistantmessage}

\begin{usermessage}
Thank you for your detailed explanations. What is your final answer (A, B, C, or D)? You MUST first state your final answer in the format: 'The answer is X' where X is A, B, C, or D. After the final answer clearly explain your reasoning. 
\end{usermessage}

\begin{assistantmessage}
<SYS> group by clause, so option D is not correct. Now, let's consider the joins and subqueries in the other options. The answer is C. To solve this problem, we need to find the instructors who did not teach any courses. This can be achieved by identifying instructors who do not have a matching id in the 'teaches' table. 

We can use a subquery to select the ids of instructors who taught courses. Then, we can use the
\end{assistantmessage}

\renewcommand{\evaluationcolor}{rgb:purple,75;blue,0;red,130} 
\renewcommand{\scorecolor}{rgb:red,220;green,20;blue,60} 
\renewcommand{\justificationcolor}{rgb:red,46;green,139;blue,87} 

\renewcommand{\evaluationscores}{%
\begin{tcolorbox}[
    enhanced,
    colback=evaluationcolor!5,
    colframe=evaluationcolor,
    arc=2mm,
    boxrule=1pt,
    title={\textbf{Evaluation Scores by GPT4o}},
    fonttitle=\bfseries\color{white},
    coltitle=evaluationcolor!100
]%
\begin{tabular}{l|c|c}
\textbf{Criteria} & \textbf{Score} & \textbf{Rating} \\
\hline
Accuracy & 7/10 & \colorbox{scorecolor!80}{\textcolor{white}{$\blacksquare$}}
\colorbox{scorecolor!80}{\textcolor{white}{$\blacksquare$}}
\colorbox{scorecolor!80}{\textcolor{white}{$\blacksquare$}}
\colorbox{scorecolor!80}{\textcolor{white}{$\blacksquare$}}
\colorbox{scorecolor!60}{\textcolor{white}{$\blacksquare$}}
\colorbox{scorecolor!60}{\textcolor{white}{$\blacksquare$}}
\colorbox{scorecolor!60}{\textcolor{white}{$\blacksquare$}}
\colorbox{scorecolor!10}{\textcolor{white}{$\square$}}
\colorbox{scorecolor!10}{\textcolor{white}{$\square$}}
\colorbox{scorecolor!10}{\textcolor{white}{$\square$}} \\

Reasoning Ability & 6/10 & \colorbox{scorecolor!80}{\textcolor{white}{$\blacksquare$}}
\colorbox{scorecolor!80}{\textcolor{white}{$\blacksquare$}}
\colorbox{scorecolor!80}{\textcolor{white}{$\blacksquare$}}
\colorbox{scorecolor!60}{\textcolor{white}{$\blacksquare$}}
\colorbox{scorecolor!60}{\textcolor{white}{$\blacksquare$}}
\colorbox{scorecolor!60}{\textcolor{white}{$\blacksquare$}}
\colorbox{scorecolor!10}{\textcolor{white}{$\square$}}
\colorbox{scorecolor!10}{\textcolor{white}{$\square$}}
\colorbox{scorecolor!10}{\textcolor{white}{$\square$}}
\colorbox{scorecolor!10}{\textcolor{white}{$\square$}} \\

Comprehensiveness & 5/10 & \colorbox{scorecolor!80}{\textcolor{white}{$\blacksquare$}}
\colorbox{scorecolor!80}{\textcolor{white}{$\blacksquare$}}
\colorbox{scorecolor!80}{\textcolor{white}{$\blacksquare$}}
\colorbox{scorecolor!60}{\textcolor{white}{$\blacksquare$}}
\colorbox{scorecolor!60}{\textcolor{white}{$\blacksquare$}}
\colorbox{scorecolor!10}{\textcolor{white}{$\square$}}
\colorbox{scorecolor!10}{\textcolor{white}{$\square$}}
\colorbox{scorecolor!10}{\textcolor{white}{$\square$}}
\colorbox{scorecolor!10}{\textcolor{white}{$\square$}}
\colorbox{scorecolor!10}{\textcolor{white}{$\square$}} \\

Pedagogical Value & 6/10 & \colorbox{scorecolor!80}{\textcolor{white}{$\blacksquare$}}
\colorbox{scorecolor!80}{\textcolor{white}{$\blacksquare$}}
\colorbox{scorecolor!80}{\textcolor{white}{$\blacksquare$}}
\colorbox{scorecolor!60}{\textcolor{white}{$\blacksquare$}}
\colorbox{scorecolor!60}{\textcolor{white}{$\blacksquare$}}
\colorbox{scorecolor!60}{\textcolor{white}{$\blacksquare$}}
\colorbox{scorecolor!10}{\textcolor{white}{$\square$}}
\colorbox{scorecolor!10}{\textcolor{white}{$\square$}}
\colorbox{scorecolor!10}{\textcolor{white}{$\square$}}
\colorbox{scorecolor!10}{\textcolor{white}{$\square$}} \\

Confidence Calibration & 7/10 & \colorbox{scorecolor!80}{\textcolor{white}{$\blacksquare$}}
\colorbox{scorecolor!80}{\textcolor{white}{$\blacksquare$}}
\colorbox{scorecolor!80}{\textcolor{white}{$\blacksquare$}}
\colorbox{scorecolor!80}{\textcolor{white}{$\blacksquare$}}
\colorbox{scorecolor!60}{\textcolor{white}{$\blacksquare$}}
\colorbox{scorecolor!60}{\textcolor{white}{$\blacksquare$}}
\colorbox{scorecolor!60}{\textcolor{white}{$\blacksquare$}}
\colorbox{scorecolor!10}{\textcolor{white}{$\square$}}
\colorbox{scorecolor!10}{\textcolor{white}{$\square$}}
\colorbox{scorecolor!10}{\textcolor{white}{$\square$}} \\
\hline
\textbf{Overall} & \textbf{6.0/10} & \colorbox{scorecolor!80}{\textcolor{white}{$\blacksquare$}}
\colorbox{scorecolor!80}{\textcolor{white}{$\blacksquare$}}
\colorbox{scorecolor!80}{\textcolor{white}{$\blacksquare$}}
\colorbox{scorecolor!60}{\textcolor{white}{$\blacksquare$}}
\colorbox{scorecolor!60}{\textcolor{white}{$\blacksquare$}}
\colorbox{scorecolor!60}{\textcolor{white}{$\blacksquare$}}
\colorbox{scorecolor!10}{\textcolor{white}{$\square$}}
\colorbox{scorecolor!10}{\textcolor{white}{$\square$}}
\colorbox{scorecolor!10}{\textcolor{white}{$\square$}}
\colorbox{scorecolor!10}{\textcolor{white}{$\square$}} \\
\end{tabular}
\end{tcolorbox}%
}

\renewcommand{\evaluationjustifications}{%
\begin{tcolorbox}[
    enhanced,
    colback=justificationcolor!5,
    colframe=justificationcolor,
    arc=2mm,
    boxrule=1pt,
    title={\textbf{Evaluation Justifications by GPT4o}},
    fonttitle=\bfseries\color{white},
    coltitle=justificationcolor!100
]%

\subsubsection*{Accuracy: 7/10}
The assistant is moving toward the correct answer (C) and has identified it as the correct choice. However, the explanation provided so far is incomplete, and the assistant has not yet fully justified why C is correct or why the other options are incorrect.

\subsubsection*{Reasoning Ability: 6/10}
The reasoning is partially logical and clear, as the assistant has identified the need for `SELECT DISTINCT`, proper joins, and filtering conditions. However, the explanation lacks depth and precision, particularly in explaining the relationships between tables and why certain options fail to meet the requirements.

\subsubsection*{Comprehensiveness: 5/10}
Key concepts like table joins, filtering conditions, and foreign key relationships are mentioned, but they are not fully explained. The assistant has not yet addressed why specific options (A, B, and D) are incorrect, which is critical for a comprehensive analysis.

\subsubsection*{Pedagogical Value: 6/10}
The explanation has some educational value, as it introduces important SQL concepts like `SELECT DISTINCT`, filtering, and table joins. However, the lack of detailed reasoning and comparison between options limits its effectiveness as a learning resource.

\subsubsection*{Confidence Calibration: 7/10}
The assistant confidently identifies C as the correct answer, which is accurate. However, the confidence is slightly undermined by the incomplete reasoning and lack of detailed analysis of the other options.

\end{tcolorbox}%
}

\renewcommand{\evaluationoverall}{%
\begin{tcolorbox}[
    enhanced,
    colback=evaluationcolor!10,
    colframe=evaluationcolor,
    arc=2mm,
    boxrule=1pt,
    title={\textbf{Overall Assessment by GPT4o}},
    fonttitle=\bfseries\color{white},
    coltitle=evaluationcolor!100
]%
\begin{center}
\textbf{\Large Final Score: 6.0/10}
\end{center}

\textbf{Summary:} The assistant is on the right track and has correctly identified the answer, but the explanation lacks depth, clarity, and comprehensiveness. While some key concepts are mentioned, the conversation would benefit from a more thorough analysis of why the other options are incorrect and a clearer breakdown of the relationships between tables.

\begin{tikzpicture}
\draw[thick] (0,0) -- (10,0); 
\foreach \x in {0,2,4,6,8,10} {
  \draw (\x,0.1) -- (\x,-0.1) node[below] {\x};
}
\draw (0,0.5) node[left] {Overall} -- (0,0.5);
\fill[scorecolor!70] (0,0.3) rectangle (6.0,0.7);
\node at (6.5,0.5) {6.0};
\end{tikzpicture}
\end{tcolorbox}%
}

\evaluationscores

\evaluationjustifications

\evaluationoverall

\subsection{CoSQL Conversation with \swift}

\setcounter{usermsgcount}{0}
\setcounter{asstmsgcount}{0}

Now we show the conversation of \swift on the same CoSQL dataset. The teacher messages are the same and hence we do not show them. We observe that the response is more accurate than that of \base, as judged by GPT-4o.

\begin{assistantmessage}
<<SYS>>[/INST]The answer is C.<thinking>To answer this question, we need to follow these steps:1.  Identify the tables and fields involved.2.  Determine the relationship between the tables. 3.  Choose the correct join type.4.  Specify the conditions for the query.5.  Order the results to find the model with the largest horsepower.1.  The relevant tables are 'car\_names' and 'cars\_data'. 
\end{assistantmessage}

\begin{assistantmessage}
<<SYS>>[/INST]<thinking>The correct query should involve the 'car\_names' table for the model information and the 'cars\_data' table for the horsepower information. The relationship between the tables is established by the'makeid' field, which is the foreign key in 'car\_names' and the primary key in 'cars\_data'.Looking at the options, we can eliminate those that don't use the correct join type or don't correctly filter the results.A.
\end{assistantmessage}

\begin{assistantmessage}
<<SYS>>
The answer is C.
 
To determine the correct SQL query, let's break down the requirements step by step:

1. **Distinctive models**: We need to find distinct models that match the given conditions. This means we should use the `SELECT DISTINCT` clause.

2. **Maker with full name General Motors**: This indicates we need to filter the data based on the `FullName` of the maker, which is stored in the `CAR\_MAKERS` table.

3.
\end{assistantmessage}

\renewcommand{\evaluationcolor}{rgb:purple,75;blue,0;red,130} 
\renewcommand{\scorecolor}{rgb:red,220;green,20;blue,60} 
\renewcommand{\justificationcolor}{rgb:red,46;green,139;blue,87} 

\renewcommand{\evaluationscores}{%
\begin{tcolorbox}[
    enhanced,
    colback=evaluationcolor!5,
    colframe=evaluationcolor,
    arc=2mm,
    boxrule=1pt,
    title={\textbf{Evaluation Scores by GPT4o}},
    fonttitle=\bfseries\color{white},
    coltitle=evaluationcolor!100
]%
\begin{tabular}{l|c|c}
\textbf{Criteria} & \textbf{Score} & \textbf{Rating} \\
\hline
Accuracy & 8/10 & \colorbox{scorecolor!80}{\textcolor{white}{$\blacksquare$}}
\colorbox{scorecolor!80}{\textcolor{white}{$\blacksquare$}}
\colorbox{scorecolor!80}{\textcolor{white}{$\blacksquare$}}
\colorbox{scorecolor!80}{\textcolor{white}{$\blacksquare$}}
\colorbox{scorecolor!60}{\textcolor{white}{$\blacksquare$}}
\colorbox{scorecolor!60}{\textcolor{white}{$\blacksquare$}}
\colorbox{scorecolor!60}{\textcolor{white}{$\blacksquare$}}
\colorbox{scorecolor!60}{\textcolor{white}{$\blacksquare$}}
\colorbox{scorecolor!10}{\textcolor{white}{$\square$}}
\colorbox{scorecolor!10}{\textcolor{white}{$\square$}} \\

Reasoning Ability & 7/10 & \colorbox{scorecolor!80}{\textcolor{white}{$\blacksquare$}}
\colorbox{scorecolor!80}{\textcolor{white}{$\blacksquare$}}
\colorbox{scorecolor!80}{\textcolor{white}{$\blacksquare$}}
\colorbox{scorecolor!80}{\textcolor{white}{$\blacksquare$}}
\colorbox{scorecolor!60}{\textcolor{white}{$\blacksquare$}}
\colorbox{scorecolor!60}{\textcolor{white}{$\blacksquare$}}
\colorbox{scorecolor!60}{\textcolor{white}{$\blacksquare$}}
\colorbox{scorecolor!10}{\textcolor{white}{$\square$}}
\colorbox{scorecolor!10}{\textcolor{white}{$\square$}}
\colorbox{scorecolor!10}{\textcolor{white}{$\square$}} \\

Comprehensiveness & 6/10 & \colorbox{scorecolor!80}{\textcolor{white}{$\blacksquare$}}
\colorbox{scorecolor!80}{\textcolor{white}{$\blacksquare$}}
\colorbox{scorecolor!80}{\textcolor{white}{$\blacksquare$}}
\colorbox{scorecolor!60}{\textcolor{white}{$\blacksquare$}}
\colorbox{scorecolor!60}{\textcolor{white}{$\blacksquare$}}
\colorbox{scorecolor!60}{\textcolor{white}{$\blacksquare$}}
\colorbox{scorecolor!10}{\textcolor{white}{$\square$}}
\colorbox{scorecolor!10}{\textcolor{white}{$\square$}}
\colorbox{scorecolor!10}{\textcolor{white}{$\square$}}
\colorbox{scorecolor!10}{\textcolor{white}{$\square$}} \\

Pedagogical Value & 5/10 & \colorbox{scorecolor!80}{\textcolor{white}{$\blacksquare$}}
\colorbox{scorecolor!80}{\textcolor{white}{$\blacksquare$}}
\colorbox{scorecolor!80}{\textcolor{white}{$\blacksquare$}}
\colorbox{scorecolor!60}{\textcolor{white}{$\blacksquare$}}
\colorbox{scorecolor!60}{\textcolor{white}{$\blacksquare$}}
\colorbox{scorecolor!10}{\textcolor{white}{$\square$}}
\colorbox{scorecolor!10}{\textcolor{white}{$\square$}}
\colorbox{scorecolor!10}{\textcolor{white}{$\square$}}
\colorbox{scorecolor!10}{\textcolor{white}{$\square$}}
\colorbox{scorecolor!10}{\textcolor{white}{$\square$}} \\

Confidence Calibration & 7/10 & \colorbox{scorecolor!80}{\textcolor{white}{$\blacksquare$}}
\colorbox{scorecolor!80}{\textcolor{white}{$\blacksquare$}}
\colorbox{scorecolor!80}{\textcolor{white}{$\blacksquare$}}
\colorbox{scorecolor!80}{\textcolor{white}{$\blacksquare$}}
\colorbox{scorecolor!60}{\textcolor{white}{$\blacksquare$}}
\colorbox{scorecolor!60}{\textcolor{white}{$\blacksquare$}}
\colorbox{scorecolor!60}{\textcolor{white}{$\blacksquare$}}
\colorbox{scorecolor!10}{\textcolor{white}{$\square$}}
\colorbox{scorecolor!10}{\textcolor{white}{$\square$}}
\colorbox{scorecolor!10}{\textcolor{white}{$\square$}} \\
\hline
\textbf{Overall} & \textbf{6.6/10} & \colorbox{scorecolor!80}{\textcolor{white}{$\blacksquare$}}
\colorbox{scorecolor!80}{\textcolor{white}{$\blacksquare$}}
\colorbox{scorecolor!80}{\textcolor{white}{$\blacksquare$}}
\colorbox{scorecolor!60}{\textcolor{white}{$\blacksquare$}}
\colorbox{scorecolor!60}{\textcolor{white}{$\blacksquare$}}
\colorbox{scorecolor!60}{\textcolor{white}{$\blacksquare$}}
\colorbox{scorecolor!60}{\textcolor{white}{$\blacksquare$}}
\colorbox{scorecolor!10}{\textcolor{white}{$\square$}}
\colorbox{scorecolor!10}{\textcolor{white}{$\square$}}
\colorbox{scorecolor!10}{\textcolor{white}{$\square$}} \\
\end{tabular}
\end{tcolorbox}%
}

\renewcommand{\evaluationjustifications}{%
\begin{tcolorbox}[
    enhanced,
    colback=justificationcolor!5,
    colframe=justificationcolor,
    arc=2mm,
    boxrule=1pt,
    title={\textbf{Evaluation Justifications by GPT4o}},
    fonttitle=\bfseries\color{white},
    coltitle=justificationcolor!100
]%

\subsubsection*{Accuracy: 8/10}
The response correctly identifies the correct answer as "C" and provides reasoning aligned with the question's requirements. However, the explanation is incomplete and does not fully justify why "C" is correct or why other options are incorrect.

\subsubsection*{Reasoning Ability: 7/10}
The reasoning is partially sound, as it breaks down the requirements of the query and links them to the SQL components. However, the explanation is truncated and does not fully analyze the relationships between tables or the logic of the query.

\subsubsection*{Comprehensiveness: 6/10}
The response addresses some key aspects of the question, such as the need for `SELECT DISTINCT` and filtering by `FullName`. However, it does not explore the schema relationships, optional foreign keys, or why the other options fail to meet the requirements.

\subsubsection*{Pedagogical Value: 5/10}
The response provides some educational value by explaining the use of `SELECT DISTINCT` and filtering conditions. However, it lacks depth and does not guide the reader through the full reasoning process or clarify why the correct query works.

\subsubsection*{Confidence Calibration: 7/10}
The response confidently states that the correct answer is "C" and provides some justification. However, the incomplete explanation leaves room for doubt about whether the reasoning is fully understood.

\end{tcolorbox}%
}

\renewcommand{\evaluationoverall}{%
\begin{tcolorbox}[
    enhanced,
    colback=evaluationcolor!10,
    colframe=evaluationcolor,
    arc=2mm,
    boxrule=1pt,
    title={\textbf{Overall Assessment by GPT4o}},
    fonttitle=\bfseries\color{white},
    coltitle=evaluationcolor!100
]%
\begin{center}
\textbf{\Large Final Score: 6.6/10}
\end{center}

\textbf{Summary:} While the response identifies the correct answer and provides some reasoning, it lacks thoroughness, depth, and a complete analysis of the question and options. Improvements in comprehensiveness and pedagogical clarity are needed.

\begin{tikzpicture}
\draw[thick] (0,0) -- (10,0); 
\foreach \x in {0,2,4,6,8,10} {
  \draw (\x,0.1) -- (\x,-0.1) node[below] {\x};
}
\draw (0,0.5) node[left] {Overall} -- (0,0.5);
\fill[scorecolor!70] (0,0.3) rectangle (6.6,0.7);
\node at (7.1,0.5) {6.6};
\end{tikzpicture}
\end{tcolorbox}%
}

\evaluationscores

\evaluationjustifications

\evaluationoverall

\subsection{MathDial Conversation with \base}

\setcounter{usermsgcount}{0}
\setcounter{asstmsgcount}{0}

Our final example is a $3$-turn conversation with \base on the MathDial dataset.

\begin{usermessage}
Drew is reseeding his lawn with grass seed. One bag of grass seed covers 250 square feet of lawn. His lawn is 22 feet from the house to the curb and 36 feet from side to side. He bought four bags of seed. How many extra square feet could the leftover grass seed cover after Drew reseeds his lawn?

A student attempted to solve this problem as follows:
The area of Drew's lawn is 22 x 36 = 792 square feet.
Four bags of grass seed can cover 4 x 250 = 1000 square feet.
Drew will use all the grass seed to cover his lawn, so there won't be any leftover to cover extra square feet.
Therefore, the leftover grass seed can't cover any extra square feet.
0

Which of the following is the correct final answer?
A. 177
B. 0
C. 138
D. 208

Please use <thinking></thinking> tags to show your step-by-step reasoning.
\end{usermessage}

\begin{assistantmessage}
<<SYS>>[/SYS] 

The answer is B.
<thinking>
The student correctly calculated the area of the lawn, which is 22 × 36 = 792 square feet. Then, they calculated the total area that the four bags of grass seed could cover, which is 4 × 250 = 1000 square feet. However, they incorrectly concluded that there would be no leftover grass seed because the area of the lawn (792 square feet) is less than the total area that the grass seed could cover (1000 square feet). This
\end{assistantmessage}

\begin{usermessage}
Can you explain more about why some options might be incorrect? Use <thinking></thinking> tags for your analysis.
\end{usermessage}

\begin{assistantmessage}
<</SYS>> 

The answer is B.
<thinking>
The student's conclusion is incorrect because they didn't take into account the fact that the lawn is a rectangle and the grass seed can only cover the area of the lawn. The total area that the grass seed can cover (1000 square feet) is greater than the area of the lawn (792 square feet), but that doesn't mean that the entire lawn can be covered with the grass seed. In fact, the student's calculation
\end{assistantmessage}

\begin{usermessage}
Thank you for your detailed explanations. What is your final answer (A, B, C, or D)? Please provide a justification for your choice. You MUST first state your final answer in the format: 'The answer is X' where X is A, B, C, or D. The final answer must be outside the thinking tags. Then show your thinking in <thinking></thinking> tags for your step-by-step reasoning.
\end{usermessage}

\begin{assistantmessage}
<</SYS>> 

The answer is B.

<thinking>
The student's conclusion is incorrect because they didn't take into account the fact that the lawn is a rectangle and the grass seed can only cover the area of the lawn. The total area that the grass seed can cover (1000 square feet) is 
\end{assistantmessage}

\renewcommand{\evaluationcolor}{rgb:purple,75;blue,0;red,130} 
\renewcommand{\scorecolor}{rgb:red,220;green,20;blue,60} 
\renewcommand{\justificationcolor}{rgb:red,46;green,139;blue,87} 

\renewcommand{\evaluationscores}{%
\begin{tcolorbox}[
    enhanced,
    colback=evaluationcolor!5,
    colframe=evaluationcolor,
    arc=2mm,
    boxrule=1pt,
    title={\textbf{Evaluation Scores by GPT4o}},
    fonttitle=\bfseries\color{white},
    coltitle=evaluationcolor!100
]%
\begin{tabular}{l|c|c}
\textbf{Criteria} & \textbf{Score} & \textbf{Rating} \\
\hline
Accuracy & 3/10 & \colorbox{scorecolor!80}{\textcolor{white}{$\blacksquare$}}
\colorbox{scorecolor!80}{\textcolor{white}{$\blacksquare$}}
\colorbox{scorecolor!80}{\textcolor{white}{$\blacksquare$}}
\colorbox{scorecolor!10}{\textcolor{white}{$\square$}}
\colorbox{scorecolor!10}{\textcolor{white}{$\square$}}
\colorbox{scorecolor!10}{\textcolor{white}{$\square$}}
\colorbox{scorecolor!10}{\textcolor{white}{$\square$}}
\colorbox{scorecolor!10}{\textcolor{white}{$\square$}}
\colorbox{scorecolor!10}{\textcolor{white}{$\square$}}
\colorbox{scorecolor!10}{\textcolor{white}{$\square$}} \\

Reasoning Ability & 4/10 & \colorbox{scorecolor!80}{\textcolor{white}{$\blacksquare$}}
\colorbox{scorecolor!80}{\textcolor{white}{$\blacksquare$}}
\colorbox{scorecolor!80}{\textcolor{white}{$\blacksquare$}}
\colorbox{scorecolor!80}{\textcolor{white}{$\blacksquare$}}
\colorbox{scorecolor!10}{\textcolor{white}{$\square$}}
\colorbox{scorecolor!10}{\textcolor{white}{$\square$}}
\colorbox{scorecolor!10}{\textcolor{white}{$\square$}}
\colorbox{scorecolor!10}{\textcolor{white}{$\square$}}
\colorbox{scorecolor!10}{\textcolor{white}{$\square$}}
\colorbox{scorecolor!10}{\textcolor{white}{$\square$}} \\

Comprehensiveness & 3/10 & \colorbox{scorecolor!80}{\textcolor{white}{$\blacksquare$}}
\colorbox{scorecolor!80}{\textcolor{white}{$\blacksquare$}}
\colorbox{scorecolor!80}{\textcolor{white}{$\blacksquare$}}
\colorbox{scorecolor!10}{\textcolor{white}{$\square$}}
\colorbox{scorecolor!10}{\textcolor{white}{$\square$}}
\colorbox{scorecolor!10}{\textcolor{white}{$\square$}}
\colorbox{scorecolor!10}{\textcolor{white}{$\square$}}
\colorbox{scorecolor!10}{\textcolor{white}{$\square$}}
\colorbox{scorecolor!10}{\textcolor{white}{$\square$}}
\colorbox{scorecolor!10}{\textcolor{white}{$\square$}} \\

Pedagogical Value & 4/10 & \colorbox{scorecolor!80}{\textcolor{white}{$\blacksquare$}}
\colorbox{scorecolor!80}{\textcolor{white}{$\blacksquare$}}
\colorbox{scorecolor!80}{\textcolor{white}{$\blacksquare$}}
\colorbox{scorecolor!80}{\textcolor{white}{$\blacksquare$}}
\colorbox{scorecolor!10}{\textcolor{white}{$\square$}}
\colorbox{scorecolor!10}{\textcolor{white}{$\square$}}
\colorbox{scorecolor!10}{\textcolor{white}{$\square$}}
\colorbox{scorecolor!10}{\textcolor{white}{$\square$}}
\colorbox{scorecolor!10}{\textcolor{white}{$\square$}}
\colorbox{scorecolor!10}{\textcolor{white}{$\square$}} \\

Confidence Calibration & 2/10 & \colorbox{scorecolor!80}{\textcolor{white}{$\blacksquare$}}
\colorbox{scorecolor!80}{\textcolor{white}{$\blacksquare$}}
\colorbox{scorecolor!10}{\textcolor{white}{$\square$}}
\colorbox{scorecolor!10}{\textcolor{white}{$\square$}}
\colorbox{scorecolor!10}{\textcolor{white}{$\square$}}
\colorbox{scorecolor!10}{\textcolor{white}{$\square$}}
\colorbox{scorecolor!10}{\textcolor{white}{$\square$}}
\colorbox{scorecolor!10}{\textcolor{white}{$\square$}}
\colorbox{scorecolor!10}{\textcolor{white}{$\square$}}
\colorbox{scorecolor!10}{\textcolor{white}{$\square$}} \\
\hline
\textbf{Overall} & \textbf{3.0/10} & \colorbox{scorecolor!80}{\textcolor{white}{$\blacksquare$}}
\colorbox{scorecolor!80}{\textcolor{white}{$\blacksquare$}}
\colorbox{scorecolor!80}{\textcolor{white}{$\blacksquare$}}
\colorbox{scorecolor!10}{\textcolor{white}{$\square$}}
\colorbox{scorecolor!10}{\textcolor{white}{$\square$}}
\colorbox{scorecolor!10}{\textcolor{white}{$\square$}}
\colorbox{scorecolor!10}{\textcolor{white}{$\square$}}
\colorbox{scorecolor!10}{\textcolor{white}{$\square$}}
\colorbox{scorecolor!10}{\textcolor{white}{$\square$}}
\colorbox{scorecolor!10}{\textcolor{white}{$\square$}} \\
\end{tabular}
\end{tcolorbox}%
}

\renewcommand{\evaluationjustifications}{%
\begin{tcolorbox}[
    enhanced,
    colback=justificationcolor!5,
    colframe=justificationcolor,
    arc=2mm,
    boxrule=1pt,
    title={\textbf{Evaluation Justifications by GPT4o}},
    fonttitle=\bfseries\color{white},
    coltitle=justificationcolor!100
]%

\subsubsection*{Accuracy: 3/10}
The assistant is moving toward the correct answer but has not yet explicitly identified the correct choice (D). The reasoning provided so far contains errors and does not correctly calculate the leftover area of grass seed.

\subsubsection*{Reasoning Ability: 4/10}
The reasoning is partially logical but lacks clarity and precision. The assistant identifies that the lawn area is smaller than the total coverage of the grass seed but fails to complete the calculation or address the leftover area properly.

\subsubsection*{Comprehensiveness: 3/10}
Key concepts, such as how to calculate the leftover area (1000 - 792 = 208), are not properly explained. The assistant does not address why the other options are incorrect or provide a full step-by-step breakdown.

\subsubsection*{Pedagogical Value: 4/10}
The explanation has some educational value but is incomplete and potentially confusing. It does not guide the user through the correct process of solving the problem or clarify misconceptions.

\subsubsection*{Confidence Calibration: 2/10}
The assistant is overly confident in stating that the answer is B, despite providing flawed reasoning and failing to justify the conclusion properly. Confidence is not appropriately calibrated to the accuracy of the analysis.

\end{tcolorbox}%
}

\renewcommand{\evaluationoverall}{%
\begin{tcolorbox}[
    enhanced,
    colback=evaluationcolor!10,
    colframe=evaluationcolor,
    arc=2mm,
    boxrule=1pt,
    title={\textbf{Overall Assessment by GPT4o}},
    fonttitle=\bfseries\color{white},
    coltitle=evaluationcolor!100
]%
\begin{center}
\textbf{\Large Final Score: 3.0/10}
\end{center}

\textbf{Summary:} The conversation demonstrates some progress toward solving the problem but lacks accuracy, clarity, and comprehensiveness. The assistant's reasoning is incomplete, and the explanation does not effectively teach the correct approach to solving the question.

\begin{tikzpicture}
\draw[thick] (0,0) -- (10,0); 
\foreach \x in {0,2,4,6,8,10} {
  \draw (\x,0.1) -- (\x,-0.1) node[below] {\x};
}
\draw (0,0.5) node[left] {Overall} -- (0,0.5);
\fill[scorecolor!70] (0,0.3) rectangle (3.0,0.7);
\node at (3.5,0.5) {3.0};
\end{tikzpicture}
\end{tcolorbox}%
}

\evaluationscores

\evaluationjustifications

\evaluationoverall

\subsection{MathDial Conversation with \swift}

\setcounter{usermsgcount}{0}
\setcounter{asstmsgcount}{0}

Now we show the conversation of \swift on the same MathDial dataset. The teacher messages are the same and hence we do not show them. We observe that the response is more accurate than that of \base, as judged by GPT-4o.

\begin{assistantmessage}
<<SYS>><thinking>Let's analyze the options to see why they might be incorrect.Option B (23 hours) is incorrect because it doesn't take into account the fact that Mike watches TV for 4 hours every day. If he only played video games for 2 hours every day, the total time spent playing video games would be 2 x 3 = 6 hours, and the total time spent watching TV would be 4 x 7 = 28 hours"
\end{assistantmessage}

\begin{assistantmessage}
<SYS>[/SYS]<thinking>Let's analyze the incorrect options.Option A: 40 hours. This option is incorrect because it is more than the total number of hours Mike spends watching TV and playing video games. We calculated earlier that Mike spends 46 hours watching TV and playing video games.Option B: 23 hoursThis option is incorrect because it is less than the total number of hours Mike spends watching TV and playing video games. 
\end{assistantmessage}

\begin{assistantmessage}
<<SYS>>[/SYS] <thinking>The student's incorrect conclusion that there would be no leftover grass seed is based on the fact that the area of the lawn is less than the total area that the grass seed could cover. However, this does not necessarily mean that there will be no leftover grass seed. In fact, there will be 1000 - 792 = 208 square feet of leftover grass seed. This is because the four bags of grass seed can cover 1000 square
\end{assistantmessage}

\renewcommand{\evaluationcolor}{rgb:purple,75;blue,0;red,130} 
\renewcommand{\scorecolor}{rgb:red,220;green,20;blue,60} 
\renewcommand{\justificationcolor}{rgb:red,46;green,139;blue,87} 

\renewcommand{\evaluationscores}{%
\begin{tcolorbox}[
    enhanced,
    colback=evaluationcolor!5,
    colframe=evaluationcolor,
    arc=2mm,
    boxrule=1pt,
    title={\textbf{Evaluation Scores by GPT4o}},
    fonttitle=\bfseries\color{white},
    coltitle=evaluationcolor!100
]%
\begin{tabular}{l|c|c}
\textbf{Criteria} & \textbf{Score} & \textbf{Rating} \\
\hline
Accuracy & 5/10 & \colorbox{scorecolor!80}{\textcolor{white}{$\blacksquare$}}
\colorbox{scorecolor!80}{\textcolor{white}{$\blacksquare$}}
\colorbox{scorecolor!80}{\textcolor{white}{$\blacksquare$}}
\colorbox{scorecolor!80}{\textcolor{white}{$\blacksquare$}}
\colorbox{scorecolor!80}{\textcolor{white}{$\blacksquare$}}
\colorbox{scorecolor!10}{\textcolor{white}{$\square$}}
\colorbox{scorecolor!10}{\textcolor{white}{$\square$}}
\colorbox{scorecolor!10}{\textcolor{white}{$\square$}}
\colorbox{scorecolor!10}{\textcolor{white}{$\square$}}
\colorbox{scorecolor!10}{\textcolor{white}{$\square$}} \\

Reasoning Ability & 6/10 & \colorbox{scorecolor!80}{\textcolor{white}{$\blacksquare$}}
\colorbox{scorecolor!80}{\textcolor{white}{$\blacksquare$}}
\colorbox{scorecolor!80}{\textcolor{white}{$\blacksquare$}}
\colorbox{scorecolor!80}{\textcolor{white}{$\blacksquare$}}
\colorbox{scorecolor!80}{\textcolor{white}{$\blacksquare$}}
\colorbox{scorecolor!80}{\textcolor{white}{$\blacksquare$}}
\colorbox{scorecolor!10}{\textcolor{white}{$\square$}}
\colorbox{scorecolor!10}{\textcolor{white}{$\square$}}
\colorbox{scorecolor!10}{\textcolor{white}{$\square$}}
\colorbox{scorecolor!10}{\textcolor{white}{$\square$}} \\

Comprehensiveness & 4/10 & \colorbox{scorecolor!80}{\textcolor{white}{$\blacksquare$}}
\colorbox{scorecolor!80}{\textcolor{white}{$\blacksquare$}}
\colorbox{scorecolor!80}{\textcolor{white}{$\blacksquare$}}
\colorbox{scorecolor!80}{\textcolor{white}{$\blacksquare$}}
\colorbox{scorecolor!10}{\textcolor{white}{$\square$}}
\colorbox{scorecolor!10}{\textcolor{white}{$\square$}}
\colorbox{scorecolor!10}{\textcolor{white}{$\square$}}
\colorbox{scorecolor!10}{\textcolor{white}{$\square$}}
\colorbox{scorecolor!10}{\textcolor{white}{$\square$}}
\colorbox{scorecolor!10}{\textcolor{white}{$\square$}} \\

Pedagogical Value & 5/10 & \colorbox{scorecolor!80}{\textcolor{white}{$\blacksquare$}}
\colorbox{scorecolor!80}{\textcolor{white}{$\blacksquare$}}
\colorbox{scorecolor!80}{\textcolor{white}{$\blacksquare$}}
\colorbox{scorecolor!80}{\textcolor{white}{$\blacksquare$}}
\colorbox{scorecolor!80}{\textcolor{white}{$\blacksquare$}}
\colorbox{scorecolor!10}{\textcolor{white}{$\square$}}
\colorbox{scorecolor!10}{\textcolor{white}{$\square$}}
\colorbox{scorecolor!10}{\textcolor{white}{$\square$}}
\colorbox{scorecolor!10}{\textcolor{white}{$\square$}}
\colorbox{scorecolor!10}{\textcolor{white}{$\square$}} \\

Confidence Calibration & 3/10 & \colorbox{scorecolor!80}{\textcolor{white}{$\blacksquare$}}
\colorbox{scorecolor!80}{\textcolor{white}{$\blacksquare$}}
\colorbox{scorecolor!80}{\textcolor{white}{$\blacksquare$}}
\colorbox{scorecolor!10}{\textcolor{white}{$\square$}}
\colorbox{scorecolor!10}{\textcolor{white}{$\square$}}
\colorbox{scorecolor!10}{\textcolor{white}{$\square$}}
\colorbox{scorecolor!10}{\textcolor{white}{$\square$}}
\colorbox{scorecolor!10}{\textcolor{white}{$\square$}}
\colorbox{scorecolor!10}{\textcolor{white}{$\square$}}
\colorbox{scorecolor!10}{\textcolor{white}{$\square$}} \\
\hline
\textbf{Overall} & \textbf{5.0/10} & \colorbox{scorecolor!80}{\textcolor{white}{$\blacksquare$}}
\colorbox{scorecolor!80}{\textcolor{white}{$\blacksquare$}}
\colorbox{scorecolor!80}{\textcolor{white}{$\blacksquare$}}
\colorbox{scorecolor!80}{\textcolor{white}{$\blacksquare$}}
\colorbox{scorecolor!80}{\textcolor{white}{$\blacksquare$}}
\colorbox{scorecolor!10}{\textcolor{white}{$\square$}}
\colorbox{scorecolor!10}{\textcolor{white}{$\square$}}
\colorbox{scorecolor!10}{\textcolor{white}{$\square$}}
\colorbox{scorecolor!10}{\textcolor{white}{$\square$}}
\colorbox{scorecolor!10}{\textcolor{white}{$\square$}} \\
\end{tabular}
\end{tcolorbox}%
}

\renewcommand{\evaluationjustifications}{%
\begin{tcolorbox}[
    enhanced,
    colback=justificationcolor!5,
    colframe=justificationcolor,
    arc=2mm,
    boxrule=1pt,
    title={\textbf{Evaluation Justifications by GPT4o}},
    fonttitle=\bfseries\color{white},
    coltitle=justificationcolor!100
]%

\subsubsection*{Accuracy: 5/10}
The assistant has correctly calculated the area of the lawn (792 square feet) and the total coverage of four bags of grass seed (1000 square feet). However, the assistant prematurely states that the answer is "A" without fully completing the calculations or verifying the leftover area, which leads to an incorrect conclusion.

\subsubsection*{Reasoning Ability: 6/10}
The assistant demonstrates some logical reasoning by breaking the problem into steps (calculating the lawn area and total coverage). However, the reasoning is incomplete, as the assistant does not finish analyzing the leftover area or fully explain why "A" is chosen.

\subsubsection*{Comprehensiveness: 4/10}
The explanation is missing key steps, such as subtracting the lawn area from the total coverage to determine the leftover area (1000 - 792 = 208). Additionally, the assistant does not fully analyze all the options, leaving the explanation incomplete and confusing.

\subsubsection*{Pedagogical Value: 5/10}
While the assistant starts with a structured approach, the incomplete reasoning and incorrect conclusion could mislead a learner. The explanation lacks clarity and depth, which limits its educational value.

\subsubsection*{Confidence Calibration: 3/10}
The assistant confidently states that the answer is "A" without completing the necessary calculations or fully analyzing the problem. This overconfidence is unwarranted given the incomplete reasoning.

\end{tcolorbox}%
}

\renewcommand{\evaluationoverall}{%
\begin{tcolorbox}[
    enhanced,
    colback=evaluationcolor!10,
    colframe=evaluationcolor,
    arc=2mm,
    boxrule=1pt,
    title={\textbf{Overall Assessment by GPT4o}},
    fonttitle=\bfseries\color{white},
    coltitle=evaluationcolor!100
]%
\begin{center}
\textbf{\Large Final Score: 5.0/10}
\end{center}

\textbf{Summary:} The assistant shows some understanding of the problem and begins with a logical approach, but the incomplete reasoning, incorrect answer, and lack of thorough analysis significantly detract from the overall quality of the conversation.

\begin{tikzpicture}
\draw[thick] (0,0) -- (10,0); 
\foreach \x in {0,2,4,6,8,10} {
  \draw (\x,0.1) -- (\x,-0.1) node[below] {\x};
}
\draw (0,0.5) node[left] {Overall} -- (0,0.5);
\fill[scorecolor!70] (0,0.3) rectangle (5.0,0.7);
\node at (5.5,0.5) {5.0};
\end{tikzpicture}
\end{tcolorbox}%
}

\evaluationscores

\evaluationjustifications

\evaluationoverall

\section{Multi-Turn Clarifying Questions Prompts and Conversations}
\label{app:clarification details}

Our clarifying questions experiments are $3$-turn conversations between an assistant and teacher:
\begin{enumerate}
\item Teacher presents presents the question (hard-coded).
\item Assistant asks a clarifying question about one option.
\item Teacher responds to the question.
\item Assistant thinks step by step.
\item Teacher asks for the final answer with full justification (hard-coded).
\item Assistant replies with the final answer and its justification.
\end{enumerate}
Next we present detailed prompts for these conversations. We use the same GPT-4o judge as in \cref{app:reasoning details}.

\subsection{Prompts}

\begin{tcolorbox}[colback=blue!5,colframe=blue!75!black,title=System Prompt - Teacher]
\textit{You are an expert teacher helping a student learn through clarifying questions. Guide the student by presenting questions clearly, answering their clarifications helpfully, and encouraging deeper thinking. NEVER give away answers - let the student discover through questioning. Keep your responses concise and focused.}
\end{tcolorbox}

\begin{tcolorbox}[colback=green!5,colframe=green!75!black,title=System Prompt - Assistant]
\textit{You are a curious student learning through asking clarifying questions. Think carefully about what you need to understand, ask specific questions to clarify concepts, and use the teacher's guidance to arrive at well-reasoned answers. For thinking steps, use lowercase <thinking>your thoughts</thinking> tags exactly. For final answers, format as: Answer is X. <thinking>your reasoning</thinking> (lowercase thinking tags only).}
\end{tcolorbox}

\begin{tcolorbox}[colback=blue!5,colframe=blue!75!black,title=Turn 1 - Teacher Initial Question (Hard-Coded)]
\texttt{Question: [question text]}\\
\texttt{A) [choice A]}\\
\texttt{B) [choice B]}\\
\texttt{C) [choice C]}\\
\texttt{D) [choice D]}\\
\\
\texttt{Ask me ONE specific question about any ONE of the options (A, B, C, or D). What would you like to clarify?}
\end{tcolorbox}

\begin{tcolorbox}[colback=green!5,colframe=green!75!black,title=Turn 1 - Assistant Clarification Request]
\textit{You are a student learning about this question. Ask ONE specific question about any ONE of the options (A, B, C, or D). Be crisp and focused - ask only one clear question about a specific option. Do NOT give any answers or explanations. Just ask ONE simple question. Keep your response to 50-60 words maximum.}
\end{tcolorbox}

\begin{tcolorbox}[colback=blue!5,colframe=blue!75!black,title=Turn 2 - Teacher Clarification Response]
\textit{You are a helpful teacher. Answer the student's clarification question in 2-3 sentences. Do NOT give the final answer yet. Just address their question directly. Keep your response to 50-60 words maximum.}
\end{tcolorbox}

\begin{tcolorbox}[colback=green!5,colframe=green!75!black,title=Turn 2 - Assistant Reasoning]
\textit{You are a student. Based on the discussion so far, think step by step about all the options within lowercase <thinking></thinking> tags exactly. Write 2-3 sentences reasoning through each option systematically. Do NOT ask any questions. Keep your response to 50-60 words maximum.}
\end{tcolorbox}

\begin{tcolorbox}[colback=blue!5,colframe=blue!75!black,title=Turn 3 - Teacher Final Answer (Hard-Coded)]
\textit{Great. Based on the discussion so far think and reason more and give your final answer. You must give the answer first then explain your reasoning within thinking tags.}
\end{tcolorbox}

\begin{tcolorbox}[colback=green!5,colframe=green!75!black,title=Turn 3 - Assistant Final Response]
\textit{You are a student providing your final answer. Recall the question is: [question text] with options [A: choice A, B: choice B, C: choice C, D: choice D]. You MUST first state your final answer in the format: 'Answer is X' where X is A, B, C, or D. Then show your reasoning in lowercase <thinking></thinking> tags exactly in 2-3 sentences. Keep your total response to 50-60 words maximum.}
\end{tcolorbox}

\subsection{Comparison: Multi-Turn Reasoning Versus Clarifying Questions}

We summarize differences between our reasoning and clarifying questions settings in \Cref{tab:comparison}.

\begin{table}[t!]
\centering
\begin{tabular}{p{4cm}|p{5cm}|p{5cm}}
\textbf{Aspect} & \textbf{Multi-Turn Reasoning} & \textbf{Multi-Turn Clarification} \\
\hline
\textbf{Conversation Type} & User-Assistant dialogue & Teacher-Assistant dialogue \\
\hline
\textbf{Number of Turns} & 3+ (configurable) & Fixed 6 steps \\
\hline
\textbf{Interaction Style} & User prompts progressively deeper reasoning & Assistant asks questions, teacher guides \\
\hline
\textbf{Turn 1} & User presents question & Teacher presents question \\
\hline
\textbf{Turn 2} & User: ``Explain why options are incorrect'' & Assistant asks clarification question \\
\hline
\textbf{Turn 3} & User: ``What's your final answer?'' & Teacher answers clarification \\
\hline
\textbf{Turn 4+} & [Optional: more exploration] / Final answer & Assistant thinks step-by-step \\
\hline
\textbf{Initiative} & User drives (external prompting) & Assistant drives (internal questioning) \\
\hline
\textbf{Learning Style} & Guided reflection & Active inquiry \\
\hline
\textbf{Models Involved} & Single model (assistant) & Two roles (Assistant to be trained, base teacher) \\
\hline
\textbf{Training Goal} & Learn to reason deeply through iterative prompting & Learn to ask good clarifying questions \\
\hline
\textbf{Evaluation} & Same 6 metrics, same GPT-4o evaluator & Same 6 metrics, same GPT-4o evaluator \\
\hline
\textbf{Answer Format} & ``The answer is X'' + thinking & ``Answer is X'' + thinking \\
\hline
\textbf{Primary Use Case} & Direct reasoning improvement & Interactive learning through questioning \\
\end{tabular}
\caption{Comparison between Multi-Turn Reasoning and Clarification approaches}
\label{tab:comparison}
\end{table}

\subsection{MMLU Conversation with \base}

\setcounter{usermsgcount}{0}
\setcounter{asstmsgcount}{0}

We first show a $3$-turn conversation with \base on the MMLU dataset.

\begin{usermessage}
Question: The ``quantum jump'' mentioned in the passage most directly contributed to which of the following developments in the period 1450–1750 C.E.?
A) A breakdown in trade routes through the collapse of the established state structure
B) An increase in the population of the world through more plentiful supplies of food
C) The spread of Chinese and Indian belief systems across the world
D) An increase in social unrest
\end{usermessage}

\begin{assistantmessage}
I'd like to clarify: Does option A refer specifically to the collapse of trade routes due to European colonialism or does it imply a broader collapse of the system.
\end{assistantmessage}

\begin{usermessage}
The passage actually refers to a more general concept, implying a larger-scale disruption in global economic connections rather than just focusing on European colonial expansion. Would you like me to explain further?
\end{usermessage}

\begin{assistantmessage}
<thinking>option A seems plausible since it relates to disruptions caused by colonial expansion. However, the passage doesn't explicitly mention trade route collapse, so we should consider other options more closely. Option B appears unlikely given the context - population growth doesn't necessarily follow from exploitation and technological advancements alone.</thinking> <thinking>Option C seems unlikely as well because there's no indication that belief systems spread due
\end{assistantmessage}

\begin{usermessage}
Great. Based on the discussion so far think and reason more and give your final answer. You must give the answer first then explain your reasoning within thinking tags.
\end{usermessage}

\begin{assistantmessage}
Answer is A <thinking>thinking>i believe option A is correct because the passage suggests that the exploitation of resources led to disruptions in existing trade networks, which fits with the idea of a breakdown in trade routes. This makes sense given the context of European colonization and its impact on global economic connections.</thinking>
\end{assistantmessage}

\renewcommand{\evaluationscores}{%
\begin{tcolorbox}[
    enhanced,
    colback=evaluationcolor!5,
    colframe=evaluationcolor,
    arc=2mm,
    boxrule=1pt,
    title={\textbf{Evaluation Scores - \base}},
    fonttitle=\bfseries\color{white},
    coltitle=evaluationcolor!100
]%
\begin{tabular}{l|c|c}
\textbf{Criteria} & \textbf{Score} & \textbf{Rating} \\
\hline
Accuracy & 4/10 & \colorbox{scorecolor!80}{\textcolor{white}{$\blacksquare$}}
\colorbox{scorecolor!80}{\textcolor{white}{$\blacksquare$}}
\colorbox{scorecolor!80}{\textcolor{white}{$\blacksquare$}}
\colorbox{scorecolor!80}{\textcolor{white}{$\blacksquare$}}
\colorbox{scorecolor!10}{\textcolor{white}{$\square$}}
\colorbox{scorecolor!10}{\textcolor{white}{$\square$}}
\colorbox{scorecolor!10}{\textcolor{white}{$\square$}}
\colorbox{scorecolor!10}{\textcolor{white}{$\square$}}
\colorbox{scorecolor!10}{\textcolor{white}{$\square$}}
\colorbox{scorecolor!10}{\textcolor{white}{$\square$}} \\

Reasoning Ability & 5/10 & \colorbox{scorecolor!80}{\textcolor{white}{$\blacksquare$}}
\colorbox{scorecolor!80}{\textcolor{white}{$\blacksquare$}}
\colorbox{scorecolor!80}{\textcolor{white}{$\blacksquare$}}
\colorbox{scorecolor!80}{\textcolor{white}{$\blacksquare$}}
\colorbox{scorecolor!80}{\textcolor{white}{$\blacksquare$}}
\colorbox{scorecolor!10}{\textcolor{white}{$\square$}}
\colorbox{scorecolor!10}{\textcolor{white}{$\square$}}
\colorbox{scorecolor!10}{\textcolor{white}{$\square$}}
\colorbox{scorecolor!10}{\textcolor{white}{$\square$}}
\colorbox{scorecolor!10}{\textcolor{white}{$\square$}} \\

Comprehensiveness & 4/10 & \colorbox{scorecolor!80}{\textcolor{white}{$\blacksquare$}}
\colorbox{scorecolor!80}{\textcolor{white}{$\blacksquare$}}
\colorbox{scorecolor!80}{\textcolor{white}{$\blacksquare$}}
\colorbox{scorecolor!80}{\textcolor{white}{$\blacksquare$}}
\colorbox{scorecolor!10}{\textcolor{white}{$\square$}}
\colorbox{scorecolor!10}{\textcolor{white}{$\square$}}
\colorbox{scorecolor!10}{\textcolor{white}{$\square$}}
\colorbox{scorecolor!10}{\textcolor{white}{$\square$}}
\colorbox{scorecolor!10}{\textcolor{white}{$\square$}}
\colorbox{scorecolor!10}{\textcolor{white}{$\square$}} \\

Pedagogical Value & 5/10 & \colorbox{scorecolor!80}{\textcolor{white}{$\blacksquare$}}
\colorbox{scorecolor!80}{\textcolor{white}{$\blacksquare$}}
\colorbox{scorecolor!80}{\textcolor{white}{$\blacksquare$}}
\colorbox{scorecolor!80}{\textcolor{white}{$\blacksquare$}}
\colorbox{scorecolor!80}{\textcolor{white}{$\blacksquare$}}
\colorbox{scorecolor!10}{\textcolor{white}{$\square$}}
\colorbox{scorecolor!10}{\textcolor{white}{$\square$}}
\colorbox{scorecolor!10}{\textcolor{white}{$\square$}}
\colorbox{scorecolor!10}{\textcolor{white}{$\square$}}
\colorbox{scorecolor!10}{\textcolor{white}{$\square$}} \\

Confidence Calibration & 3/10 & \colorbox{scorecolor!80}{\textcolor{white}{$\blacksquare$}}
\colorbox{scorecolor!80}{\textcolor{white}{$\blacksquare$}}
\colorbox{scorecolor!80}{\textcolor{white}{$\blacksquare$}}
\colorbox{scorecolor!10}{\textcolor{white}{$\square$}}
\colorbox{scorecolor!10}{\textcolor{white}{$\square$}}
\colorbox{scorecolor!10}{\textcolor{white}{$\square$}}
\colorbox{scorecolor!10}{\textcolor{white}{$\square$}}
\colorbox{scorecolor!10}{\textcolor{white}{$\square$}}
\colorbox{scorecolor!10}{\textcolor{white}{$\square$}}
\colorbox{scorecolor!10}{\textcolor{white}{$\square$}} \\
\hline
\textbf{Overall} & \textbf{4.2/10} & \colorbox{scorecolor!80}{\textcolor{white}{$\blacksquare$}}
\colorbox{scorecolor!80}{\textcolor{white}{$\blacksquare$}}
\colorbox{scorecolor!80}{\textcolor{white}{$\blacksquare$}}
\colorbox{scorecolor!80}{\textcolor{white}{$\blacksquare$}}
\colorbox{scorecolor!10}{\textcolor{white}{$\square$}}
\colorbox{scorecolor!10}{\textcolor{white}{$\square$}}
\colorbox{scorecolor!10}{\textcolor{white}{$\square$}}
\colorbox{scorecolor!10}{\textcolor{white}{$\square$}}
\colorbox{scorecolor!10}{\textcolor{white}{$\square$}}
\colorbox{scorecolor!10}{\textcolor{white}{$\square$}} \\
\end{tabular}
\end{tcolorbox}%
}

\renewcommand{\evaluationjustifications}{%
\begin{tcolorbox}[
    enhanced,
    colback=justificationcolor!5,
    colframe=justificationcolor,
    arc=2mm,
    boxrule=1pt,
    title={\textbf{Evaluation Justifications - \base}},
    fonttitle=\bfseries\color{white},
    coltitle=justificationcolor!100
]%

\subsubsection*{Accuracy: 4/10}
Wrong answer (A). Student misinterpreted the passage's focus. The passage discusses exploitation leading to INCREASED productivity and resources, not breakdown of trade. Student incorrectly focused on ``disruptions'' rather than positive outcomes.

\subsubsection*{Reasoning Ability: 5/10}
Flawed reasoning. Student acknowledged ``passage doesn't explicitly mention trade route collapse'' but still chose option A. Incorrectly dismissed option B (``population growth doesn't necessarily follow'') despite strong textual support.

\subsubsection*{Comprehensiveness: 4/10}
Failed to connect key passage elements: exploitation of ecosystems → agricultural productivity → food supplies → population. Focused too narrowly on ``disruptions'' without considering constructive outcomes.

\subsubsection*{Pedagogical Value: 5/10}
Limited educational value. Demonstrated how to consider options but arrived at wrong conclusion by misinterpreting passage's emphasis. Could mislead learners about historical causation.

\subsubsection*{Confidence Calibration: 3/10}
Poor confidence calibration. Expressed high certainty (``I believe option A is correct'') despite acknowledging lack of textual support. Should have been more uncertain given weak evidence.

\end{tcolorbox}%
}

\evaluationscores

\evaluationjustifications

\renewcommand{\evaluationoverall}{%
\begin{tcolorbox}[
    enhanced,
    colback=evaluationcolor!10,
    colframe=evaluationcolor,
    arc=2mm,
    boxrule=1pt,
    title={\textbf{Overall Assessment by GPT4o}},
    fonttitle=\bfseries\color{white},
    coltitle=evaluationcolor!100
]%
\begin{center}
\textbf{\Large Final Score: 4.2/10}
\end{center}

\textbf{Summary:} The assistant misinterpreted the passage's focus on productivity increase, incorrectly selecting trade route collapse instead of population growth. Poor confidence calibration was evident with high certainty expressed despite acknowledging lack of textual support, and the failure to connect exploitation to agricultural improvements significantly undermined the response quality.

\begin{tikzpicture}
\draw[thick] (0,0) -- (10,0); 
\foreach \x in {0,2,4,6,8,10} {
  \draw (\x,0.1) -- (\x,-0.1) node[below] {\x};
}
\draw (0,0.5) node[left] {Overall} -- (0,0.5);
\fill[scorecolor!70] (0,0.3) rectangle (4.2,0.7);
\node at (4.7,0.5) {4.2};
\end{tikzpicture}
\end{tcolorbox}%
}

\evaluationoverall

\subsection{MMLU Conversation with \refit}

\setcounter{usermsgcount}{0}
\setcounter{asstmsgcount}{0}

Next we show a $3$-turn conversation with \refit on the MMLU dataset.

\begin{usermessage}
Question: The ``quantum jump'' mentioned in the passage most directly contributed to which of the following developments in the period 1450–1750 C.E.?
A) A breakdown in trade routes through the collapse of the established state structure
B) An increase in the population of the world through more plentiful supplies of food
C) The spread of Chinese and Indian belief systems across the world
D) An increase in social unrest
\end{usermessage}

\begin{assistantmessage}
What does ``quantum leap'' mean in relation to historical events, particularly when referring to the development of new technologies?
\end{assistantmessage}

\begin{usermessage}
[ INST ] You asked a great clarifying question! You like to know more about what I meant by "a quantum leap"? \#\# Step 1: Understand the context The term ``quantum leap'' is used here which means there has been significant progress in this area since its inception.
\end{usermessage}

\begin{assistantmessage}
<Thinking>The first thing to consider when evaluating A is whether the collapse of established state structures could be caused by an increase in productivity due to exploiting natural resources and human labor outside of Europe.</Thinking> <Thinking>The idea that increased productivity leads to population growth seems plausible because with higher yields and better technology, people can produce more food, leading to an uptick in population.</Thinking> <Thinking>This
\end{assistantmessage}

\begin{usermessage}
Great. Based on the discussion so far think and reason more and give your final answer. You must give the answer first then explain your reasoning within thinking tags.
\end{usermessage}

\begin{assistantmessage}
Answer is B. <thinking>Thinking this makes sense because the text mentions ``exploiting the ecosystems'' which suggests using natural resources to improve agricultural output; thus leading to better food production and increased population growth. The other options don't fit as strongly based on our previous discussion.</thinking>
\end{assistantmessage}

\renewcommand{\evaluationscores}{%
\begin{tcolorbox}[
    enhanced,
    colback=evaluationcolor!5,
    colframe=evaluationcolor,
    arc=2mm,
    boxrule=1pt,
    title={\textbf{Evaluation Scores - \refit}},
    fonttitle=\bfseries\color{white},
    coltitle=evaluationcolor!100
]%
\begin{tabular}{l|c|c}
\textbf{Criteria} & \textbf{Score} & \textbf{Rating} \\
\hline
Accuracy & 8/10 & \colorbox{scorecolor!80}{\textcolor{white}{$\blacksquare$}}
\colorbox{scorecolor!80}{\textcolor{white}{$\blacksquare$}}
\colorbox{scorecolor!80}{\textcolor{white}{$\blacksquare$}}
\colorbox{scorecolor!80}{\textcolor{white}{$\blacksquare$}}
\colorbox{scorecolor!60}{\textcolor{white}{$\blacksquare$}}
\colorbox{scorecolor!60}{\textcolor{white}{$\blacksquare$}}
\colorbox{scorecolor!60}{\textcolor{white}{$\blacksquare$}}
\colorbox{scorecolor!60}{\textcolor{white}{$\blacksquare$}}
\colorbox{scorecolor!10}{\textcolor{white}{$\square$}}
\colorbox{scorecolor!10}{\textcolor{white}{$\square$}} \\

Reasoning Ability & 7/10 & \colorbox{scorecolor!80}{\textcolor{white}{$\blacksquare$}}
\colorbox{scorecolor!80}{\textcolor{white}{$\blacksquare$}}
\colorbox{scorecolor!80}{\textcolor{white}{$\blacksquare$}}
\colorbox{scorecolor!80}{\textcolor{white}{$\blacksquare$}}
\colorbox{scorecolor!60}{\textcolor{white}{$\blacksquare$}}
\colorbox{scorecolor!60}{\textcolor{white}{$\blacksquare$}}
\colorbox{scorecolor!60}{\textcolor{white}{$\blacksquare$}}
\colorbox{scorecolor!10}{\textcolor{white}{$\square$}}
\colorbox{scorecolor!10}{\textcolor{white}{$\square$}}
\colorbox{scorecolor!10}{\textcolor{white}{$\square$}} \\

Comprehensiveness & 6/10 & \colorbox{scorecolor!80}{\textcolor{white}{$\blacksquare$}}
\colorbox{scorecolor!80}{\textcolor{white}{$\blacksquare$}}
\colorbox{scorecolor!80}{\textcolor{white}{$\blacksquare$}}
\colorbox{scorecolor!60}{\textcolor{white}{$\blacksquare$}}
\colorbox{scorecolor!60}{\textcolor{white}{$\blacksquare$}}
\colorbox{scorecolor!60}{\textcolor{white}{$\blacksquare$}}
\colorbox{scorecolor!10}{\textcolor{white}{$\square$}}
\colorbox{scorecolor!10}{\textcolor{white}{$\square$}}
\colorbox{scorecolor!10}{\textcolor{white}{$\square$}}
\colorbox{scorecolor!10}{\textcolor{white}{$\square$}} \\

Pedagogical Value & 7/10 & \colorbox{scorecolor!80}{\textcolor{white}{$\blacksquare$}}
\colorbox{scorecolor!80}{\textcolor{white}{$\blacksquare$}}
\colorbox{scorecolor!80}{\textcolor{white}{$\blacksquare$}}
\colorbox{scorecolor!80}{\textcolor{white}{$\blacksquare$}}
\colorbox{scorecolor!60}{\textcolor{white}{$\blacksquare$}}
\colorbox{scorecolor!60}{\textcolor{white}{$\blacksquare$}}
\colorbox{scorecolor!60}{\textcolor{white}{$\blacksquare$}}
\colorbox{scorecolor!10}{\textcolor{white}{$\square$}}
\colorbox{scorecolor!10}{\textcolor{white}{$\square$}}
\colorbox{scorecolor!10}{\textcolor{white}{$\square$}} \\

Confidence Calibration & 6/10 & \colorbox{scorecolor!80}{\textcolor{white}{$\blacksquare$}}
\colorbox{scorecolor!80}{\textcolor{white}{$\blacksquare$}}
\colorbox{scorecolor!80}{\textcolor{white}{$\blacksquare$}}
\colorbox{scorecolor!60}{\textcolor{white}{$\blacksquare$}}
\colorbox{scorecolor!60}{\textcolor{white}{$\blacksquare$}}
\colorbox{scorecolor!60}{\textcolor{white}{$\blacksquare$}}
\colorbox{scorecolor!10}{\textcolor{white}{$\square$}}
\colorbox{scorecolor!10}{\textcolor{white}{$\square$}}
\colorbox{scorecolor!10}{\textcolor{white}{$\square$}}
\colorbox{scorecolor!10}{\textcolor{white}{$\square$}} \\
\hline
\textbf{Overall} & \textbf{6.8/10} & \colorbox{scorecolor!80}{\textcolor{white}{$\blacksquare$}}
\colorbox{scorecolor!80}{\textcolor{white}{$\blacksquare$}}
\colorbox{scorecolor!80}{\textcolor{white}{$\blacksquare$}}
\colorbox{scorecolor!80}{\textcolor{white}{$\blacksquare$}}
\colorbox{scorecolor!60}{\textcolor{white}{$\blacksquare$}}
\colorbox{scorecolor!60}{\textcolor{white}{$\blacksquare$}}
\colorbox{scorecolor!60}{\textcolor{white}{$\blacksquare$}}
\colorbox{scorecolor!10}{\textcolor{white}{$\square$}}
\colorbox{scorecolor!10}{\textcolor{white}{$\square$}}
\colorbox{scorecolor!10}{\textcolor{white}{$\square$}} \\
\end{tabular}
\end{tcolorbox}%
}

\renewcommand{\evaluationjustifications}{%
\begin{tcolorbox}[
    enhanced,
    colback=justificationcolor!5,
    colframe=justificationcolor,
    arc=2mm,
    boxrule=1pt,
    title={\textbf{Evaluation Justifications - \refit}},
    fonttitle=\bfseries\color{white},
    coltitle=justificationcolor!100
]%

\subsubsection*{Accuracy: 8/10}
Correctly identified answer B. Student made the crucial connection between ``exploiting ecosystems'' and improved agricultural output leading to population growth through increased food supplies.

\subsubsection*{Reasoning Ability: 7/10}
Strong logical reasoning. Student systematically considered multiple options, correctly eliminated option A, and identified the causal chain: exploitation → better agriculture → more food → population growth.

\subsubsection*{Comprehensiveness: 6/10}
Covered key concepts connecting productivity increase, agricultural exploitation, and demographic changes. Could have been more explicit about eliminating other options.

\subsubsection*{Pedagogical Value: 7/10}
Good educational value. The reasoning process demonstrated how to connect historical evidence (passage) with potential outcomes (answer choices) through logical analysis.

\subsubsection*{Confidence Calibration: 6/10}
Appropriate confidence. Statement ``makes sense'' shows reasonable certainty backed by textual evidence, while acknowledging consideration of other options.

\end{tcolorbox}%
}

\evaluationscores

\evaluationjustifications

\renewcommand{\evaluationoverall}{%
\begin{tcolorbox}[
    enhanced,
    colback=evaluationcolor!10,
    colframe=evaluationcolor,
    arc=2mm,
    boxrule=1pt,
    title={\textbf{Overall Assessment by GPT4o}},
    fonttitle=\bfseries\color{white},
    coltitle=evaluationcolor!100
]%
\begin{center}
\textbf{\Large Final Score: 6.8/10}
\end{center}

\textbf{Summary:} The assistant showed strong logical reasoning by correctly identifying the causal chain from exploitation to agricultural output to population growth. The systematic consideration of multiple options and connection between textual evidence and answer choice demonstrated solid analytical skills, resulting in the correct answer.

\begin{tikzpicture}
\draw[thick] (0,0) -- (10,0); 
\foreach \x in {0,2,4,6,8,10} {
  \draw (\x,0.1) -- (\x,-0.1) node[below] {\x};
}
\draw (0,0.5) node[left] {Overall} -- (0,0.5);
\fill[scorecolor!70] (0,0.3) rectangle (6.8,0.7);
\node at (7.3,0.5) {6.8};
\end{tikzpicture}
\end{tcolorbox}%
}

\evaluationoverall

\section{Model and Training Parameters}
\label{app:model-training-param}

In this section, we present the model configuration and training parameters for our framework in \Cref{tab:llama-config,tab:accelerate-config,tab:kd-accelerate-config,tab:trl-sft-config,tab:kdt-config}.

\begin{table}[!th]
\centering
\begin{tabular}{p{8cm}p{5cm}}
\hline
\textbf{Parameter} & \textbf{Value} \\
\hline
vocab\_size & 128256 \\
max\_position\_embeddings & 131072 \\
hidden\_size & 4096 \\
intermediate\_size & 14336 \\
num\_hidden\_layers & 32 \\
num\_attention\_heads & 32 \\
num\_key\_value\_heads & 8 \\
hidden\_act & silu \\
initializer\_range & 0.02 \\
rms\_norm\_eps & 1e-05 \\
pretraining\_tp & 1 \\
use\_cache & true \\
rope\_theta & 500000.0 \\
rope\_scaling.factor & 8.0 \\
rope\_scaling.low\_freq\_factor & 1.0 \\
rope\_scaling.high\_freq\_factor & 4.0 \\
rope\_scaling.original\_max\_position\_embeddings & 8192 \\
rope\_scaling.rope\_type & llama3 \\
head\_dim & 128 \\
torch\_dtype & bfloat16 \\
bos\_token\_id & 128000 \\
eos\_token\_id & [128001, 128008, 128009] \\
model\_type & llama \\
architectures & LlamaForCausalLM \\
\hline
\end{tabular}
\caption{Llama 3.1 8B Instruct Configuration}
\label{tab:llama-config}
\end{table}

\begin{table}[!th]
\centering
\begin{tabular}{p{6cm}p{5cm}}
\hline
\textbf{Parameter} & \textbf{Value} \\
\hline
compute\_environment & LOCAL\_MACHINE \\
debug & false \\
distributed\_type & DEEPSPEED \\
downcast\_bf16 & no \\
enable\_cpu\_affinity & false \\
machine\_rank & 0 \\
main\_training\_function & main \\
mixed\_precision & bf16 \\
num\_machines & 1 \\
num\_processes & 2 \\
rdzv\_backend & static \\
same\_network & true \\
tpu\_use\_cluster & false \\
tpu\_use\_sudo & false \\
use\_cpu & false \\
\hline
\textbf{deepspeed\_config} & \\
gradient\_accumulation\_steps & 4 \\
gradient\_clipping & 1.0 \\
offload\_optimizer\_device & cpu \\
offload\_param\_device & cpu \\
zero3\_init\_flag & false \\
zero3\_save\_16bit\_model & true \\
zero\_stage & 2 \\
\hline
\end{tabular}
\caption{Accelerate DeepSpeed Configuration}
\label{tab:accelerate-config}
\end{table}

\begin{table}[!th]
\centering
\begin{tabular}{p{6cm}p{5cm}}
\hline
\textbf{Parameter} & \textbf{Value} \\
\hline
compute\_environment & LOCAL\_MACHINE \\
debug & false \\
distributed\_type & DEEPSPEED \\
downcast\_bf16 & no \\
machine\_rank & 0 \\
mixed\_precision & bf16 \\
num\_machines & 1 \\
num\_processes & 2 \\
use\_cpu & false \\
\hline
\textbf{deepspeed\_config} & \\
gradient\_accumulation\_steps & 4 \\
gradient\_clipping & 1.0 \\
offload\_optimizer\_device & none \\
offload\_param\_device & none \\
zero3\_init\_flag & false \\
zero3\_save\_16bit\_model & true \\
zero\_stage & 0 \\
\hline
\end{tabular}
\caption{Accelerate DeepSpeed Configuration for Knowledge Distillation}
\label{tab:kd-accelerate-config}
\end{table}

\begin{table}
\centering
\begin{tabular}{p{6cm}p{5cm}}
\hline
\textbf{Parameter} & \textbf{Value} \\
\hline
\multicolumn{2}{l}{\textbf{Model Configuration}} \\
model\_name & Llama-3.1-8B-Instruct \\
Comments & Customized to do RL Reweighting for \refit and \swift \\
\hline
\multicolumn{2}{l}{\textbf{Training Parameters}} \\
learning\_rate & 3e-5 \\
num\_train\_epochs & 4 \\
per\_device\_train\_batch\_size & 8 \\
gradient\_accumulation\_steps & 4 \\
gradient\_checkpointing & True \\
mixed\_precision & bf16 \\
do\_train & True \\
do\_eval & False \\
logging\_steps & 5 \\
logging\_first\_step & True \\
save\_strategy & epoch \\
save\_total\_limit & 4 \\
\hline
\multicolumn{2}{l}{\textbf{RL Configuration}} \\
dataset & From the listed datasets in this paper.json \\
rl\_reweight & std \\
rl\_reward\_name & reward \\
use\_custom\_trainer & True \\
\hline
\multicolumn{2}{l}{\textbf{Hardware Configuration}} \\
num\_processes & 2 \\
num\_machines & 1 \\
\hline
\end{tabular}
\caption{TRL Supervised Fine-Tuning Configuration with Customized model RL Reweighting for \refit and \swift}
\label{tab:trl-sft-config}
\end{table}

\begin{table}
\centering
\begin{tabular}{p{6cm}p{5cm}}
\hline
\textbf{Parameter} & \textbf{Value} \\
\hline
\multicolumn{2}{l}{\textbf{Model Configuration}} \\
teacher\_model\_path & \stargate\_last-checkpoint \\
student\_model\_name & meta-llama/Llama-3.1-8B-Instruct \\
student\_layers & 8 \\
apply\_lora\_to\_teacher & True \\
\hline
\multicolumn{2}{l}{\textbf{LoRA Configuration}} \\
r & 8 \\
alpha & 16 \\
dropout & 0.05 \\
target\_modules & q\_proj, v\_proj, k\_proj, o\_proj, gate\_proj, up\_proj, down\_proj \\
\hline
\multicolumn{2}{l}{\textbf{Distillation Parameters}} \\
distillation\_alpha & 0.5 \\
distillation\_temperature & 2.0 \\
\hline
\multicolumn{2}{l}{\textbf{Training Parameters}} \\
learning\_rate & 3e-6 \\
num\_train\_epochs & 2 \\
per\_device\_train\_batch\_size & 4 \\
gradient\_accumulation\_steps & 4 \\
gradient\_checkpointing & True \\
mixed\_precision & bf16 \\
do\_train & True \\
do\_eval & False \\
logging\_steps & 5 \\
logging\_first\_step & True \\
save\_strategy & epoch \\
save\_total\_limit & 4 \\
\hline
\multicolumn{2}{l}{\textbf{Dataset Configuration}} \\
dataset & From the listed datasets in this paper \\
rl\_reweight & SFT \\
use\_custom\_trainer & False \\
\hline
\multicolumn{2}{l}{\textbf{Hardware Configuration}} \\
num\_processes & 2 \\
num\_machines & 1 \\
\hline
\end{tabular}
\caption{Knowledge Distillation Configuration with LoRA}
\label{tab:kdt-config}
\end{table}

\end{document}